\newcommand{\rev}[1]{{#1}}
\newcommand{\edit}[1]{}
\newtheorem{definition}{Definition}[section]
\newtheorem{proposition}{Proposition}[section]
\newtheorem{theorem}{Theorem}[section]
\newcommand{\felix}{\textsc{Felix}\xspace}
\newcommand{\cimple}{\textsc{Cimple}\xspace}
\newcommand{\syst}{\textsc{SystemT}\xspace}
\newcommand{\tuffy}{\textsc{Tuffy}\xspace}
\newcommand{\mln}{\textsc{MLN}\xspace}
\newcommand{\mlns}{\textsc{MLN}s\xspace}
\newcommand{\alc}{\textsc{Alchemy}\xspace}
\newcommand{\eat}[1]{}
\newcommand{\set}[1]{\ensuremath \left\{#1\right\}}
\newcommand{\rel}[1]{\mathtt{#1}}
\newcommand{\con}[1]{`#1'}
\newcommand{\imp}{=>}
\newcommand{\la}{\leftarrow}
\newcommand{\cost}{\mathrm{cost}_\mathrm{mln}}
\newcommand{\costclass}{\mathrm{cost}_\mathrm{class}}
\newcommand{\costcoref}{\mathrm{cost}_\mathrm{coref}}
\newcommand{\R}{\mathds{R}}
\def\compactifytwo{\itemsep=-2pt \topsep=-1pt \partopsep=-2pt \parsep=-2pt}
\let\latexusecounter=\usecounter
\let\latexusecounter=\usecounter
\newcounter{examplecounter}[section]
\newenvironment{example}                        
  {\refstepcounter{examplecounter}\trivlist\item
    [\hskip\labelsep{\bf Example \theexamplecounter}]}
  {\endtrivlist}                                
\let\oldmarginpar\marginpar
\renewcommand\marginpar[1]{\-\oldmarginpar[\raggedleft\footnotesize #1]%
{\raggedright\footnotesize #1}}
\newcites{app}{References}
\begin{document}
\normalem
\title{Scaling Inference for Markov Logic with a Task-Decomposition Approach}
\author{
Feng Niu
\and
Ce Zhang
\and
Christopher R\'{e}
\and
Jude Shavlik
\and
\begin{minipage}{\textwidth}
  \centering
University of Wisconsin-Madison\\
\{leonn, czhang, chrisre, shavlik\}@cs.wisc.edu
\end{minipage}
}

\maketitle
\begin{abstract}
Motivated by applications in large-scale knowledge base construction,
we study the problem of scaling up a sophisticated statistical
inference framework called Markov Logic Networks (MLNs). Our approach,
Felix, uses the idea of Lagrangian relaxation from mathematical
programming to decompose a program into smaller tasks while
preserving the joint-inference property of the original MLN. The
advantage is that we can use highly scalable specialized algorithms
for common tasks such as classification and coreference. We propose an
architecture to support Lagrangian relaxation in an RDBMS which we
show enables scalable joint inference for MLNs. We
empirically validate that Felix is significantly more scalable and
efficient than prior approaches to MLN inference by constructing a
knowledge base from 1.8M documents as part of the TAC challenge. We
show that Felix scales and achieves state-of-the-art quality numbers.
In contrast, prior approaches do not scale even to a subset of the
corpus that is three orders of magnitude smaller.
\end{abstract}

\section{Introduction}
\label{sec:intro}

Building large-scale knowledge bases from text has recently received
tremendous interest from academia~\cite{weikum2010information}, e.g.,
CMU's NELL~\cite{carlson2010toward}, MPI's
YAGO~\cite{kasneci2008yago,nakashole2011scalable}, and from industry,
e.g., Microsoft's EntityCube~\cite{zhu2009statsnowball}, and IBM's
Watson~\cite{ferrucci2010building}. In their quest to extract
knowledge from free-form text, a major problem that all these systems
face is coping with inconsistency due to both conflicting information
in the underlying sources and the difficulty for machines to
understand natural language text. To cope with this challenge, each of
the above systems uses statistical inference to resolve these
ambiguities in a principled way.  To support this, the research
community has developed sophisticated statistical inference
frameworks, e.g., PRMs~\cite{friedman1999learning},
BLOG~\cite{milch2005blog}, MLNs~\cite{DBLP:journals/ml/RichardsonD06},
SOFIE~\cite{suchanek2009sofie}, Factorie~\cite{mccallum2009factorie},
and LBJ~\cite{rizzolo2010learning}. The key challenge with these
systems is efficiency and scalability, and to develop the next
generation of sophisticated text applications, we argue that a
promising approach is to improve the efficiency and scalability of the
above frameworks.

To understand the challenges of scaling such frameworks, we focus on
one popular such framework, called {\it Markov Logic Networks}
(\mlns), that has been successfully applied to many challenging text
applications~\cite{poon2007joint-ie,suchanek2009sofie,zhu2009statsnowball,andrzejewski2011framework}. In
Markov Logic one can write \rev{first-order logic rules with weights
(that intuitively model our confidence in a rule)}
\edit{was ``rules (syntactically similar to datalog or
SQL queries) with weights''; want to avoid misunderstanding about ``mixing
basic concepts of MLNs and datalog''}; this allows a developer to capture rules
that are likely, but not certain, to be correct. A key technical
challenge has been the scalability of \mln inference. Not
surprisingly, there has been intense research interest in techniques
to improve the scalability and performance of \mlns, such as improving
memory efficiency~\cite{singla2008lifted}, leveraging database
technologies~\cite{tuffy-vldb11}, and designing algorithms for
special-purpose
programs~\cite{andrzejewski2011framework,suchanek2009sofie}. Our work
here continues this line of work.

\begin{figure}[t]
\centering \includegraphics[width=0.45\textwidth]{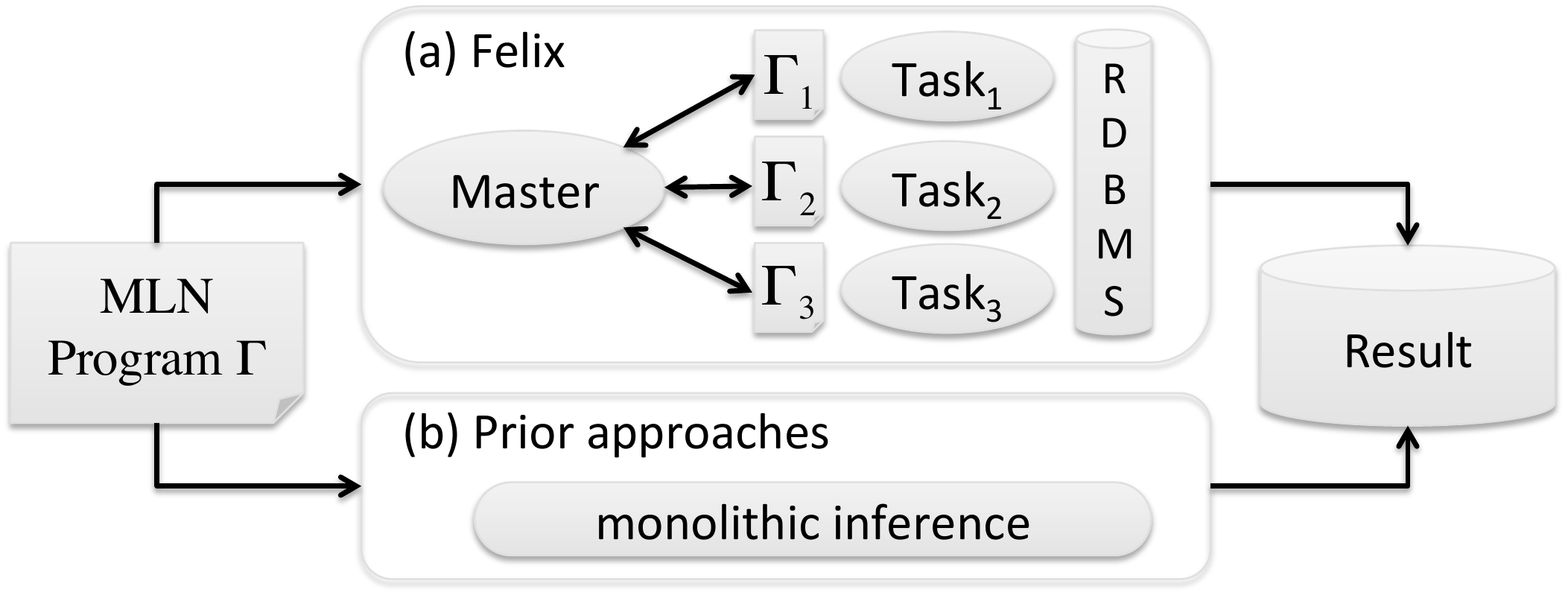}
\caption{\felix breaks an input
  program, $\Gamma$, into several, smaller tasks (shown in Panel
  a), while prior approaches are monolithic (shown in Panel b).}
  \label{fig:framework}
\end{figure}

Our goal is to use Markov Logic to construct a structured database of
facts and then answer questions like {\it ``which Bulgarian leaders
attended Sofia University and when?''}  with provenance from text.
(Our system, \felix, answers {\it Georgi Parvanov} and points to a handful of
sentences in a corpus to demonstrate its answer.)  During the
iterative process of constructing such a knowledge base from text and
then using that knowledge base to answer sophisticated questions, we have
found that it is critical to efficiently process structured queries
over large volumes of structured data. And so, we have built Felix on
top of an RDBMS. However, as we verify experimentally later in this
paper, the scalability of previous RDBMS-based solutions to \mln
inference~\cite{tuffy-vldb11} is still limited. Our key observation is
that in many text processing applications, one must solve a handful of
common subproblems, e.g., coreference resolution or
classification. Some of these have been studied for decades, and so
have specialized algorithms \rev{with} \edit{was 
``may be able to achieve'' which sounds somewhat uncertain} higher
scalability on these subproblems than the monolithic inference used by
typical Markov Logic systems. Thus, our goal is to leverage the
specialized algorithms for these subproblems to provide more scalable
inference for general Markov Logic programs in an
RDBMS. Figure~\ref{fig:framework} illustrates the difference at a high
level between \felix and prior
approaches: prior approaches, such as
\alc~\cite{DBLP:journals/ml/RichardsonD06} or
\tuffy~\cite{tuffy-vldb11}, are monolithic in that they 
attack the entire MLN inference problem with one algorithm; in
constrast, \felix \rev{decomposes} \edit{was ``will decompose''} the problem into several
small tasks.

To achieve this goal, we observe that the problem of inference in
an \mln -- and essentially any kind of statistical inference -- can be
cast as a mathematical optimization problem. Thus, we adapt techniques
from the mathematical programming literature to \mln inference. In
particular, we consider the idea of \emph{Lagrangian
relaxation}~\cite[p.~244]{Bertsekas:book} that allows one to decompose
a complex optimization problem into multiple pieces that are hopefully
easier to
solve~\cite{wolsey1998integer,DBLP:conf/emnlp/RushSCJ10}. Lagrangian
relaxation is a widely deployed technique to cope with many difficult
mathematical programming problems, and it is the theoretical
underpinning of many state-of-the-art inference algorithms for
graphical models, e.g., \edit{removed ``Viterbi,''} Belief
Propagation~\cite{Wainwright:2008:GME:1523420}. In many -- but not all
-- cases, a Lagrangian relaxation has the same optimal solution as the
underlying original
problem~\cite{Bertsekas:book,boyd:cvx,wolsey1998integer}. At a high
level, Lagrangian relaxation gives us a message-passing protocol that
resolves inconsistencies among conflicting predictions to
accomplish \emph{joint-inference}. Our system, \felix, does not
actually construct the mathematical program, but uses Lagrangian
relaxation as a formal guide to decompose an \mln program into
multiple tasks and construct an appropriate message-passing scheme.

Our first technical contribution is an architecture to scalably
perform \mln inference in an RDBMS using Lagrangian relaxation. Our
architecture models each subproblem as a \emph{task} that takes as
input a set of relations, and outputs another set of relations.  For
example, our prototype of \felix implements specialized algorithms for
classification and coreference resolution (coref); these tasks
frequently occur in text-processing applications. By modeling tasks in
this way, we are able to use SQL queries for all {\em data movement}
in the system: both transforming the input data into an appropriate
form for each task and encoding the message passing of Lagrangian
relaxation between tasks. In turn, this allows \felix to leverage the
mature, set-at-a-time processing power of an RBDMS to achieve
scalability and efficiency. On all programs and datasets that we
experimented with, our approach converges rapidly to the optimal
solution of the Lagrangian relaxation.  Our ultimate goal is to build
high-quality applications, and we validate on several knowledge-base
construction tasks that \felix achieves higher scalability and
essentially identical result quality compared to prior \mln
systems. More precisely, when prior \mln systems are able to
scale, \felix converges to the same quality (and sometimes more
efficiently). When prior \mln systems fail to scale, \felix can still
produce high-quality results. We take
this as evidence that \felix's approach is a promising direction to
scale up large-scale statistical inference.  Furthermore, we validate
that being able to integrate specialized algorithms is crucial
for \felix's scalability: after disabling specialized algorithms,
\felix no longer scales to the same datasets.

Although the RDBMS provides some level of scalability for data movement
inside \felix, the scale of data passed between tasks (via SQL
queries) may be staggering. The reason is that statistical algorithms
may produce huge numbers of combinations (say all pairs of potentially
matching person mentions).
The sheer sizes of intermediate results are often
killers for scalability, e.g., the complete input to coreference
resolution on an Enron dataset\eat{\footnote{\scriptsize{\url{http://bailando.sims.berkeley.edu/enron_email.html}}}} has $1.2\times10^{11}$
tuples.
The saving grace is that a task may access the intermediate data in an
on-demand manner.  For example, a popular coref algorithm repeatedly
asks {\it ``given a fixed word $x$, tell me all words that are likely
to be coreferent with
$x$.''}~\cite{ailon2008aggregating,arasu2009large}.  Moreover, the
algorithm only asks for a small fraction of such $x$.  Thus, it would
be wasteful to produce all possible matching pairs. Instead we can
produce only those words that are needed on-demand (i.e., materialize
them lazily).
\felix considers a richer space of possible materialization strategies
than simply eager or lazy: it can choose to eagerly materialize one or
more subqueries responsible for data movement between
tasks~\cite{ramakrishnan1995survey}.  To make such decisions,
\felix's second contribution is a novel cost model that leverages 
the cost-estimation facility in the RDBMS coupled with the data-access
patterns of the tasks. On the Enron dataset, our cost-based approach
finds execution plans that achieve two orders of magnitude speedup
over eager materialization and 2-3X speedup compared to lazy materialization.

\rev{Although \felix allows a user to provide any decomposition scheme,} \edit{add}
identifying decompositions could be difficult for some users, so we do
not want to force users to specify a decomposition to use Felix.  To support
this, we need a compiler that performs task decomposition given a
standard MLN program as input.  Building on classical and new results
in embedded dependency inference from the database theory
literature~\cite{abiteboul1995foundations,
DBLP:conf/sigmod/AbiteboulH88,gmsv93:surajits:paper,Fan:VLDB2008}, we
show that the underlying problem of compilation is
$\Pi_2 \mathbf{P}$-complete in easier cases, and undecidable in more
difficult cases.  To cope, we develop a sound (but not complete)
compiler that takes as input an ordinary \mln program, identifies
common tasks such as classification and coref, and then assigns those
tasks to specialized algorithms.

To validate that our system can perform sophisticated knowledge-base
construction tasks, we use the \felix system to implement a solution for the 
TAC-KBP (Knowledge Base Population)
challenge.\footnote{\scriptsize{\url{http://nlp.cs.qc.cuny.edu/kbp/2010/}}}
Given a 1.8M document corpus, the goal is to perform two related
tasks: (1) {\em entity linking}: extract all entity mentions and map
them to entries in Wikipedia, and (2) {\em slot filling}: determine
relationships between entities. The reason for choosing this task is
that it contains ground truth so that we can assess the results: We
achieved F1=0.80 on entity linking (human performance is 0.90), and F1=0.34
on slot filling (state-of-the-art quality).{\footnote{\scriptsize F1 is the
harmonic mean of precision and recall.}}
In addition to
KBP, we also use three information extraction (IE) datasets that
have state-of-the-art solutions. On all four datasets, we show that
\felix is significantly more scalable than monolithic systems
such as \tuffy and \alc; this in turn enables \felix to efficiently
process sophisticated \mlns and produce high-quality results.
Furthermore, we validate that our individual technical contributions
are crucial to the overall performance and quality of \felix.

\begin{figure*}[!t]
\centering
{\scriptsize
\begin{tabular}{@{\hspace{0cm}}c@{\hspace{0cm}}||@{\hspace{0cm}}c@{\hspace{-0.2cm}}||c}
\begin{tabular}{l}
$\rel{pSimHard}$(per1, per2)\\
$\rel{pSimSoft}$(per1, per2)\\
$\rel{oSimHard}$(org1, org2)\\
$\rel{pSimSoft}$(org1, org2)\\
$\rel{coOccurs}$(per, org)\\
$\rel{homepage}$(per, page)\\
$\rel{oMention}$(page, org)\\
$\rel{faculty}$(org, per)\\
$\rel{*affil}$(per, org)\\
$\rel{*oCoref}$(org1, org2)\\
$\rel{*pCoref}$(per1, per2)
\end{tabular}
\hspace{5pt}
&

\begin{tabular}{l}
$\rel{coOccurs}$(\con{Ullman}, \con{Stanford Univ.})\\
$\rel{coOccurs}$(\con{Jeff Ullman}, \con{Stanford})\\
$\rel{coOccurs}$(\con{Gray}, \con{San Jose Lab})\\
$\rel{coOccurs}$(\con{J. Gray}, \con{IBM San Jose})\\
$\rel{coOccurs}$(\con{Mike}, \con{UC-Berkeley})\\
$\rel{coOccurs}$(\con{Mike}, \con{UCB})\\
$\rel{coOccurs}$(\con{Joe}, \con{UCB})\\
$\rel{faculty}$(\con{MIT}, \con{Chomsky})\\
$\rel{homepage}$(\con{Joe}, \con{Doc201})\\
$\rel{oMention}$(\con{Doc201}, \con{IBM})\\
$\cdots$
\end{tabular}
\hspace{5pt}
&
\begin{tabular}{c@{\hspace{0pt}}l@{\hspace{1pt}}l@{\hspace{0pt}}}
\textbf{weight} & \parbox{40pt}{\centering \textbf{rule}} & \\
$+\infty$ & $\rel{pCoref}(p,p)$ & ($F_1$)\\
$+\infty$ & $\rel{pCoref}(p1,p2)\imp\rel{pCoref}(p2,p1)$ & ($F_2$)\\
$+\infty$ & $\rel{pCoref}(x,y), \rel{pCoref}(y,z)\imp\rel{pCoref}(x,z)$ &
($F_3$)\\
6 & $\rel{pSimHard}(p1,p2) \imp \rel{pCoref}(p1,p2)$ & ($F_4$)\\
2 & $\rel{affil}(p1,o), \rel{affil}(p2,o),\rel{pSimSoft}(p1,p2)$ $\imp \rel{pCoref}(p1,p2)$ & ($F_5$)
\vspace{3pt}\\
\hline
$+\infty$ & $\rel{faculty}(o,p) \imp \rel{affil}(p,o)$  & ($F_{6}$)\\
8 & $\rel{homepage}(p,d), \rel{oMention}(d,o) \imp \rel{affil}(p,o)$ &
($F_{7}$)\\
3 & $\rel{coOccurs}(p,o1), \rel{oCoref}(o1,o2) \imp \rel{affil}(p,o2)$ &
($F_{8}$)\\
4 & $\rel{coOccurs}(p1,o), \rel{pCoref}(p1,p2) \imp \rel{affil}(p2,o)$ &
($F_{9}$)\\
\vspace{3pt}
 $\ldots$ &
\end{tabular} \\

\hline
\textbf{Schema} &  \textbf{Evidence} & \textbf{Rules}
\end{tabular}}
\caption{An example \mln program that performs three tasks jointly:
1. discover affiliation relationships between people and organizations ($\rel{affil}$);
2. resolve coreference among people mentions ($\rel{pCoref}$); and
3. resolve coreference among organization mentions ($\rel{oCoref}$).
The remaining eight relations are evidence relations. In particular, 
$\rel{coOccurs}$ stores person-organization co-occurrences;
$*\rel{Sim}*$ relations are string similarities.
}
\label{fig:mlns}
\end{figure*}

\paragraph*{Outline}
In Section~\ref{sec:related}, we describe related work. 
In Section~\ref{sec:prelim}, we describe a simple text application encoded as an \mln program,
and the Lagrangian relaxation technique in mathematical programming. 
In Section~\ref{sec:arch}, we present an overview of \felix's architecture and 
some key concepts. In Section~\ref{sec:tech}, we describe
key technical challenges and how \felix addresses them: how to
execute individual tasks with high performance and quality, 
how to improve the data movement efficiency between tasks, and how
to automatically recognize specialized tasks in an \mln program. 
In Section~\ref{sec:experiments}, we use extensive experiments to validate the overall
advantage of \felix as well as individual technical contributions.

\section{Related Work}
\label{sec:related}
There is a trend to build semantically deep text applications  
with increasingly sophisticated statistical inference~\cite{zhu2009statsnowball,
  weld2009using,suchanek2009sofie,fang2011searching}. We follow on this line of
work. However, while the goal of prior work is to
explore the effectiveness of different correlation structures on
particular applications, our goal is to support general application
development by scaling up existing statistical inference frameworks.
Wang et al.~\cite{wang2011hybrid} explore multiple inference algorithms
for information extraction. However, their system focuses on managing
low-level extractions in CRF models, whereas our goal is to use \mln
to support knowledge base construction.

\felix specializes to \mlns. There are, however, other
statistical inference frameworks such as PRMs~\cite{friedman1999learning},
BLOG~\cite{milch2005blog}, \rev{Factorie~\cite{mccallum2009factorie,wick2010scalable}} \edit{added Wick's VLDB 2010 paper~\cite{wick2010scalable}}, and
PrDB~\cite{sen09-PrDB}. Our hope is that the techniques developed
here apply to these frameworks as well.

Researchers have proposed different approaches to improving MLN
inference performance in the context of text applications.
In StatSnowball~\cite{zhu2009statsnowball}, Zhu et al. demonstrate
high quality results of an MLN-based approach. To address the
scalability issue of generic \mln inference, they make additional
independence assumptions in their programs. In contrast, the goal of
\felix is to automatically scale up statistical inference while sticking to
\mln semantics. Theobald et
al.~\cite{theobald2010urdf} design specialized MaxSAT algorithms that
efficiently solve \mln programs of special forms. In contrast, we study how
to scale general \mln programs.
\rev{Riedel~\cite{riedel09cutting} proposed a cutting-plane meta-algorithm
that iteratively performs grounding and inference, but the underlying
grounding and inference procedures are still for generic \mlns.} \edit{add}
In Tuffy~\cite{tuffy-vldb11}, the authors improve the scalability of
\mln inference with an RDBMS, but their system is still a monolithic
approach that consists of generic inference procedures.

As a classic technique, Lagrangian relaxation has been applied to
closely related statistical models (i.e., graphical 
models)~\cite{DBLP:journals/corr/abs-0710-0013,Wainwright:2008:GME:1523420}.
However, there the input is directly a mathematical optimization problem
and the granularity of decomposition is individual variables. In contrast,
our input is a program in a high-level language,
and we perform decomposition at the relation level
inside an RDBMS.

Our materialization tradeoff strategy is related to view
materialization and
selection~\cite{shukla1998materialized,chirkova2006answering} in the
context of data warehousing.  However, our problem setting is
different: we focus on batch processing so that we do not
consider maintenance cost. The idea of lazy-eager tradeoff in view
materialization or query answering has also been applied to
probabilistic databases~\cite{wick2010scalable}. However, their goal is
efficiently maintaining intermediate results, rather than choosing
a materialization strategy. Similar in spirit to our approach is
Sprout~\cite{olteanu2009sprout}, which considers lazy-versus-eager
plans for when to apply confidence computation, but they do not consider
inference decomposition.

\eat{
The XLog framework~\cite{DBLP:conf/vldb/ShenDNR07} allows black boxes
(e.g., Perl scripts) to be called from datalog programs (simulating
table functions). Our approach differs in many key aspects: (1) \felix
uses a single language (Markov Logic) and so our tasks are not
black boxes: rather they have a formal semantics; (2) we discover
specialized tasks automatically; and (3) we optimize data movement.
}

\section{Preliminaries}
\label{sec:prelim}
To illustrate how \mlns can be used in text-processing applications,
we first walk through a program that extracts affiliations between people and organizations
from Web text. We then
describe how Lagrangian relaxation is used for mathematical optimization.

\subsection{Markov Logic Networks in Felix} \label{sec:mln-syntax}
In text applications, a typical first step is to use standard NLP toolkits
to generate raw data such as plausible mentions of
people and organizations in a Web corpus and their co-occurrences.
But transforming such raw signals into high-quality and semantically
coherent knowledge bases is a challenging task.
For example, a major challenge is that a single real-world
entity may be referred to in many different ways, e.g., {\it``UCB''}
and {\it ``UC-Berkeley''}.
To address such challenges, \mlns provide a framework where we
can express logical assertions that are only likely to be true 
(and quantify such likelihood). Below we explain the key concepts
in this framework by walking through an example.

Our system \felix is a middleware system: it takes as input a standard
\mln program, performs statistical inference, and outputs its results
into one or more relations that are stored in a relational 
database (PostgreSQL). An \mln program consists of three parts: \emph{schema}, 
\emph{evidence}, and \emph{rules}. 
To tell \felix what data will be provided or generated, the
user provides a \emph{schema}.  Some relations are standard database
relations, and we call these relations \emph{evidence}.  Intuitively,
evidence relations contain tuples that we assume are correct. In the
schema of Figure~\ref{fig:mlns}, the first eight relations are
evidence relations. For example, we know that \con{Ullman} and
\con{Stanford Univ.} co-occur in some webpage, and that \con{Doc201}
is the homepage of \con{Joe}.\eat{Other evidence includes string
similarity information.}
In addition to evidence relations, there are
also relations whose content we do not know, but we want the \mln
program to predict; they are called {\it query relations}. In
Figure~\ref{fig:mlns}, $\rel{affil}$ is a query relation since we want
the \mln to predict affiliation relationships between persons and
organizations. The other two query relations are $\rel{pCoref}$ and
$\rel{oCoref}$, for person and organization coreference, respectively.

In addition to schema and evidence, we also provide a set of \mln
rules that encode our knowledge about the correlations and constraints
over the relations. An \mln rule is a first-order logic formula
associated with an extended-real-valued number called a {\it
weight}. Infinite-weighted rules are called hard rules, which means
that they must hold in any prediction that the \mln system makes. In
contrast, rules with finite weights are soft rules: a positive weight
indicates confidence in the rule's correctness.\footnote{\scriptsize{Roughly these weights
correspond to the log odds of the probability that the statement is
true. (The log odds of probability $p$ is $\log\frac{p}{1-p}$.) In
general, these weights do not have a simple probabilistic
interpretation~\cite{DBLP:journals/ml/RichardsonD06}.}} (In \felix, weights
can be set by the user or automatically learned. We do not discuss
learning in this work.)

\begin{example}
An important type of hard rule is a standard SQL query, e.g., to
transform the results for use in the application. A more sophisticated
example of hard rule is to encode that coreference has a transitive
property, which is captured by the hard rule $F_3$. Rules $F_{8}$ and
$F_{9}$ use person-organization co-occurrences ($\rel{coOccurs}$)
together with coreference ($\rel{pCoref}$ and $\rel{oCoref}$) to
deduce affiliation relationships ($\rel{affil}$). These rules are soft
since co-occurrence in a webpage does not necessarily imply
affiliation.
\end{example}

Intuitively, when a soft rule is violated, we pay a \emph{cost} equal
to the absolute value of its weight (described below).  For example, if
$\rel{coOccurs}$(\con{Ullman}, \con{Stanford Univ.}) and
$\rel{pCoref}$(\con{Ullman}, \con{Jeff Ullman}), but not
$\rel{affil}$(\con{Jeff Ullman}, \con{Stanford Univ.}), then we pay a
cost of 4 because of $F_9$. The goal of an \mln inference algorithm is to
find a prediction that minimizes the sum of such costs.

\eat{
Similarly, affiliation relationships can be used to deduce non-obvious
coreferences. For instance, using the fact that \con{Mike} is
affiliated with both \con{UC-Berkeley} and \con{UCB}, \felix may
infer that \con{UC-Berkeley} and \con{UCB} refer to the same
organization (rules on $\rel{oCoref}$ are omitted from
Figure~\ref{fig:mlns}).  If \felix knows that \con{Joe} co-occurs
with \con{UCB}, then it is able to infer Joe's
affiliation with \con{UC-Berkeley}. Such interdependencies between
predictions are sometimes called joint inference.
}

\paragraph*{Semantics}
An \mln program defines a probability distribution over database
instances (possible worlds).  Formally, we first fix a schema $\sigma$
(as in Figure~\ref{fig:mlns}) and a domain $D$. Given as input a set
of formulae $\bar F = F_1, \dots, F_N$ with weights $w_1, \dots, w_N$,
they define a probability distribution over {\em possible worlds}
(deterministic databases) as follows. Given a formula $F_k$ with free
variables $\bar x = (x_1,\cdots,x_m)$, then for each $\bar d \in
D^{m}$, we create a new formula $g_{\bar d}$ called a {\em ground
formula} where $g_{\bar d}$ denotes the result of substituting each
variable $x_i$ of $F_k$ with $d_i$. We assign the weight $w_k$ to
$g_{\bar d}$.  Denote by $G=(\bar g, w)$ the set of all such weighted ground
formulae of $\bar F$. We call the set of all tuples in $G$ the {\em
ground database}. Let $w$ be a function that maps each ground formula
to its assigned weight. Fix an \mln $\bar{F}$, then for any possible
world (instance) $I$ we say a ground formula $g$ is {\it violated} if
$w(g) > 0$ and $g$ is false in $I$, or if $w(g) < 0$ and $g$ is true
in $I$. We denote the set of ground formulae violated in a world $I$
as $V(I)$. The cost of the world $I$ is
\begin{equation}
\cost(I) = \sum_{g \in V(I)} |w(g)|
\label{eq:cost}
\end{equation}
Through $\cost$, an \mln defines a probability distribution over all
instances using the exponential family of distributions (that are the
basis for graphical models~\cite{Wainwright:2008:GME:1523420}):
\[ \Pr[I] = Z^{-1} \exp \set{ - \cost(I) }  \]
where $Z$ is a normalizing constant.

\paragraph*{Inference}
There are two main types of inference with \mlns: {\it MAP (maximum a
  posterior) inference}, where we want to find a most likely world,
i.e., a world with the lowest cost, and {\it marginal inference},
where we want to compute the marginal probability of each unknown
tuple.
Both types of inference are essentially mathematical optimization problems
that are intractable, and so existing \mln systems
implement generic (search/sampling) algorithms for inference.
As a baseline, \felix implements generic algorithms for both types of inference
as well.
Although \felix supports both types of inference in our decomposition
architecture, in this work we focus on MAP inference to simplify the presentation.

\subsection{Lagrangian Relaxation}
\label{sec:lagrangian}
We illustrate the basic idea of \emph{Lagrangian relaxation} with a simple example.
Consider the problem of minimizing a real-valued function $f(x_1,x_2,x_3)$.
Lagrangian relaxation is a technique that allows us to divide and conquer a problem
like this. For example, suppose that $f$ can be written as
\[
f(x_1,x_2,x_3) = f_1({x_1}, {x_2}) + f_2({x_2}, {x_3}).
\]
While we may be able to solve each of $f_1$ and $f_2$ efficiently,
that ability does not directly lead to a solution to $f$ since
$f_1$ and $f_2$ share the variable ${x_2}$.
However, we can rewrite $\min_{x_1,x_2,x_3} f(x_1,x_2,x_3)$ into the form
\[
\min_{{x_1,x_{21},x_{22},x_3}} f_1({x_1}, {x_{21}}) + f_2({x_{22}}, {x_3})
\text{ s.t. } {x_{21}}={x_{22}},
\]
where we essentially made two copies of ${x_2}$ and enforce
that they are identical. The significance of such rewriting is that we can
apply Lagrangian relaxation to the equality constraint to decompose the
formula into two independent pieces. To do this, we introduce a scalar
variable $\lambda\in\R$ (called a \emph{Lagrange multiplier}) and define
\begin{align*}
g(\mathbf{\lambda}) = \min_{x_1,x_{21},x_{22},x_3}
f_1({x_1}, {x_{21}}) + f_2({x_{22}}, {x_3}) + 
{\lambda}({x_{21}}-{x_{22}})
\end{align*}
Then $\max_{\lambda}g(\lambda)$ is called the \emph{dual problem} of
the original minimization problem on $f$. Intuitively, The dual
problem trades off a penalty for how much the copies $x_{21}$ and
$x_{22}$ disagree with the original objective value. If the resulting
solution of this dual problem is feasible for the original program
(i.e., satisfies the equality constraint), then this solution is also
an optimum of the original program~\cite[p.~168]{wolsey1998integer}.

The key benefit of such relaxation is that, instead of a single problem
on $f$, we can now compute $g(\lambda)$ by solving
two independent problems (each problem is grouped by parentheses)
that are hopefully (much) easier:
\[
g(\lambda) = 
\left(\min_{x_1,x_{21}} f_1(x_1,x_{21}) + \lambda x_{21} \right) + 
\left(\min_{x_{22},x_3} f_2(x_{22},x_3) - \lambda x_{22}\right).
\]
To compute $\max_{\lambda}g(\lambda)$, we can use
standard techniques such as
\emph{gradient descent}~\cite[p.~174]{wolsey1998integer}.

Notice that Lagrangian relaxation
could be used for \mln inference:
consider the case where $x_i$ are truth values of database tuples
representing a possible world $I$ and define $f$ to be $\cost(I)$
as in Equation~\ref{eq:cost}.
(\felix can handle marginal inference with Lagrangian relaxation
as well, but we focus on MAP inference to simplify presentation.)

\paragraph*{Decomposition Choices}
The Lagrangian relaxation technique leaves open the question of
\emph{how} to decompose a function $f$ in general and introduce equality constraints. 
These are the questions we need to answer first and foremost if we
want to apply Lagrangian relaxation to \mlns. Furthermore, 
it is important that we can scale up the execution of the 
decomposed program on large datasets.

\section{Architecture of Felix}
\label{sec:arch}

\begin{figure}
\centering
  \includegraphics[width=0.48\textwidth]{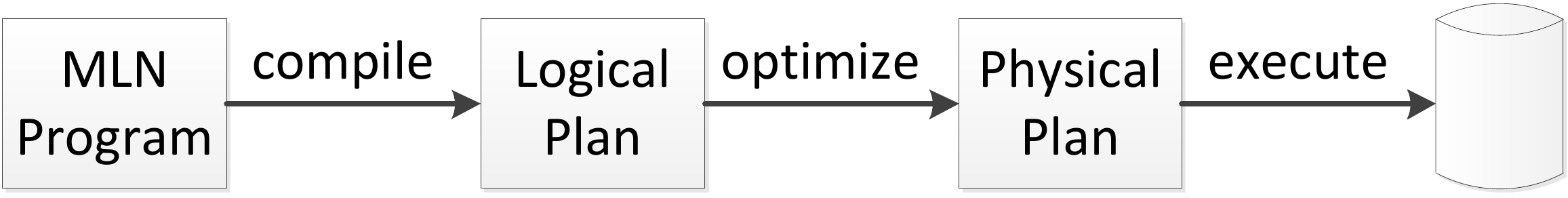}
\caption{Execution Pipeline of \felix.}
\label{fig:exec:pipe}
\end{figure}
In this section, we provide an overview of the \felix architecture and some
key concepts. We expand on further technical details in the next section.
At a high level, the way \felix performs \mln inference resembles
how an RDBMS performs SQL query evaluation: given an \mln
program $\Gamma$, \felix transforms it in
several phases as illustrated in Figure~\ref{fig:exec:pipe}: \felix
first {\em compiles} an MLN program into a {\em logical plan}
of tasks. Then, \felix performs {\em optimization}
(code selection) to select the best \emph{physical plan} that consists
of a sequence of
statements that are then executed (by a process called the \emph{Master}). In turn,
the Master may call an RDBMS or
statistical inference algorithms.

\subsection{Compilation}
\label{sec:arch:compilation}
In \mln inference, a variable of the underlying optimization problem 
corresponds to the truth value (for MAP inference)
or marginal probability (for marginal inference) of a query relation tuple.
While Lagrangian relaxation allows us to decompose an inference problem in arbitrary ways,
\felix focuses on decompositions at the level of relations: 
\felix ensures that an entire relation is either shared between subproblems or
exclusive to one subproblem.
A key advantage of this is that \felix can benefit from the set-oriented processing
power of an RDBMS. Even with this restriction, any
partitioning of the rules in an \mln program $\Gamma$ is a valid decomposition.
(For the moment, assume that all rules are soft;
we come back to hard rules in Section~\ref{sec:arch:execution}.)

Formally, let $\Gamma=\{\phi_i\}$ be a set of \mln rules; denote by $\mathcal{R}$ the set of
query relations and $\mathbf{x}_R$ the set of Boolean
variables (i.e., unknown truth values) of $R\in\mathcal{R}$.
Let $\Gamma_1,\ldots,\Gamma_k$ be a decomposition of $\Gamma$,
and $\mathcal{R}_i\subseteq\mathcal{R}$ the set of query relations
referred to by $\Gamma_i$. 
Define $\mathbf{x}_{\mathcal{R}}=\cup_{R\in\mathcal{R}}\mathbf{x}_R$;
similarly $\mathbf{x}_{\mathcal{R}_i}$. Then we can write the \mln cost
function as
\[
\min_{\mathbf{x}_{\mathcal{R}}}\cost^{\Gamma}(\mathbf{x}_{\mathcal{R}}) = \min_{\mathbf{x}_{\mathcal{R}}}\sum_{i=1}^{k}\cost^{\Gamma_i}(\mathbf{x}_{\mathcal{R}_i})
\]

To decouple the subprograms, we create a local copy of variables $\mathbf{x}^i_{\mathcal{R}_i}$
for each $\Gamma_i$, but also introduce Lagrangian multipliers 
$\mathbf{\lambda}_R^j\in\mathds{R}^{|\mathbf{x}_R|}$ for each $R\in\mathcal{R}$
and each $\Gamma_j$ s.t. $R\in\mathcal{R}_j$, resulting in the dual problem
\begin{align*}
&\max_{\lambda}g(\lambda)\\
\equiv&\max_{\lambda}
\left\{
\sum_{i=1}^{k}\min_{\mathbf{x}^i_{\mathcal{R}_i}}\left[\cost^{\Gamma_i}(\mathbf{x}^i_{\mathcal{R}_i})
+\mathbf{\lambda}_{\mathcal{R}_i}^i \cdot\mathbf{x}^i_{\mathcal{R}_i}
\right]
\right\}
\\
&\textrm{subject to}  \sum_{j:R\in\mathcal{R}_j}\mathbf{\lambda}_R^j=0\quad\forall R\in\mathcal{R}.
\end{align*}

Thus, to perform Lagrangian relaxation on $\Gamma$, we need to augment the cost
function of each subprogram with the $\mathbf{\lambda}_{\mathcal{R}_i}^i \cdot\mathbf{x}^i_{\mathcal{R}_i}$
terms. As illustrated in the example below, these additional terms are equivalent
to adding singleton rules with the multipliers as weights. As a result,
we can still solve the (augmented) subproblems $\Gamma^\lambda_i$ 
as \mln inference problems.

\begin{example}
\label{ex:happysad}
Consider a simple Markov Logic program\eat{\footnote{\scriptsize
We use this artificial example for notational simplicity.
}} $\Gamma$:\\

{\small
\begin{tabular}{llr}
$1$ & $\rel{GoodNews}(p) => \rel{Happy}(p)$ & $\phi_1$\\
$1$ & $\rel{BadNews}(p) => \rel{Sad}(p)$ & $\phi_2$ \\
$5$ & $\rel{Happy}(p) <=> \neg\rel{Sad}(p)$ & $\phi_3$\\
\end{tabular}
}\\

\noindent
where $\rel{GoodNews}$ and $\rel{BadNews}$ are evidence and 
the other two relations are queries. 
Consider the decomposition $\Gamma_1=\{\phi_1\}$ and $\Gamma_2=\{\phi_2,\phi_3\}$.
$\Gamma_1$ and $\Gamma_2$ share the relation $\rel{Happy}$;
so we create two copies of this relation: $\rel{Happy_1}$ and $\rel{Happy_2}$,
one for each subprogram.
To relax the need that $\rel{Happy_1}$ and $\rel{Happy_2}$ be equal,
we introduce Lagrange multipliers $\lambda_p$,
one for each possible tuple $\rel{Happy}(p)$. We thereby obtain a new program
$\Gamma^\lambda$:\\

{\small
\begin{tabular}{lll}
$1$ & $\rel{GoodNews}(p) => \rel{Happy_1}(p)$ & $\phi_1'$\\
$\lambda_p$ & $\rel{Happy_1}(p)$ & $\varphi_{1}$ \\
$1$ & $\rel{BadNews}(p) => \rel{Sad}(p)$ & $\phi_2$ \\
$5$ & $\rel{Happy_2}(p) <=> \neg\rel{Sad}(p)$ & $\phi_3'$\\
$-\lambda_p$ & $\rel{Happy_2}(p)$ & $\varphi_{2}$ \\
\end{tabular}
}\\

This program contains two subprograms,
$\Gamma^\lambda_1=\{\phi_1',\varphi_{1}\}$ and
$\Gamma^\lambda_2=\{\phi_2,\phi_3',\varphi_{2}\}$,
that can be solved independently.

\end{example}

\begin{figure}[t]
\centering
  \includegraphics[width=0.50\textwidth]{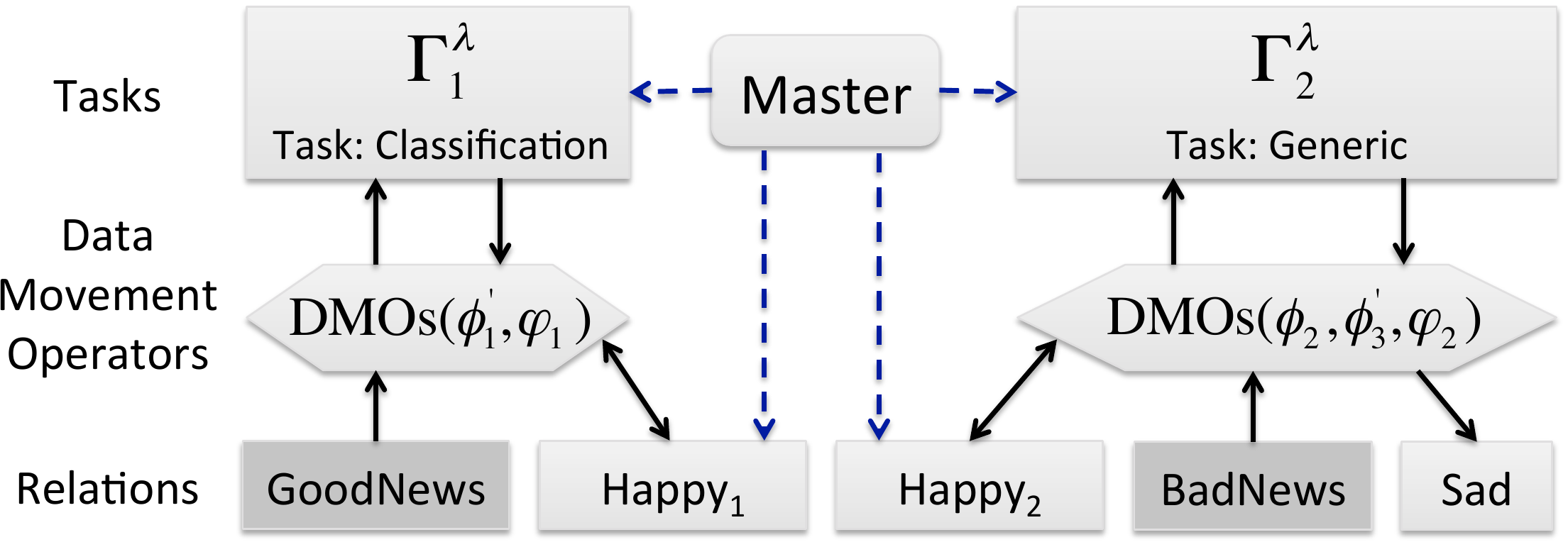}
  \caption{An example logical plan. 
Relations in shaded boxes are evidence relations.
Solid arrows indicate data flow; dash arrows are control.}
  \label{fig:logicalplan}
\end{figure}

The output of compilation is a \emph{logical plan} that consists of
a bipartite graph between a set of subprograms (e.g., $\Gamma^\lambda_i$) 
and a set of relations (e.g., $\rel{GoodNews}$ and $\rel{Happy}$).
There is an edge between a subprogram and a relation if
the subprogram refers to the relation.
In general, the decomposition
could be either user-provided or automatically generated.
In Sections~\ref{sec:compilation} we
discuss automatic decomposition.

\subsection{Optimization}
\label{sec:arch:optimization}
\label{sec:tasks}

The optimization stage fleshes out the logical plan with code
selection and generates a \emph{physical plan} with detailed \emph{statements} that
are to be executed by a process in \felix called the Master.
Each subprogram $\Gamma^\lambda_i$ in the logical plan is 
executed as a \emph{task} that encapsulates 
a statistical algorithm that consumes and produces relations.
The default algorithm assigned to each task is a generic
\mln inference algorithm that can handle any \mln program~\cite{tuffy-vldb11}.
However, as we will see in Section~\ref{sec:specoper}, there are
several families of \mlns that have specialized
algorithms with high efficiency and high quality.
For tasks matching those families, we execute them with corresponding
specialized algorithms. \eat{It is also possible for the user to provide
algorithms directly as long as the algorithm implements similar interfaces.}

The input/output relations of each task are not necessarily the relations
in the logical plan. For example, the input to a classification task could
be the results of some conjunctive queries translated from \mln rules.
To model such indirection, we introduce \emph{data movement operators} (DMOs),
which are essentially datalog queries that map between \mln relations
and task-specific relations. 
\rev{Roughly speaking, DMOs for specialized algorithms play a role that
is similar to what grounding does for generic \mln inference.} \edit{add}
Given a task $\Gamma^\lambda_i$, it is the responsibility
of the underlying algorithm to generate all necessary DMOs
and register them with \felix. 
Figure~\ref{fig:logicalplan} shows an enriched logical plan after 
code selection and DMO generation.
DMOs are critical to the performance of \felix, and so we need to
execute them efficiently.
We observe that the overall performance of an evaluation strategy for a DMO
depends on not only how well an RDBMS can execute SQL, but also \emph{how}
and \emph{how frequently} a task queries this DMO -- namely
the access pattern of this task.

To expose the access patterns of a task to \felix, we model DMOs as {\em adorned views}~\cite{ullman1985implementation}. 
In an adorned view, each variable in the head of a view definition is associated with a
binding-type, which is either $\mathsf{b}$ (bound) or $\mathsf{f}$
(free).  Given a DMO $Q$, denote by $\bar{x}^\mathsf{b}$
(resp. $\bar{x}^\mathsf{f}$) the set of bound (resp. free) variables
in its head. Then we can view $Q$ as a function mapping an
assignment to $\bar{x}^\mathsf{b}$ (i.e., a tuple) to a set of
assignments to $\bar{x}^\mathsf{f}$ (i.e., a relation).
Following the notation in Ullman~\cite{ullman1985implementation}, 
a query $Q$ of arity $a(Q)$ is written as $Q^{\alpha}(\bar x)$ where $\alpha \in
\set{\mathsf{b},\mathsf{f}}^{a(Q)}$.
By default, all DMOs have the all-free binding pattern. But if a task exposes the 
access pattern of its DMOs, \felix can select evaluation strategies of the 
DMOs more informatively -- \felix employs a cost-based optimizer for DMOs
that takes advantage of both the RDBMS's cost-estimation facility and
the data-access pattern of a task (see Section~\ref{sec:optim}).

\begin{example}
Say the subprogram $F_1$-$F_5$ in Figure~\ref{fig:mlns} is executed
as a task that performs coreference resolution on $\rel{pCoref}$, and \felix chooses 
the correlation clustering algorithm~\cite{ailon2008aggregating,arasu2009large} for this task. 
At this point, \felix knows the data-access properties of that
algorithm (which essentially asks only for ``neighboring''
elements). \felix represents this using the following adorned view:
\begin{eqnarray*}
\rel{DMO}^\mathsf{bf}(x,y) &\la& \rel{affil}(x,o), \rel{affil}(y,o), \rel{pSimSoft}(x,y).
\end{eqnarray*}
\noindent
which is adorned as $\mathsf{bf}$. During execution, this
coref task sends requests such as $x=\,$\con{Joe}, and expects to receive a
set of names $\{y\mid\rel{DMO}($\con{Joe}$,y)\}$.
\end{example}

\noindent
Sometimes \felix could deduce from the DMOs how a task may be
parallelized (e.g., via key attributes), and takes advantage
of such opportunities.
\eat{A sufficient condition for
such parallelization opportunities is when the heads of all DMOs of a task share
a bound attribute that partitions the work done by the task. Call such an attribute
a \emph{pivot}. A task may inform \felix of the existence of pivots,
so that \felix can decide on degree of parallelism for the task based on
available resources (e.g., CPU cores and RAM).
}
The output of optimization is a DAG of {\em statements}. 
Statements are of two forms: (1) a prepared
SQL statement; (2) a statement encoding the necessary information to
run a task (e.g., the number of iterations an
algorithm should run, data locations,
etc.).

\subsection{Execution}
\label{sec:arch:execution}

In \felix, a process called the \emph{Master} coordinates the tasks by 
periodically updating the Lagrangian
multiplier associated with each shared tuple (e.g., $\lambda_p$ in
Example~\ref{ex:happysad}). Such an iterative updating scheme
is called \emph{master-slave message passing}. The goal is to optimize 
$\max_\lambda g(\lambda)$ using standard subgradient 
methods~\cite[p.~174]{wolsey1998integer}.
Specifically, let $p$ be an unknown tuple of $R$,
then at step $k$ the Master updates each $\lambda^i_p$ s.t. $R\in\mathcal{R}_i$ 
using the following rule:
\[
\lambda^i_p = \lambda^i_p + \alpha_k
\left(
x^i_p - \frac{\sum_{j:R\in\mathcal{R}_j} x^j_p }{|\{j:R\in\mathcal{R}_j\}|}
\right),
\]
where $\alpha_k$ is the gradient step size for this update.
A key novelty of \felix is that we can leverage the underlying RDBMS
to efficiently compute the gradient on an entire relation.
To see why, let $\lambda^j_p$ be the multipliers for a shared tuple $p$ of a relation $R$;
$\lambda^j_p$ is stored as an extra attribute in each copy $j$ of $R$.
Note that at each iteration, $\lambda^j_p$ changes only if the copies of $R$ do
not agree on $p$ (e.g., exactly one copy has $p$ missing).
Thus, we can update all $\lambda^j_p$'s with an outer join between the copies
of $R$ using SQL.
The gradient descent procedure stops either when all copies have reached
an agreement (or only a very small portion disagrees) or when \felix has
run a pre-specified maximum number of iterations.

\paragraph*{Scheduling and Parallelism}
Between two iterations of message passing, each task is executed until completion.
If these tasks run sequentially (say due to limited RAM or CPU), then any order of execution
would result in the same run time. On the other hand, if all tasks can run in parallel,
then faster tasks would have to wait for the slowest task to finish until message passing
could proceed. To better utilize CPU time, \felix updates the Lagrangian multipliers for
a shared relation $R$ whenever all involved tasks have finished. Furthermore, a task is
restarted when all shared relations of this task have been updated.
If computation resources are abundant, \felix also considers
parallelizing a task.

\paragraph*{Initialization and Finalization}
Let $\sigma=T_1,\ldots,T_n$ be a sequence of all tasks obtained by a breadth-first traversal of the
logical plan. At initial execution time, to bootstrap from the initial empty state, we sequentially execute
the tasks in the order of $\sigma$, each task initializing its local copies of a relation by copying
from the output of previous tasks. Then \felix performs the above master-slave message-passing
scheme for several iterations; during this phase all tasks could run in parallel.
At the end of execution, we perform a finalization step: 
we traverse $\sigma$ again and output the copy from $T^R_{\text{last}}$ 
for each query relation $R$, where $T^R_{\text{last}}$ is the last task in $\sigma$
that outputs $R$.
To ensure that hard rules in the input \mln program are not violated in the final output,
we insist that for any query relation $R$, $T^R_{\text{last}}$ respects all hard rules involving $R$.
(We allow hard rules to be assigned to multiple tasks.)
This guarantees that the output of the finalization step
is a possible world for $\Gamma$ (provided that the hard rules are satisfiable).

\section{Technical Details}
\label{sec:tech}
Having set up the general framework, in this section, we discuss further 
technical challenges and solutions in \felix. 
First, as each individual task might be as complex as the original \mln,
decomposition by itself does not automatically lead to high scalability.
To address this issue, we identify several common statistical tasks with
well-studied algorithms and characterize their correspondence with \mln
subprograms (Section~\ref{sec:specoper}).
Second, even when each individual task is able to run efficiently, sometimes
the data movement cost may be prohibitive. To address
this issue, we propose a novel cost-based materialization strategy for
data movement operators (Section~\ref{sec:optim}).
Third, since the user may not be able to provide a good task decomposition scheme,
it is important for \felix to be able to compile an \mln program into tasks
automatically. To support this, we describe the compiler of \felix 
that automatically recognizes specialized tasks in an \mln program 
(Section~\ref{sec:compilation}).

\subsection{Specialized Tasks} 
\label{sec:specoper}

\begin{table}
\centering
{\small
\begin{tabular}{ll}
  \hline\noalign{\vspace{1pt}}
  \textbf{Task} & \textbf{Implementation} \\
  \hline\noalign{\vspace{1pt}}
  Simple Classification & Linear models~\cite{boyd:cvx}\\
  Correlated Classification & Conditional Random Fields~\cite{lafferty2001conditional}\\
  Coreference & Correlation clustering~\cite{ailon2008aggregating,arasu2009large}\\
 \eat{ Generic \mln inference & \tuffy~\cite{tuffy-vldb11} \\}
  \hline
\end{tabular}
}
\caption{Example specialized tasks and their implementations in \felix.}
\label{tab:ops}
\end{table}

By default, \felix solves a task (which is also an \mln program) with
a generic \mln inference algorithm based on a reduction to MaxSAT~\cite{kautz1997general},
which is designed to solve sophisticated \mln programs. Ideally, when a
task has certain properties indicating that it can be solved using a
more efficient specialized algorithm, \felix should do so. 
Conceptually, the \felix framework supports all statistical tasks 
that can be modeled as mathematical programs.
As an initial proof of concept, our prototype of \felix integrates
two statistical tasks that are widely used in text applications:
classification and coreference (see Table~\ref{tab:ops}). These
specialized tasks are well-studied and so have algorithms with high
efficiency and high quality.
\\

\noindent
\textbf{Classification}
Classification tasks are ubiquitous in text applications; e.g.,
classifying documents by topics or sentiments, and classifying noun
phrases by entity types.  In a classification task, we are given a set
of objects and a set of labels; the goal is to assign a label to
each object. Depending on the structure of the cost function, there
are two types of classification tasks: \emph{simple classification}
and \emph{correlated classification}.

In simple classification, given a model, the assignment of each object
to a label is independent from other object labels. We describe
a Boolean classification task for simplicity, i.e., our goal is to
determine whether each object is in or out of a single
class. The input to a Boolean classification task is a pair of
relations: {\em the model} which can be viewed as a relation
$M(\underline{f},w)$ that maps each feature $f$ to a single
weight $w \in \R$, and a relation of objects $I(o,f)$; if a tuple
$(o,f)$ is in $I$ then object $o$ has feature $f$ (otherwise not). The
output is a relation $R(o)$ that indicates which objects are members of the
class ($R$ can also contain their marginal probabilities). For simple
classification, the optimal $R$ can be populated by including those
objects $o$ such that
\[ \sum_{ w : M(w,f) \text{ and } I(o,f) } w \geq 0 \]
One can implement a simple classification task with SQL aggregates,
which should be much more efficient than the MaxSAT algorithm used in
generic \mln inference.

The twist in \felix is that the objects and the features of the model
are defined by \mln rules.  For example, the rules $F_6$ and $F_7$ in
Figure~\ref{fig:mlns} form a classification task that determines
whether each $\rel{affil}$ tuple (considered as an object) holds. Said
another way, each rule is a feature. So, \felix populates the model
relation $M$ with two tuples: $M(F_6,+\infty)$ and $M(F_7,8)$, and
populates the input relation $I$ by executing the conjunctive queries
in $F_6$ and $F_7$; e.g., from $F_7$ \felix generates tuples of the
form $I(P,O,F_7)$, which indicates that the object
$\rel{affil}(P,O)$ has the feature $F_7$.\footnote{\scriptsize In general
a model usually has both positive and negative features.}
Operationally \felix performs such translation via DMOs that are also
adorned with the task's access patterns; e.g., the DMO for $I$ has the
adornment $I^{\mathsf{bbf}}$ since \felix classifies each
$\rel{affil}(P,O)$ independently.

\felix extends this basic model in two ways: (1) \felix implements multi-class classification
by adding a \emph{class} attribute to $M$ and $I$. (2) \felix also
supports \emph{correlated classification}: in addition to
per-object features, \felix also allows features that span multiple objects. For
example, in named entity recognition if we see the token {\it``Mr.''}
the next token is very likely to be a person's name. In general, one
can form a graph where the nodes are objects and two objects are
connected if there is a rule that refers to both objects. When this
graph is acyclic, the task essentially consists of tree-structured CRF
models that can be solved in polynomial time with dynamic programming
algorithms~\cite{lafferty2001conditional}.
\\

\eat{
In some cases, the objects are correlated (e.g., consecutive words in text); 
correlated classification extends simple classification by allowing pairwise correlations
between labels. In correlated classification, in addition to $S$, there is another input relation
$B(\underline{o1,o2,c1,c2}, wgt)$ where $wgt=\beta_{o1,o2,c1,c2}\in\R$ indicates
the likelihood that two objects $o1,o2$ are simultaneously labeled
as $c1,c2$ respectively. Let $o_1,\ldots,o_n$ be the set of all objects, then
for any label assignment $\bar{l}=l_1,\ldots,l_n\in L^n$,
define the classification cost
\[
\costclass({\bar{l}}) = -\sum \alpha_{o_i, l_i} - \sum \beta_{o_i, o_j, l_i, l_j}.
\]

\noindent
The optimal label assignment is the one with lowest cost:
\[
\bar{l}^* = \arg\min_{\bar{l}\in L^n} \costclass({\bar{l}}).
\]

For general correlations, the problem of finding $\bar{l}^*$ is
intractable.  But when the correlation structure is a chain or tree,
we can find the optimal label assignment with dynamic programming
algorithms~\cite{lafferty2001conditional}. 

\eat{Given an \mln program $\Gamma$ (e.g., a task resulting from decomposition), 
how do we determine if it can be solved as classification?
First, $\Gamma$ must imply the key constraint on $R$.
Then, if there are no recursive rules with respect to $R$ in $\Gamma$ --
which implies there are no correlations between labels -- 
we can solve $R$ as simple classification.The translation from \mln
rules to the DMO for $S$ is straightforward -- it is a union of
conjunctive queries. The binding pattern is
$S^{\mathsf{bff}}(obj,c,wgt)$ since the objects can be processed
independently. 
Lastly, suppose there are recursive rules of $R$; as long as we
can decide that the tuple-level recursion structure form a chain
or a tree, we can still solve $R$ efficiently.Section~\ref{sec:compilation} discusses how these
properties can be automatically detected.\\}
}

\noindent
\textbf{Coreference} Another common task is coreference
resolution (coref), e.g., given a set of strings (say phrases in a
document) we want to decide which strings represent the same
real-world entity.  These tasks are ubiquitous in text processing. The
input to a coref task is a single relation $B(\underline{o1,o2}, wgt)$
where $wgt=\beta_{o1,o2}\in\R$ indicates how likely the objects
$o1,o2$ are coreferent (with 0 being neutral). The output of a coref
task is a relation $R(o1,o2)$ that indicates which pairs of objects
are coreferent -- $R$ is an equivalence relation, i.e., satisfying
reflexivity, symmetry, and transitivity.  Assuming that
$\beta_{o1,o2}=0$ if $(o1,o2)$ is not in the key set of the relation
$B$, then each valid $R$ incurs a cost (called \emph{disagreement
cost})
\[
\costcoref(R) = \sum_{\substack{o1,o2: (o1,o2)\notin R \\ \mbox{ and } \beta_{o1,o2}>0}} |\beta_{o1,o2}|
+ \sum_{\substack{o1,o2: (o1,o2)\in R \\ \mbox{ and } \beta_{o1,o2}<0}} |\beta_{o1,o2}|.
\]
The goal of coref is to find a relation with the minimum cost:
\[
R^* = \arg\min_{R} \costcoref(R).
\]

\noindent
Coreference resolution is a well-studied
problem~\cite{fellegi1969theory,arasu2009large}. The
underlying inference problem is $\mathsf{NP}$-hard in almost all
variants. As a result, there is a literature on approximation
techniques (e.g., {\em correlation clustering}~\cite{ailon2008aggregating,arasu2009large}).
\felix implements these algorithms for coreference tasks.
In Figure~\ref{fig:mlns}, $F_1$ through $F_5$ consist of a coref task
for the relation $\rel{pCoref}$. $F_1$ through $F_3$ encode the
reflexivity, symmetry, and transitivity properties of $\rel{pCoref}$,
and $\rel{F_4}$ and $\rel{F_5}$ essentially define the weights on
the edges (similar to Arasu~\cite{arasu2009large}) from which \felix
constructs the relation $B$ (via DMOs).

\eat{
\noindent
\textbf{Labeling} A common subtask in text application is to label a
sequence of tokens in a document. Here, we label each phrase in the
document with winner (W), loser (L), or other (O). A simplified program
is the following:\\

{\small
\begin{tabular}{@{\hspace{-10pt}}ll@{\hspace{1pt}}l}
$\infty$ & $\rel{label}(d,p, l1), \rel{label}(d,p, l2) => l1=l2$
& $(\gamma_{2.2.1})$  \\
10 & $\rel{next}(d,p,p'), \rel{token}(p',\text{`win'})  => \rel{label}(d,p, W)$ & $(\gamma_{2.2.2})$\\
10 & $\rel{next}(d,p,p'), \rel{token}(p',\text{`loss'}) => \rel{label}(d,p, L)$
& $(\gamma_{2.2.3})$  \\
1  & $\rel{label}(d,p1, W), \rel{next}(d, p1, p2) => !\rel{label}(d,p2, W)$ & $(\gamma_{2.2.4})$  \\
\end{tabular}
}\\[4pt]

\noindent
The first rule $(\gamma_{2.2.1})$ indicates that every phrase $(p)$ in
every document $(d)$ should be labeled with at most a single label $l$
(a key constraint in every possible world). The second rule says that if
one phrase is followed by a token {\it `win'}, it is more likely to be
labeled as a winner (W). (Here $\rel{next(d,p,p')}$ means that phrase
$p'$ is the immediate successor of $p'$ in document $d$). The fourth
rule says that if a phrase is labeled W, it is less likely for the
next phrase to be also labeled  W.

We show in the full version of this article that these rules define an
instance of the same inference problem as a Conditional Random Field
(CRF)~\cite{lafferty2001conditional}. This is a significant win as
there are efficient dynamic programming-based algorithms that can
solve both MAP and marginal inference of
CRFs~\cite{lafferty2001conditional}.  \felix implements these
algorithms.\\

\noindent
\textbf{Classification} Another subtask in text applications is
classification.  The example here is to classify each team as a winner
($\rel{winner}$) of a fixed game (we omit the logic specifying the
game for clarity). A program may use the following rules for this
subtask:\\

{\small
\begin{tabular}{ll}
10 & $\rel{label}(p, W), \rel{referTo}(p, team) => \rel{winner}(team)$\\
10 & $\rel{label}(p, L), \rel{referTo}(p, team) => \rel{!winner}(team)$
\end{tabular}
}

\noindent
where $\rel{label}(p,l)$ is the result of labeling in the previous
example, $\rel{referTo}(phrase, team)$ maps each phrase to the team
entity it may refer to, and $\rel{winner}(t)$ says that team $t$ was
the winner of a fixed game.

These rules define a classifier, which can then be implemented using
efficient physical implementations, e.g., a logistic regressor. Thus,
we could compute the exact probability of $\rel{winner}(team)$ for
each team $team$ using simple SQL aggregates (since inference for
logistic regression is simply a weighted sum of features followed by a
comparison). On the other hand, unaware of this subtask, a monolithic
\mln system would run sample-based inference algorithms that produce
only approximate answers.\\

\noindent
\textbf{Coreference Resolution} A third common subtask is coreference
resolution, e.g., given a set of strings (say phrases in a document)
we want to decide which strings represent the same real-world entities
(say team). These tasks are ubiquitous in text processing. Consider
the following rules:\\

{\small
\begin{tabular}{ll}
$\infty$ & $\rel{coRef}(p1, p2), \rel{coRef}(p2, p3) => \rel{coRef}(p1, p3)$\\
$\infty$ & $\rel{coRef}(p1, p2), => \rel{coRef}(p2, p1)$\\
$\infty$ & $\rel{coRef}(p1, p1)$\\
$5$      & $\rel{inSameDoc}(p1, p2), \rel{subString}(p1,p2) => \rel{coRef}(p1, p2)$
\end{tabular}
}

\noindent
where $\rel{inSameDoc}(p1,p2)$ means $p1$ and $p2$ appear in a same
document, $\rel{subString}(p1,p2)$ means $p1$ has $p2$ as a
sub-string, and $\rel{coRef}(p1,p2)$ is the coreference relation.  The
first three rules declare that the coreference relation is transitive,
symmetric, and reflexive. The fourth rule says that phrases in the
same document tend to refer to the same entity if one string is a
sub-string of the other (e.g., {\it `Green Bay'} and {\it `Green Bay
Packer'}). A real coreference application would likely have many such
rules of varying weight.
} 

\subsection{Optimizing Data Movement Operators}
\label{sec:tradeoff}
\label{sec:optim}

Recall that data are passed between tasks and the RDBMS
via data movement operators (DMOs). 
While the statistical algorithm inside a task
may be very efficient (Section~\ref{sec:specoper}), DMO evaluation
could be a major scalability bottleneck. An important goal of \felix's
optimization stage is to decide whether and how to materialize DMOs.
For example, a baseline approach would be to materialize all DMOs. 
While this is a reasonable approach
when a task repeatedly queries a DMO with the same parameters, 
in some cases, the result may be so large that an eager
materialization strategy would exhaust available disk space. For
example, on an Enron dataset, materializing the following DMO would
require over 1TB of disk space:

\vspace{-8pt}
{\small
\begin{eqnarray*}
\rel{DMO}^\mathsf{bb}(x,y) &\la& \rel{mention}(x,name1),\ \rel{mention}(y,name2), \\ \vspace{-3mm}
 && \rel{mayref}(name1,z),\ \rel{mayref}(name2,z).
\end{eqnarray*}
}
\vspace{-8pt}

\noindent
Moreover, some
specialized tasks may inspect only a small fraction of their
search space and so such eager materialization is inefficient. For
example, one implementation of the coref task is a stochastic
algorithm that examines data items roughly linear in the number of nodes (even
though the input to coref contains a quadratic number of pairs of
nodes)~\cite{arasu2009large}.
In such cases, it seems more reasonable to simply declare the DMO as a regular database 
view (or prepared statement) that is to be evaluated lazily during execution.

\felix is, however, not confined to fully eager or fully lazy. In
\felix, we have found that intermediate points (e.g., materializing a
subquery of a DMO $Q$) can have dramatic speed 
improvements (see Section~\ref{sec:cost:material}).
To choose among materialization strategies, \felix takes hints from
the tasks:
\felix allows a task to expose its access patterns,
including both an adornment $Q^\alpha$ (see Section~\ref{sec:arch:optimization}) 
and an estimated number
of accesses $t$ on $Q$. (Operationally $t$ could be
a Java function or SQL query to be evaluated against the base relations of $Q$.)
Those parameters together with the cost-estimation facility of
the underlying RDBMS (here, PostgreSQL) enable a System-R-style cost-based
optimizer of \felix that explores all possible materialization strategies
using the following cost model.

\paragraph*{Felix Cost Model}
To define our cost model, we introduce some notation. Let $Q^\alpha(\bar
x)\la g_1,g_2,\ldots,g_k$ be a DMO.
Let $G=\{g_i|1\le i\le k\}$ be the set of
subgoals of $Q$.  Let $\mathcal{G}=\{G_1,\ldots,G_m\}$ be a partition of $G$;
i.e., $G_j\subseteq G$, $G_i\cap G_j=\emptyset$ for all $i\ne j$, and
$\bigcup G_j = G$.
Intuitively, a partition represents a
possible materialization strategy: each element of the partition
represents a query (or simply a relation) that \felix is considering
materializing. That is, the case of one $G_i=G$ corresponds to a fully
eager strategy. The case where all $G_i$ are singleton sets corresponds
to a lazy strategy.

More precisely, define $Q_j(\bar x_j) \la G_j$ where $\bar x_j$ is the
set of variables in $G_j$ shared with ${\bar x}$ or any other $G_i$
for $i\neq j$. Then, we can implement the DMO with a regular
database view $Q'(\bar x)\la Q_1,\ldots,Q_m$. Let $t$
be the total number of accesses on $Q'$ performed
by the statistical task. We model the
execution cost of a materialization strategy as:
\[
\textrm{ExecCost}(Q',t) = t\cdot \textrm{Inc}_{\alpha}(Q') + \sum_{i=1}^m \textrm{Mat}(Q_i)
\]
\noindent
$\textrm{Mat}(Q_i)$ is the cost of eagerly materializing $Q_i$
and $\textrm{Inc}_{\alpha}(Q')$ is the estimated cost of each
query to $Q'$ with adornment $\alpha$.

A significant
implementation detail is that since the subgoals in $Q'$ are not
actually materialized, we cannot directly ask PostgreSQL for the
incremental cost $\textrm{Inc}_{\alpha}(Q')$.\footnote{{\scriptsize
    PostgreSQL does not fully support ``what-if'' queries, although
    other RDBMSs do, e.g., for indexing tuning.}}  In our prototype
version of \felix, we implement a simple approximation of PostgreSQL's
optimizer (that assumes incremental plans use only index-nested-loop
joins), and so our results should be taken as a lower bound on the
performance gains that are possible when materializing one or more subqueries.
\rev{We provide more details on this approximation in Section~\ref{app:cost-estimation}.} \edit{add}
Although the number of possible plans is exponential in the
size of the largest rule in an input Markov Logic program, in our
applications the individual rules are small. Thus, we can estimate
the cost of each alternative, and we pick the one with the lowest
$\mathrm{ExecCost}$.

\subsection{Automatic Compilation}
\label{sec:compilation}
\begin{table}
\centering \small
\begin{tabular}{lll}
  \hline\noalign{\vspace{1pt}}
  \textbf{Properties} & \textbf{Symbol} & \textbf{Example  }\\
  \hline\noalign{\vspace{1pt}}
  Reflexive & REF & $p(x,y) \implies p(x,x)$\\
  Symmetric  & SYM & $p(x,y) \implies p(y,x)$  \\
  Transitive  & TRN  & $p(x,y), p(y,z) \implies p(x,z)$  \\
  Key & KEY & $p(x,y),p(x,z)\implies y=z$  \\
  \noalign{\vspace{1pt}}\hline\noalign{\vspace{1pt}}
  Not Recursive           & NoREC & Can be defined w/o Recursion.\\
  Tree Recursive & TrREC &   See Equation~\ref{eq:po}\\
\hline
\end{tabular}
\caption{Properties assigned to predicates by the \felix
  compiler. KEY refers to
  a non-trivial key. Recursive properties are derived from all rules;
  the other properties are derived from hard rules.}
\label{tab:rule-properties}
\end{table}

\begin{table}
\centering
\small
\begin{tabular}{ll}
  \hline\noalign{\vspace{1pt}}
  \textbf{Task} & \textbf{Required Properties} \\
  \hline\noalign{\vspace{1pt}}
  Simple Classification  & KEY, NoREC\\
  Correlated Classification & KEY, TrREC\\
  Coref & REF, SYM, TRN\\
  Generic MLN Inference & none\\
  \hline
\end{tabular}
\caption{Tasks and their required properties.}
\label{tab:op-rule}
\end{table}
So far we have assumed that the mappings between \mln rules, tasks, and
algorithms
are all specified by the user. However, ideally a compiler should be able
to automatically recognize
subprograms that could be processed as specialized tasks.
In this section we describe \rev{a best-effort compiler that is able to
automatically detect the presence of classification and coref tasks}
\edit{was ``such a compiler''}.
To decompose an \mln program $\Gamma$ into tasks, \felix uses a two-step
approach. \felix's first step is to annotate each query predicate $p$ with a
set of {\em properties}. An example property is whether or not $p$ is
symmetric.  Table~\ref{tab:rule-properties} lists of the set of
properties that \felix attempts to discover with their
definitions; NoREC and TrREC are rule-specific.
Once the properties are
found, \felix uses Table~\ref{tab:op-rule} to list all possible
options for a predicate.\eat{\footnote{\scriptsize The proofs of correctness of
these rules are straightforward.}}
When there are multiple options, the current prototype of \felix simply chooses
the first task to appear in the following order:
(Coref, Simple Classification, Correlated Classification, Generic). 
This order intuitively favors more specific tasks.
To compile an \mln into tasks, \felix greedily applies the above
procedure to split a subset of rules into a task, and then iterates
until all rules have been consumed.
As shown below,  property detection is non-trivial 
as the predicates are the output of
SQL queries (or formally, datalog programs).
Therefore, \felix implements a best-effort compiler using a
set of syntactic patterns; this compiler is sound but not complete.
It is interesting future work to design
more sophisticated compilers for \felix.

\paragraph*{Detecting Properties}
The most technically difficult part of the compiler is determining the
properties of the predicates (cf.~\cite{Fan:VLDB2008}). 
There are two types of properties that
\felix looks for: (1) schema-like properties of any possible worlds
that satisfy $\Gamma$ and (2) graphical structures of correlations
between tuples. For both types of properties, the challenge
is that we must infer these properties from the underlying rules
applied to an infinite number of databases.\footnote{\scriptsize As is
  standard in database theory~\cite{abiteboul1995foundations}, to
  model the fact the query compiler runs without examining the data,
  we consider the domain of the attributes to be unbounded. If the
  domain of each attribute is known then, all of the above properties
  are decidable by the trivial algorithm that enumerates all (finitely
  many) instances.}
For example, SYM is the property:
\begin{quote}
``{\it for any database $I$ that satisfies $\Gamma$, does the sentence
    $\forall x,y. \mathrm{pCoref}(x,y) \iff \mathrm{pCoref}(y,x)$
    hold?}''.
\end{quote}
Since $I$ comes from an infinite set, it is not immediately clear that
the property is even decidable. Indeed, $\text{REF}$ and $\text{SYM}$ 
are not decidable for Markov Logic
programs.

Although the set of properties in Table~\ref{tab:rule-properties} is
motivated by considerations from statistical inference, the first four properties
depend {\it only on the hard rules in $\Gamma$}, i.e., the constraints
and (SQL-like) data transformations in the program. Let
$\Gamma_{\infty}$ be the set of rules in $\Gamma$ that have infinite
weight. We consider the case when $\Gamma_{\infty}$ is written as a
datalog program.

\begin{theorem}
Given a datalog program $\Gamma_{\infty}$, a predicate $p$, and a
property $\theta$ deciding if for all input databases $p$ has property
$\theta$ is undecidable if $\theta \in
\set{\text{REF},\text{SYM}}$.
\end{theorem}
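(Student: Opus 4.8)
The plan is to prove both statements by reduction from the containment problem for datalog programs, which is undecidable~\cite{abiteboul1995foundations,gmsv93:surajits}. Concretely, I would start from two datalog programs $P_1$ and $P_2$ (with disjoint IDB predicates) computing unary output predicates $h_1$ and $h_2$ over a common EDB schema $\mathbf{E}$; deciding whether $h_1^D \subseteq h_2^D$ for \emph{every} database $D$ is undecidable. For each $\theta \in \set{\text{REF},\text{SYM}}$ the goal is to build, computably, a datalog program $\Gamma_{\infty}$ and a binary predicate $p$ so that $p$ has property $\theta$ on all databases if and only if $h_1 \subseteq h_2$ on all databases. Since the transformation is a many-one reduction, decidability of the property problem would yield decidability of containment, a contradiction.

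For $\text{REF}$ I would add to $P_1 \cup P_2$ a fresh binary EDB predicate $E$ (not in $\mathbf{E}$, so it leaves $h_1$ and $h_2$ unchanged) together with the two rules $p(x,y) \la h_1(x), E(x,y)$ and $p(x,x) \la h_2(x)$. The second rule produces only diagonal tuples, so any off-diagonal $p(x,y)$ must come from the first and therefore carries $h_1(x)$. If containment holds, $h_1(x)$ forces $h_2(x)$, which fires the diagonal rule and yields $p(x,x)$, so $p$ is reflexive. Conversely, given a database witnessing $h_1(a)\wedge\neg h_2(a)$, I set $E=\set{(a,b)}$ with $b\neq a$ (adding an isolated element $b$ if necessary): then $p(a,b)$ holds while $p(a,a)$ cannot be derived, violating $\text{REF}$. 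Here $E$ acts as an off-diagonal ``probe'' whose only admissible diagonal partner is governed by $h_2$.

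For $\text{SYM}$ the construction is analogous, but the ``safe'' contributions must be made symmetric by design so that they can never create a spurious asymmetry. I would use
\begin{align*}
p(x,y) &\la h_1(x), E(x,y)\\
p(x,y) &\la h_2(x), E(x,y)\\
p(y,x) &\la h_2(x), E(x,y).
\end{align*}
The last two rules contribute exactly the symmetric relation $[h_2(x)\wedge E(x,y)] \vee [h_2(y)\wedge E(y,x)]$, while the first contributes the possibly asymmetric term $h_1(x)\wedge E(x,y)$. Because the $h_2$-part is symmetric, the only way to get $p(x,y)\wedge\neg p(y,x)$ is through $h_1(x)\wedge E(x,y)$ with the symmetric part absent; and if containment holds, $h_1(x)$ forces $h_2(x)$, re-supplying the symmetric part and restoring $p(y,x)$. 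A single directed edge $E=\set{(a,b)}$ again converts a containment violation $h_1(a)\wedge\neg h_2(a)$ into a symmetry violation, and no violation arises when containment holds.

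I expect the main obstacle to be \emph{faithfulness} of the encoding rather than the reduction skeleton: I must guarantee that $p$ can fail $\theta$ only through the intended mechanism (a genuine witness to $h_1\not\subseteq h_2$) and never as an artifact of how the gadget rules interact over the least fixpoint. This is precisely what the diagonal-only rule (for $\text{REF}$) and the symmetrized pair of rules (for $\text{SYM}$) are designed to enforce, and checking both directions reduces to the case analysis sketched above. A secondary point to nail down is that the source containment problem is undecidable already when the output predicates $h_1,h_2$ are unary (auxiliary IDBs may still have higher arity); this holds for the standard constructions and is what lets $p$ stay binary, as required by the definitions of $\text{REF}$ and $\text{SYM}$.
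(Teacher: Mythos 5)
Your reduction is correct, and it rests on the same ultimate source of hardness as the paper's argument -- the undecidability of containment/equivalence for datalog programs whose goal predicates are unary but whose auxiliary IDB predicates may be recursive and of higher arity -- yet the gadget is genuinely different. The paper uses a single Cartesian-product rule $Q(x,y) \leftarrow Q_1(x), Q_2(y)$ and observes that such a product is reflexive/symmetric only when $Q_1 \equiv Q_2$ (or one side is empty); this forces it to assume satisfiability of $Q_1,Q_2$ and to invoke a fresh-constants argument to exclude vacuous satisfaction on instances where one side happens to be empty. Your probe-edge construction reduces directly from containment $h_1 \subseteq h_2$: the fresh EDB predicate $E$ lets you manufacture exactly one off-diagonal (resp.\ asymmetric) tuple whose repair is gated on $h_2$, so the emptiness corner case disappears and the conditional form of REF in Table~\ref{tab:rule-properties} ($p(x,y) \implies p(x,x)$) is handled head-on rather than via the product's diagonal. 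The case analysis you give is sound: the gadget rules define a fresh IDB $p$ that does not feed back into $P_1 \cup P_2$, adding $E$-facts cannot perturb $h_1,h_2$, and in the SYM gadget the two $h_2$-rules contribute a symmetric set by construction, so the only possible asymmetry is the intended $h_1$-witness. The one point you rightly flag -- that containment is undecidable already for unary goal predicates -- is exactly the "small technical wrinkle" the paper itself leans on (monadic datalog containment is decidable only when \emph{all} IDBs are monadic), so your proof needs nothing beyond what the paper's proof already assumes. Net effect: the paper's gadget is more uniform (one construction serves both properties), while yours is more robust (no satisfiability side conditions, and REF and SYM are each certified by a construction tailored to its definition).
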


\eat{
\begin{proof}[Sketch]
If there is a single rule with $Q$ such as $Q(x,y) <= Q1(x),Q2(y)$
then $Q$ is $\set{\text{REF},\text{SYM}}$ if and only if $Q1$ or $Q2$
is empty or $Q1 \equiv Q2$. We assume that $Q1$ and $Q2$ are
satisfiable. If there is an instance where $Q1(a)$ and $Q2$ is false
for all values. Then there is another world (with all fresh constants)
where $Q2$ is true (and does not return $a$). Thus, to check
$\text{REF}$ and $\text{SYM}$ for $Q$, we decide need to decide
equivalence of datalog queries. Equivalence of datalog queries is
undecidable~\cite[ch.~12]{abiteboul1995foundations}. Since containment
and boundedness for monadic datalog queries is decidable, a small
technical wrinkle is that while $Q1$ and $Q2$ are arity one (monadic)
their bodies may contain other recursive (higher arity)
predicates.
\end{proof}
}

The above result is not surprising as datalog is a powerful language
and containment is undecidable~\cite[ch.~12]{abiteboul1995foundations}
(the proof reduces from containment). Moreover, the compiler
is related to {\em implication problems} studied by Abiteboul and
Hull (who also establish that generalizations of KEY
and TRN problem are
undecidable~\cite{DBLP:conf/sigmod/AbiteboulH88}). $\text{NoREC}$ is
the negation of the {\em boundedness
  problem}~\cite{gmsv93:surajits:paper} which is undecidable.

In many cases, recursion is not used in $\Gamma_{\infty}$ (e.g.,
$\Gamma_{\infty}$ may consist of standard SQL queries that transform
the data), and so a natural restriction is to consider
$\Gamma_{\infty}$ without recursion, i.e., as a union of conjunctive
queries.

\begin{theorem}
Given a union of conjunctive queries $\Gamma_{\infty}$, deciding if
for all input databases that satisfy $\Gamma_{\infty}$ the query
predicate $p$ has property $\theta$ where $\theta \in \set{\text{REF},
  \text{SYM}}$ (Table~\ref{tab:rule-properties}) is
decidable. Furthermore, the problem is $\Pi_{2}
\mathsf{P}$-Complete. KEY and TRN are trivially false. NoRec is
trivially true.
\label{thm:comp}
\end{theorem}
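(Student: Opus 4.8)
The plan is to pin both interesting properties to the decidability and exact complexity of \emph{containment of unions of conjunctive queries} (UCQ containment), which is known to be decidable and $\Pi_2\mathsf{P}$-complete~\cite{abiteboul1995foundations}. I read REF and SYM in the datalog sense that is consistent with the sketch of the previous theorem: $p$ is the (binary) relation \emph{computed} by the UCQ $\Gamma_{\infty}$, and the property must hold of that output on every base instance $I$. Write $Q$ for the binary UCQ defining $p$. The strategy is: (i) reduce REF and SYM to UCQ containment for the upper bound, and (ii) reduce UCQ containment back to each of REF and SYM for the matching lower bound.

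For membership, the key observation is that the universally quantified property ``holds on every $I$'' is exactly a containment. For SYM, let $Q^{-1}$ be the UCQ obtained by swapping the two head variables of every disjunct of $Q$; then $Q(I)$ is symmetric for all $I$ iff $Q(I)\subseteq Q^{-1}(I)$ for all $I$, i.e.\ iff $Q\subseteq Q^{-1}$ (which, since inversion is an order-preserving involution, is equivalent to $Q\equiv Q^{-1}$). For REF, let $\pi_1(Q)(x)\la Q(x,y)$ project away the second head variable and let $\mathrm{diag}(Q)(x)\la Q(x,x)$ equate the two head variables in each disjunct; then $p(x,y)\Rightarrow p(x,x)$ holds on every $I$ iff $\pi_1(Q)\subseteq\mathrm{diag}(Q)$. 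Each of $Q^{-1}$, $\pi_1(Q)$, $\mathrm{diag}(Q)$ is a UCQ computable in polynomial time, so REF and SYM inherit decidability and $\Pi_2\mathsf{P}$ membership from UCQ containment.

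For hardness I reduce UCQ containment. Given Boolean UCQs $A$ and $B$ over the base schema, fix two distinct constants $a,b$. For REF, let $\Gamma_{\infty}$ contain a rule $p(a,b)\la\beta$ for each disjunct $\beta$ of $A$ and a rule $p(a,a)\la\gamma$ for each disjunct $\gamma$ of $B$; then on every $I$ the tuple $(a,b)$ is produced exactly when $A$ holds and $(a,a)$ exactly when $B$ holds, so $p$ satisfies $p(x,y)\Rightarrow p(x,x)$ on all $I$ iff $A\subseteq B$. For SYM, use $p(a,b)\la\beta$ for the disjuncts of $A$ together with $p(b,a)\la\beta$ and $p(b,a)\la\gamma$ for the disjuncts of $A$ and $B$; then $(a,b)\in p$ iff $A$ and $(b,a)\in p$ iff $A\vee B$, so $p$ is symmetric on all $I$ iff $B\subseteq A$. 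Both gadgets are polynomial-time UCQs, giving $\Pi_2\mathsf{P}$-hardness and hence completeness. I expect the \textbf{main obstacle} to be exactly these gadgets: SYM is an intrinsically two-sided condition, so the work is in using the constants $a,b$ to place the two encoded queries into disjoint cells of the binary relation, thereby forcing a \emph{one-directional} containment out of a symmetric-looking property; a second subtlety is justifying the polynomial bound on the number of alternatives and being precise that the $\forall I$ quantifier collapses to containment rather than to equivalence for REF.

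The remaining claims are the easy cases. NoREC is immediate, since $\Gamma_{\infty}$ is a UCQ and thus $p$ is by definition expressible without recursion. KEY (a non-trivial key) fails by a pumping argument: unless some disjunct degenerately forces the second attribute to a constant or to equal the first, one can duplicate the witnessing subinstance on fresh values to produce $(a,b_1)$ and $(a,b_2)$ with $b_1\neq b_2$, so no non-trivial key survives on all instances. TRN reduces likewise to an elementary check on the disjuncts, and I would omit the routine details.
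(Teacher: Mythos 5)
Your proposal is correct in its overall architecture and, for the upper bound, essentially coincides with the paper's argument: the paper also turns each property into a ``testing rule,'' unfolds the (syntactically recursive) symmetry test into a non-recursive query, and reduces the universally quantified statement to containment of unions of conjunctive queries, which is where the $\Pi_2\mathsf{P}$ membership comes from. Your $Q\subseteq Q^{-1}$ and $\pi_1(Q)\subseteq\mathrm{diag}(Q)$ formulations are the same reduction stated more crisply, and your reading of REF as $p(x,y)\Rightarrow p(x,x)$ is actually more faithful to Table~\ref{tab:rule-properties} than the paper's appendix, which tests reflexivity over the whole domain. Where you genuinely diverge is the lower bound: the paper observes that $q(x,y)\la q_1(x),q_2(y)$ is reflexive (or symmetric) iff $q_1\equiv q_2$, reducing \emph{equivalence} of queries to the property test via a product construction; you instead reduce \emph{containment} of Boolean queries by using two constants $a,b$ to park the two encoded queries in distinct cells of $p$, which forces a one-directional containment out of the symmetric-looking condition. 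Both gadgets work; yours has the mild extra assumption that constants may appear in rule heads, while the paper's needs only variables but must separately dispose of the ``one side empty'' degenerate case.

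One fact in your hardness argument needs correcting. Containment of plain unions of conjunctive queries is NP-complete (Sagiv--Yannakakis), not $\Pi_2\mathsf{P}$-complete; the $\Pi_2\mathsf{P}$ lower bound for containment requires conjunctive queries \emph{with inequality constraints}, which is exactly the extension the paper invokes (``Since our language allows us to express conjunctive queries with inequality constraints, this established $\Pi_2\mathsf{P}$ hardness''). As written, your reduction from UCQ containment only transfers NP-hardness. The fix is immediate --- run your gadget with source queries $A,B$ drawn from the class of CQs with inequalities, which the paper's language admits --- but the claim that UCQ containment itself is $\Pi_2\mathsf{P}$-complete is wrong and should not be cited as the source of hardness. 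The remaining claims (NoREC trivially true, KEY and TRN trivially false) are handled at about the same level of detail as the paper, which simply asserts them.
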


\eat{
\begin{proof}[Sketch]
The key observation is that the only symmetric or reflexive sets $S
\subseteq D \times D$ that can be written as $S = X \times Y$ must be
when either $X$, $Y$ is empty or if $X = Y$. Thus, $q(x,y) \mathrm{:-}
q_1(x), q_2(y)$ is reflexive or symmetric if and only if $q_1 \equiv
q_2$. Thus, we have reduced reflexive and symmetric to containment and
equivalence. Since our language allows us to express conjunctive
queries with inequality constraints, this established $\Pi_{2}
\mathsf{P}$ hardness~\cite{containment-jacm88}. Membership in $\Pi_{2}
\mathsf{P}$ is established using a standard technique: without
recursion our language has a {\em small model property}, which means
that we only need to examine instances whose size is a polynomial in
the number of free variables in the query.
\end{proof}
}

\eat{
This result is an example of one future direction: using database
constraints, e.g., key constraints and inclusion constraints, to
improve statistical inference algorithms.
}

Still, \felix must annotate predicates with properties. To cope with the
undecidability and intractability of finding out compiler annotations,
\felix uses a set of sound (but not complete) rules that are described
by simple patterns. For example, we can conclude that a predicate
$R$ is transitive if program contains syntactically the rule
$R(x,y),R(y,z) => R(x,z)$ with weight $\infty$. 
\eat{A complete list of
such expressions is in the full version of this paper.}\\

\noindent
\textit{Ground Structure} The second type of properties that \felix
considers characterize the graphical structure of the {\em ground
  database} (in turn, this structure describes the correlations that
must be accounted for in the inference process). We assume that
$\Gamma$ is written as a datalog program (with stratified negation). The
ground database is a function of both soft and hard rules in the input
program, and so we consider both types of rules here. \felix's
compiler attempts to deduce a special case of recursion that is
motivated by (tree-structured) conditional random fields that we call
TrREC. Suppose that there is a single recursive rule that contains $p$
in the body and the head is of the form:
\begin{equation}
p(x,y), T(\underline{y},z) => p(x,z)
\label{eq:po}
\end{equation}

\noindent
where the first attribute of $T$ is a key and the transitive closure
of $T$ is a partial order. In the ground database, $p$ will be
``tree-structured''. MAP and marginal inference for such rules are in
\textsf{P}-time~\cite{Wainwright:2008:GME:1523420, sen09-PrDB}. \felix
has a regular expression to deduce this property.

\eat{
We illustrate the compilation process by example.

\begin{example}
Consider the labeling example in Section 2.2, the relation
$\rel{label}(phrase,label)$ is labeled as KEY (from
$(\gamma_{2.1.1})$) and we get TrREC from deducing that $(d,p)$ and
$(d,p')$ are both candidate keys of $\rel{next}(d,p,p')$ (and there are
no other syntactically recursive rules for $\rel{next})$. So,
according to Table \ref{tab:op-rule}, $\rel{label}$ can be solved by
the Labeling task or the generic MLN inference algorithm. \felix
chooses the Labeling task as it is more specific than \tuffy.
\end{example}
}


\section{Experiments}
\label{sec:experiments}
Although \mln inference has a wide range of applications,
we focus on knowledge-base construction tasks.
In particular, we use \felix 
to implement the TAC-KBP
challenge;
\felix is able to scale to the 1.8M-document corpus and 
produce results with state-of-the-art quality.
In contrast, prior (monolithic) approaches to \mln inference crash
even on a subset of KBP that is orders of magnitude smaller.

In Section~\ref{sec:exp-overall}, we compare the overall scalability
and quality of \felix with prior \mln inference approaches
on four datasets (including KBP).
We show that, when prior \mln systems run, \felix is able to
produce similar results but more efficiently; when prior \mln
systems fail to scale, \felix can still generate high-quality results.
In Sections~\ref{sec:exp:lagrangian}, 
we demonstrate that the message-passing scheme in \felix can effectively reconcile
conflicting predictions and has stable convergence behaviors.
In Section~\ref{sec:exp:tasks}, 
we show that specialized tasks and algorithms are critical
for \felix's high performance and scalability.
In Section~\ref{sec:cost:material}, we validate that the cost-based DMO
optimization is crucial to \felix's efficiency.

\begin{table}[!t]\centering\small
  \begin{tabular}{lrr}
      \hline\noalign{\vspace{1pt}}
      & \textbf{\#documents} & \textbf{\#mentions} \\
      \noalign{\vspace{1pt}}\hline\noalign{\vspace{1pt}}
    \textbf{KBP} &  1.8M & 110M \\
    \textbf{Enron} & 225K & 2.5M  \\
    \textbf{DBLife} &  22K & 700K  \\
    \textbf{NFL} &  1.1K & 100K \\
    \eat{\textbf{Enron-R} & 680 & 486  \\}
    \hline
  \end{tabular}
  \caption{Statistics of input data. Note that \mln inference
generates much larger intermediate data.}\label{tab:data-stats}
\end{table}

\eat{
\begin{table}[thb]\centering\small
  \begin{tabular}{|l||r|r|r|}
      \hline
      & \textbf{\#bytes} & \textbf{\#documents} & \textbf{\#mentions} \\
      \hline
    \textbf{KBP} & 8.7GB & 1.8M & 110M \\
    \textbf{Enron} & 700MB & 225K & 2.5M  \\
    \textbf{DBLife} & 300MB & 22K & 700K  \\
    \textbf{NFL} & 20MB & 1.1K & 100K \\
    \eat{\textbf{Enron-R} & 1.6MB & 680 & 486  \\}
    \hline
  \end{tabular}
  \caption{Statistics of input data.}\label{tab:data-stats}
\end{table}
}

\paragraph*{Datasets and Applications}
Table~\ref{tab:data-stats} lists some statistics about the four
datasets that we use for experiments:
(1) \textbf{KBP} is a 1.8M-document corpus from TAC-KBP; the task
is to perform two related tasks: a) {\em entity linking}: extract all
entity mentions and map them to entries in Wikipedia, and b) {\em slot
filling}: determine (tens of types of) relationships between entities. There is also a set of
ground truths over a 2K-document subset (call it \textbf{KBP-R}) 
that we use for quality assessment.
(2) \textbf{NFL}, where the task is to extract football game
results (winners and losers) from sports news articles.
(3) \textbf{Enron}, where the task is to
identify person mentions and associated phone numbers in the Enron
email dataset. There are two versions of Enron:
{Enron}\footnote{
  \scriptsize{\url{http://bailando.sims.berkeley.edu/enron_email.html}}}
is the full dataset;
\textbf{Enron-R}\footnote{
  \scriptsize{\url{http://www.cs.cmu.edu/~einat/datasets.html}}} is a
680-email subset that we manually annotated person-phone ground
truth on.  We use Enron
for performance evaluation, and Enron-R for quality assessment.
(4) \textbf{DBLife}\footnote{
  \scriptsize{\url{http://dblife.cs.wisc.edu}}}, where the task is to
extract persons, organizations, and affiliation relationships between
them from a collection of academic webpages. For {DBLife},
we use the ACM author profile 
data\eat{\footnote{\scriptsize{\url{http://www.acm.org/membership/author_pages}}}}
as ground truth.

\paragraph*{MLN Programs}
For {KBP}, we developed \mln programs that fuse a wide array of
data sources including NLP results, Web search results, Wikipedia
links, Freebase, etc.  For performance experiments, we use our entity
linking program (which is more sophisticated than slot filling).
The \mln program on {NFL} has a conditional random field model
as a component, with some additional common-sense rules (e.g., ``a
team cannot be both a winner and a loser on the same day.'') that are
provided by another research project.  To expand our set of \mln
programs, we also create
\mlns on {Enron} and {DBLife} by adapting rules in 
state-of-the-art rule-based IE approaches~\cite{liu2010automatic,dblife2007}:
Each rule-based program is essentially equivalent to an MLN-based
program (without weights).  We simply replace the ad-hoc reasoning in
these deterministic rules by a simple statistical variant. For example, the
DBLife program in \cimple~\cite{dblife2007} says that if a person and an organization
co-occur with some regular expression context then they are
affiliated, and ranks relationships by frequency of such
co-occurrences.  In the corresponding \mln we have several rules for
several types of co-occurrences, and
ranking is by marginal probabilities.

\paragraph*{Experimental Setup}
To compare with alternate implementations of \mlns, we consider
two state-of-the-art \mln implementations: (1) \alc, the
reference implementation for \mlns~\cite{alchemy:website}, and (2) \tuffy,
an RDBMS-based implementation of \mlns~\cite{tuffy-vldb11}.
\alc is implemented in C++. \tuffy and \felix are both implemented in
Java and use PostgreSQL 9.0.4. \felix uses \tuffy as a task. Unless
otherwise specified, all experiments are run on a RHEL5 workstation
with two 2.67GHz Intel Xeon CPUs (24 total cores), 24 GB of RAM, and
over 200GB of free disk space.

\subsection{High-level Scalability and Quality}
\label{sec:exp-overall}

\begin{table}[!thb]\centering\small
  \begin{tabular}{lccc}
      \hline\noalign{\vspace{1pt}}
       Scales? &  \textbf{\felix} & \textbf{\tuffy} & \textbf{\alc}\\
      \noalign{\vspace{1pt}}\hline\noalign{\vspace{1pt}}
    \textbf{KBP}  &  Y & N &  N\\
    \textbf{NFL}  &  Y & Y &  N\\
    \textbf{Enron}  &  Y &  N & N \\
    \textbf{DBLife}  &  Y &  N & N\\
    \textbf{KBP-R}  &  Y & N &  N\\
    \textbf{Enron-R}  &  Y & Y &  N\\
    \hline
  \end{tabular}
  \caption{Scalability of various \mln systems.}
  \label{tab:mln:scale}
\end{table}

We empirically validate that \felix achieves higher scalability and
essentially identical result quality compared to
prior monolithic approaches.
To support these claims, we compare the performance and
quality of different MLN inference systems (\tuffy, \alc, and \felix)
on the datasets listed above: KBP, Enron, DBLife, and NFL.
In all cases, \felix runs its automatic compiler; parameters
(e.g., gradient step sizes, generic inference parameters)
are held constants across datasets.
\tuffy and \alc have two sequential phases in their run time: 
\emph{grounding} and \emph{search}; results are produced
only in the search phase.  A system is deemed unscalable if it fails
to produce any inference results within 6 hours.  The overall
scalability results are shown in Table~\ref{tab:mln:scale}.

\begin{figure*}[!t]
  \centering
  \includegraphics[width=1\textwidth]{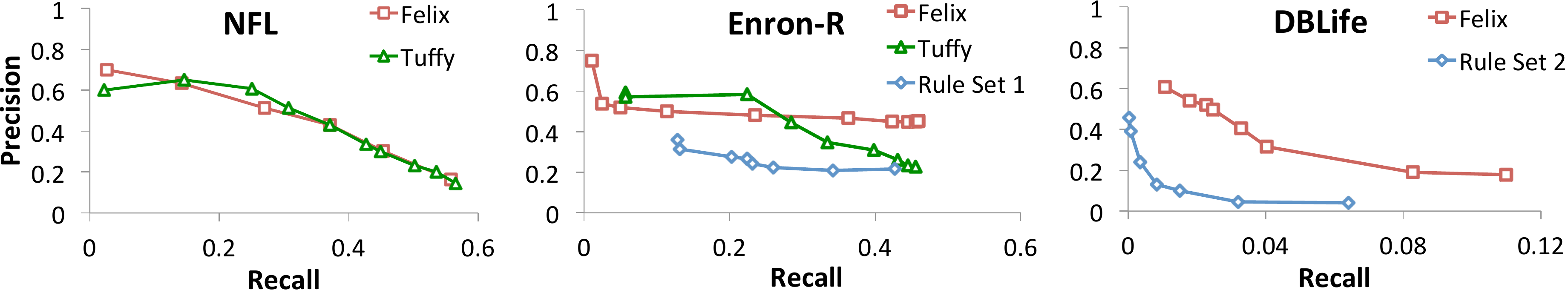} \caption{High-level
  quality results of various MLN systems. For each dataset, we plot a precision-recall
curve of each system by varying $k$ in top-k results; missing curves indicate that
a system does not scale on the corresponding dataset. }
  \label{fig:pr-all3}
\end{figure*}

\paragraph*{Quality Assessment}
We perform quality assessment on four datasets: KBP-R, NFL, Enron-R,
and DBLife.  On each dataset, we run each \mln system for 4000 seconds
with marginal inference. (After 4000 seconds, the quality of each
system has stabilized.)  For KBP-R, we convert the output to TAC's
query-answer format and compute the F1 score against the ground truth.
For the other three datasets, we draw precision-recall curves: we take
ranked lists of predictions from each system and measure
precision/recall of the top-k results while varying the number of
answers returned\footnote{Results from \mln-based systems are ranked
by marginal probabilities, results from \cimple are ranked by
frequency of occurrences, and results from rules on Enron-R are
ranked by window sizes between a person mention and a phone
number mention.}. 
The quality of each system is shown in
Figure~\ref{fig:pr-all3}\footnote{{\scriptsize The low recall on
DBLife is because the ground truth (ACM author profiles) contains many
facts absent from DBLife.}}.  System-dataset pairs that do not scale
have no curves.

\paragraph*{KBP \& NFL}
Recall that there are two tasks in KBP: entity linking and slot filling.
On both tasks, \felix is able to scale to the 1.8M documents and
after running about 5 hours on a 30-node parallel RDBMS, produce
results with state-of-the-art quality\rev{\cite{ji2010overview}}\edit{add}\footnote{\scriptsize Measured on KBP-R that
has ground truth.}: 
We achieved an F1 score 0.80 on entity linking
(human annotators' performance is 0.90), and an F1 score 0.34 on slot filling
(state-of-the-art  quality). In contrast, \tuffy and \alc
crashed even on the three orders of magnitude smaller {KBP-R} 
subset. Although also based on an RDBMS, \tuffy attempted
to generate about $10^{11}$ and $10^{14}$ tuples on KBP-R
and KBP, respectively.

To assess the quality of \felix as compared to monolithic inference,
we also run the three \mln systems on NFL.  Both \felix and \tuffy
scale on the NFL data set, and as shown in Figure~\ref{fig:pr-all3},
produce results with similar quality.  However, \felix is an order of
magnitude faster: \tuffy took about an hour to start outputting
results, whereas \felix's quality converges after only five
minutes. We validated that the reason is that \tuffy was not aware of
the linear correlation structure of a classification task in the NFL
program, and ran generic
\mln inference in an inefficient manner.

\paragraph*{Enron \& DBLife}
\rev{
To expand our test cases, we consider two more datasets -- 
Enron-R and DBLife -- to evaluate the key question we try to answer:
\emph{does \felix outperform monolithic
systems in terms of scalability and efficiency}?
From Table~\ref{tab:mln:scale}, we
see that \felix scales in cases where monolithic \mln systems do not.
On Enron-R (which contains only 680 emails), we see that when
both \felix and \tuffy scale, they achieve similar result
quality. From Figure~\ref{fig:pr-all3}, we see that even when
monolithic systems fail to scale (on DBLife), \felix is able to
produce high-quality results.}

\rev{
To understand the result quality obtained by \felix, 
we also ran rule-based information-extraction 
programs for Enron-R and DBLife following practice described in the 
literature~\cite{michelakis2009uncertainty,liu2010automatic,dblife2007}.
Recall that the \mln programs for Enron-R and DBLife
were created by augmenting the deterministic rule sets 
with statistical reasoning.}\footnote{
For Enron-R, we followed the rules described in related 
publications~\cite{michelakis2009uncertainty,liu2010automatic} .
For DBLife, we obtained the \cimple~\cite{dblife2007} system and
the DBLife dataset from the authors.
Further details can be found in Section~\ref{app:exp-quality}.
}\rev{
It should be noted that all systems can be improved with further tuning.
In particular,  the rules described in the 
literature (``Rule Set 1'' for Enron-R~\cite{michelakis2009uncertainty,liu2010automatic}
and ``Rule Set 2'' for DBLife~\cite{dblife2007}) were
not specifically optimized for high quality on the corresponding tasks. 
On the other hand, the corresponding \mln programs were generated 
in a constrained manner (as described in Section~\ref{app:exp-quality}). 
In particular, we did not leverage state-of-the-art NLP tools nor 
refine the \mln programs. With these caveats in mind,
from Figure~\ref{fig:pr-all3} we see that
(1) on Enron-R, \felix achieves higher precision
than Rule Set 1 given the same recall; 
and (2) on DBLife, \felix
achieves higher recall than Rule Set 2
(i.e., \cimple~\cite{dblife2007}) at any precision level.
This provides preliminary indication that 
statistical reasoning could help improve
the result quality of knowledge-base construction tasks, 
and that scaling up \mln inference is a promising
approach to high-quality knowledge-base construction.
Nevertheless, it is interesting future work to more deeply
investigate \emph{how} statistical reasoning contributes
to quality improvement over deterministic rules (e.g., 
Michelakis et al.~\cite{michelakis2009uncertainty}).
}

\eat{
Since \mln inference combines logical rules and statistical
reasoning, it is interesting to see whether the addition of statistical reasoning
enables \mln-based approaches to knowledge-base construction
to improve upon pure rule-based approaches.
An ideal comparison would involve heavily-tuned or industry-strength systems from 
both sides, but that would require months of engineering effort.
Absent such resources, we narrow the scope and compare the quality
of \felix versus the rule-based IE systems (namely \syst and \cimple) with rules
that were used to produce the \mln programs for Enron and DBLife.\footnote{We downloaded
\syst from \url{http://www.alphaworks.ibm.com/tech/systemt} 
(which however now redirects to another page as of Feb 2012),
and then closely followed the rules described in publications on 
\syst~\cite{michelakis2009uncertainty,liu2010automatic} 
We obtained the \cimple~\cite{dblife2007} system and
the DBLife dataset from the authors.
Further details can be found in Section~\ref{app:exp-quality}.
} Such comparisons are limited by the basic rule sets that are 
available to us, and so the quality numbers do not
represent the full potential of either \mln-based
or rule-based approaches to KBC.
Nevertheless, from Figure~\ref{fig:pr-all3} we see that the quality of \mln 
systems are significantly higher than rule-based systems,
which suggests that scaling up \mln inference could be a promising
approach to high-quality knowledge-base construction.
A more systematic comparison is interesting future work.
}

\subsection{Effectiveness of  Message Passing}
\label{sec:exp:lagrangian}

\begin{figure}[!t]
  \centering
  \includegraphics[width=0.48\textwidth]{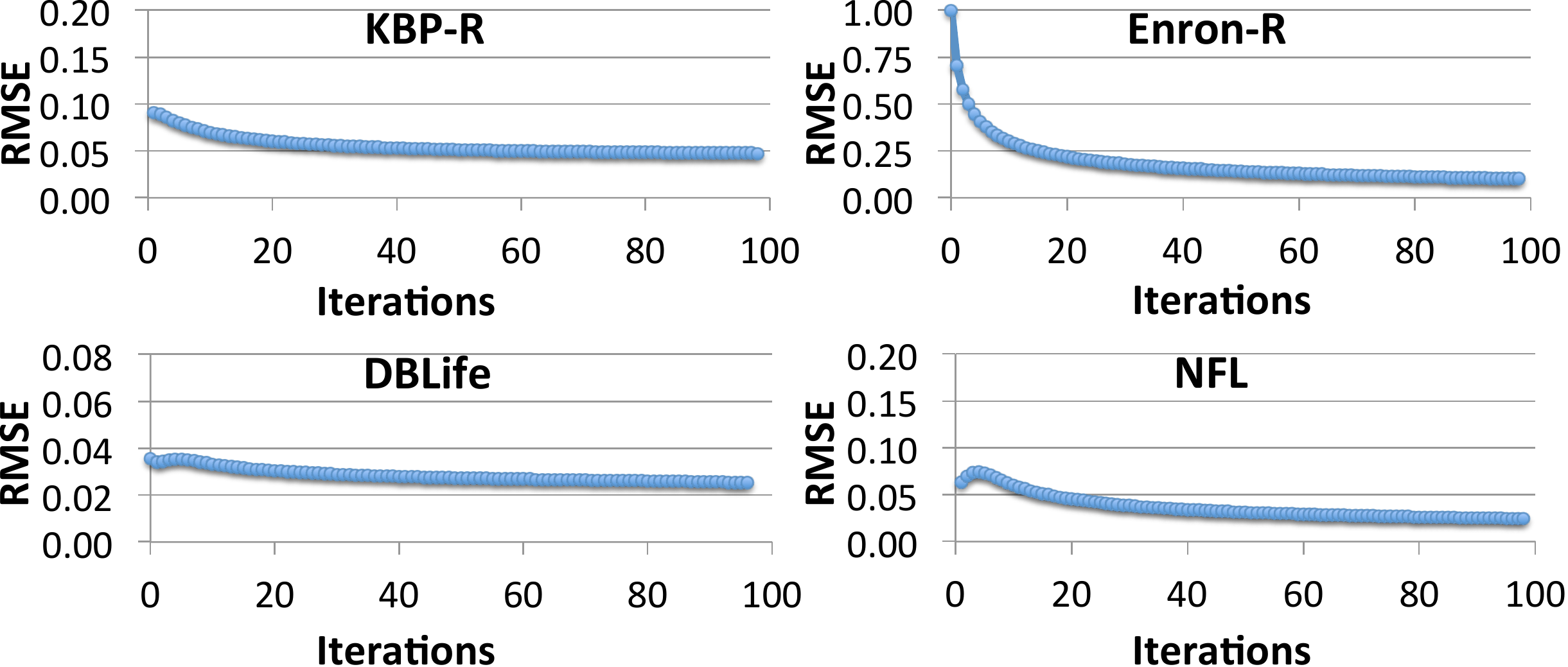} 
  \caption{The RMSE between predictions from different tasks converges stably
as \felix runs master-slave message passing.}
  \label{fig:convergence}
\end{figure}

We validate that the Lagrangian scheme in \felix can effectively
reconcile conflicting predictions between related tasks to produce
consistent output.  Recall that \felix uses master-slave message
passing to iteratively reconcile inconsistencies between different
copies of a shared relation. To validate that this scheme is
effective, we measure the difference between the marginal
probabilities reported by different copies; we plot this difference
as \felix runs 100 iterations.  Specifically, we measure the
root-mean-square-deviation (RMSE) between the marginal predictions of
shared tuples between tasks.  On each of the four datasets (i.e.,
KBP-R, Enron-R, DBLife, and NFL), we plot how the RMSE changes over
time. As shown in Figure~\ref{fig:convergence},
\felix stably reduces the RMSE on all datasets to an eventual value of below
$0.1$ -- after about 80 iterations on Enron and after the very first
iteration for the other three datasets. (As many statistical inference algorithms
are stochastic, it is expected that the RMSE does not decrease to
zero.)  This demonstrates
that \felix can effectively reconcile conflicting predictions, thereby
achieving joint inference.

\mln inference is NP-hard, 
and so it is not always the case that \felix converges to the exact
optimal solution of the original program.  However, as we validated in
the previous section, empirically \felix converges to close
approximations of monolithic inference results (only more
efficiently).

\eat{
\begin{figure*}[!t]
  \centering
  \includegraphics[width=1\textwidth]{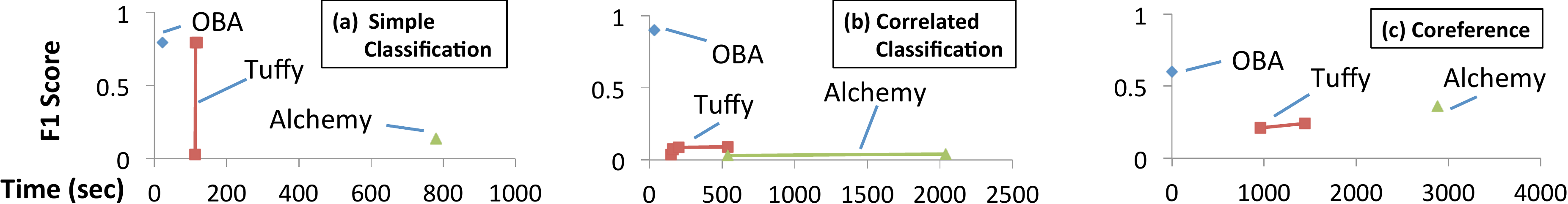} \caption{Comparison
  of \felix, \tuffy, and \alc on specialized tasks.}
  \label{fig:spec}
\end{figure*}
}

\subsection{Importance of Specialized Tasks}
\label{sec:exp:tasks}
We validate that the ability to integrate specialized tasks into \mln
inference is key to \felix's higher performance and scalability.  To
do this, we first show that specialized algorithms
have higher efficiency than generic \mln inference on individual
tasks. Second, we validate that specialized tasks are key to \felix's
scalability on \mln inference.

\begin{table}[!t]\centering\small
  \begin{tabular}{p{2cm}lrrr}
      \hline\noalign{\vspace{1pt}}
      \textbf{Task} & \textbf{System} & \textbf{Initial} & \textbf{Final} & \textbf{F1}\\
      \noalign{\vspace{1pt}}\hline\noalign{\vspace{1pt}}
    \multirow{3}{*}{\parbox{2cm}{\textbf{Simple\\Classification}}}    
	& \felix & 22 sec & 22 sec & 0.79 \\
      	& \tuffy & 113 sec & 115 sec & 0.79 \\
       	& \alc & 780 sec & 782 sec & 0.14 \\ 
      \noalign{\vspace{1pt}}\hline\noalign{\vspace{1pt}}
    \multirow{3}{*}{\parbox{2cm}{\textbf{Correlated\\Classification}}}  
    	& \felix & 34 sec & 34 sec & 0.90 \\
    	& \tuffy & 150 sec & 200 sec & 0.09 \\
    	& \alc & 540 sec & 560 sec & 0.04 \\ 
      \noalign{\vspace{1pt}}\hline\noalign{\vspace{1pt}}
    \multirow{3}{*}{\parbox{2cm}{\textbf{Coreference}}}  
    	& \felix & 3 sec & 3 sec & 0.60 \\ 
    	& \tuffy & 960 sec & 1430 sec & 0.24 \\
    	& \alc & 2870 sec & 2890 sec & 0.36 \\
    \hline
  \end{tabular}
  \caption{Performance and quality comparison on individual tasks.
``Initial'' (resp. ``Final'') is the time when a system produced
the first (resp. converged) result. ``F1'' is the F1 score of the 
final output.
}
  \label{tab:tasks}
\end{table}

\paragraph*{Quality \& Efficiency}
We first demonstrate that \felix's specialized algorithms outperform
generic MLN inference algorithms in both quality and performance when solving
specialized tasks. To evaluate this claim, we run \felix, \tuffy, and \alc on
three MLN programs that each encode one of the following
tasks: simple classification, correlated classification, and coreference. 
We use a subset of the Cora dataset\footnote{{\scriptsize \url{http://alchemy.cs.washington.edu/data/cora}}}
for coref, and a subset of the CoNLL 2000 chunking dataset\footnote{{\scriptsize \url{http://www.cnts.ua.ac.be/conll2000/chunking/}}} for
classification.
The results are shown in Table~\ref{tab:tasks}.
While it always takes less than a minute for \felix to
finish each task, \tuffy and \alc take much
longer.\eat{The reason is that \tuffy and \alc are not aware of 
the structure of a specialized task, and resort to generic
inference procedures that compute and store
the search space very inefficiently.}
Moreover, the quality of \felix is higher than \tuffy and
\alc. As expected,  \felix can achieve exact
optimal solutions for classification, and nearly optimal approximation
for coref, whereas \tuffy and \alc rely on a general-purpose SAT
counting algorithm. Nevertheless, the
above micro benchmark results are typically drowned out in
larger-scale applications, where the quality difference tend to be
smaller compared to the results here.

\paragraph*{Scalability}

To demonstrate that specialized tasks are crucial to the scalability of \felix, we
remove specialized tasks from \felix and re-evaluate whether \felix
is still able to scale to the four datasets (KBP, Enron, DBLife, and NFL).
The results are as follows: after disabling classification, 
\felix crashes on KBP and DBLife; after disabling coref, \felix crashes on Enron.
On NFL, although \felix is still able to run without specialized tasks,
its performance slows down by an order of magnitude (from less than five minutes
to more than one hour).
These results suggest that specialized tasks are critical to \felix's
high scalability and performance.

\subsection{Importance of DMO Optimization}
\label{sec:cost:material}
We validate that \felix's cost-based approach to data movement optimization
is crucial to the efficiency of \felix.  
To do this, we run \felix on subsets of Enron with various sizes in three different
settings: 1) \textbf{Eager},
where all DMOs are evaluated eagerly; 2)
\textbf{Lazy}, where all DMOs are evaluated lazily;
3) \textbf{Opt}, where \felix decides the materialization strategy for
each DMO based on the cost model in
Section~\ref{sec:tradeoff}.

\begin{table}[thb]\centering\small
  \begin{tabular}{lrrrr}
      \hline\noalign{\vspace{1pt}}
       & \textbf{E-5k} & \textbf{E-20k} & \textbf{E-50k} & \textbf{E-100k}\\
      \hline\noalign{\vspace{1pt}}
    \textbf{Eager}   & 83 sec & 15 min & 134 min & 641 min \\
    \textbf{Lazy}    &  42 sec & 5 min & 22 min & 78 min\\
    \textbf{Opt}   & 29 sec & 2 min & 7 min & 25 min\\
     \hline
  \end{tabular}
  \caption{DMO efficiency under different settings.}
  \label{tab:view-mat}
\end{table}

We observed that overall \textbf{Opt} is substantially more efficient
than both \textbf{Lazy} and \textbf{Eager}, and found that the
deciding factor is the efficiency of the DMOs of the coref tasks.
Thus, we specifically measure the total run time of individual coref
tasks, and compare the results in Table~\ref{tab:view-mat}.
Here, \textbf{E-$x$k} for $x \in \set{5,20,50,100}$ refers to a
randomly selected subset of $x$k emails in the Enron corpus. We
observe that the performance of the eager materialization strategy
degrades rapidly as the dataset size increases. The lazy strategy
performs much better. The cost-based approach can further achieve
2-3X speedup. This demonstrates that our cost-based materialization
strategy for data movement operators is crucial to the efficiency
of \felix.

\eat{
\subsection{Coverage Test of Felix's Compiler}

As discovering sub-tasks is crucial to \felix's
scalability, in this section we test the coverage of \felix's
compiler.
While \felix's compiler can discover all CoRef, CRF and LR
tasks in all programs used in our experiment,
we are also interested in how many tasks can \felix discover
for other programs. To test this, we download the programs
that are available on \alc's Web site
\footnote{\url{http://alchemy.cs.washington.edu/mlns/}} and
manually label tasks in these programs. We then
run \felix on these program and compare the logical plans produced by \felix
with the manual labels. We list all programs with manually labeled
specialized tasks in Table~\ref{tab:cover}. By each $x/y$ in Table~
\ref{tab:cover}, we mean among $y$ manually labeled tasks,
Felix's compiler discovers
$x$ of them. \footnote{As some \alc's program does not attach
weights for each rule, we assign non-zero weights to them
so that as many as tasks can be manually labeled.}

\begin{table}[thb]\centering\small
  \begin{tabular}{|l||r|r|r|r|r|}
      \hline
       & \textbf{CoRef} & \textbf{CRF} & \textbf{LR} & \textbf{Tuffy} \\
      \hline
    \textbf{Enron} & 1/1 & 0/0 & 0/0 & 1/1  \\
    \textbf{DBLife} & 2/2 & 0/0 & 1/1 & 0/0   \\
    \textbf{NFL}   & 1/1 & 1/1 & 0/0 & 1/1   \\
    \hline
    \hline
    \textbf{Program1} & 0/4 & 0/0 & 0/0 & 1/0  \\
    \textbf{Program2} & 0/0 & 1/1 & 0/0 & 0/0 \\
    \textbf{Program3} & 0/0 & 0/0 & 37/37 & 0/0 \\
    \textbf{Program4} & 0/1 & 0/0 & 0/0 & 1/1 \\
    \textbf{Program5} & 0/2 & 0/1 & 0/0 & 1/0 \\
    \hline
  \end{tabular}
  \caption{Specialized Tasks Discovered by Felix's Compiler}
  \label{tab:cover}
\end{table}

We can see from Table \ref{tab:cover} that \felix's compiler
works well for the programs used in our experiment. However,
for programs downloaded from \alc's Web site, \felix misses
some tasks. The reason involves in the limitations of
Felix's current compiler. Felix's compiler only
conducts statical analysis on programs and analyzes each
rule independently. For example, although in {\bf Program1},
there is no symmetric rules for the coref task, it can
be derived by deeper analysis of evidence and program structures
(i.e., all groundings of the MLN rules are symmetric given the
evidence file.) We leave this issue to our future work.
}

\section{Conclusion and Future Work}
\label{sec:conclusion}

We present our \felix approach to \mln inference that uses
relation-level Lagrangian relaxation to decompose an \mln program into
multiple tasks and solve them jointly.  Such task decomposition
enables \felix to integrate specialized algorithms for common tasks
(such as classification and coreference) with both high efficiency and
high quality.  To ensure that tasks can communicate and access data
efficiently, \felix uses a cost-based materialization strategy for
data movement.  To free the user from manual task decomposition, the
compiler of \felix performs static analysis to find specialized tasks
automatically.  Using these techniques, we demonstrate that \felix is
able to scale to complex knowledge-base construction applications and
produce high-quality results whereas previous \mln systems \rev{have much
poorer scalability} \edit{was ``crash'' which was perhaps too categorical}.  Our
future work is in two directions: First, we plan to apply our key
techniques (in-database Lagrangian relaxation and cost-based
materialization) to other inference problems. Second, we plan to
extend \felix with new logical tasks and physical implementations
to support broader applications.\\[1pt]

\bibliographystyle{abbrv}
\bibliography{felix}

\pagebreak
\appendix

\newcommand{\vtheta}{\boldsymbol{\theta}}
\newcommand{\vx}{\boldsymbol{x}}
\newcommand{\vy}{\boldsymbol{y}}
\newcommand{\vphi}{\boldsymbol{\phi}}
\newcommand{\vmu}{\boldsymbol{\mu}}
\newcommand{\vnu}{\boldsymbol{\nu}}
\newcommand{\vw}{\boldsymbol{w}}
\newcommand{\vxi}{\boldsymbol{\xi}}
\newcommand{\xset}{\mathcal{X}}

\section{Notations}
Table~\ref{tab:notations} defines some common notation that is used in
the following sections.

\begin{table}[thb]\centering\small
  \begin{tabular}{ccc}
  \hline
  Notation       &    Definition \\
  \hline
  $a,b,\ldots,\alpha,\beta,\ldots$         & Singular (random) variables \\
  $\boldsymbol{a}$, $\boldsymbol{b}$,$\ldots$, $\boldsymbol{\alpha}$, $\boldsymbol{\beta}$,$\ldots$   & Vectorial (random) variables\\
  $\vmu' \cdot \vnu$  & Dot product between vectors\\
  $|\vmu|$       & Length of a vector or size of a set\\
  $\vmu_i$      & $i^{th}$ element of a vector\\
  $\hat{\alpha},\hat{\boldsymbol{\alpha}}$      & A value of a variable\\
  \hline
  \end{tabular}
  \caption{Notations}
  \label{tab:notations}
\end{table}

\section{Theoretical Background of the Operator-based Approach}

In this section, we discuss the theoretical underpinning of \felix's
operator-based approach to \mln inference.  Recall that \felix first
decomposes an input \mln program based on a predefined set of
operators, instantiates those operators with code selection, and then executes
the operators using ideas from dual decomposition.  We first justify
our choice of specialized subtasks (i.e., Classification, Sequential
Labeling, and Coref) in terms of two compilation soundness and
language expressivity properties:

\begin{enumerate}
\item Given an \mln program, the subprograms obtained by \felix's
  compiler indeed encode specialized subtasks such as classification,
  sequential labeling, and coref.

\item \mln as a language is expressive enough to encode all possible
  models in the exponential family of each subtask type; specifically,
  \mln subsumes logistic regression (for classification), conditional
  random fields (for labeling), and correlation clustering (for
  coref).
\end{enumerate}

We then describe how dual decomposition is used to coordinate the
operators in \felix for both MAP and marginal inference while
maintaining the semantics of \mlns.

\subsection{Consistent Semantics}

\subsubsection{MLN Program Solved as Subtasks}

In this section, we show that the decomposition of an \mln program produced by
\felix's compiler indeed corresponds to the subtasks defined
in Section~\ref{sec:tasks}.

\paragraph*{Simple Classification}
Suppose a classification operator (i.e., task) for a query relation $R(k,v)$ consists of
key-constraint hard rules together with rules $r_1,...,r_t$ (with weights $w_1,...,w_t$)
\footnote{For simplicity, we assume that these $t$ rules are ground formulas. It is easy
to show that grounding does not change the property of rules.}.
As per \felix's compilation procedure, the following holds:
1) $R(k,v)$ has a key constraint (say $k$ is the key); and
2) none of the selected rules are recursive with respect to $R$.

Let $k_0$ be a fixed value of $k$. Since $k$ is a possible-world key for $R(k,v)$,
we can partition the set of all possible worlds into sets based on
their $v$ for $R(k_0,v)$ (and whether there is any value $v$ make $R(k,v)$ true).
Let $\mathcal{W}_{v_i} = \{W \mid W \models R(k_0, v_i)\}$ and
$\mathcal{W}_\perp$ where $R(k_0,v)$ is false for all
$v$. Define $Z(\mathcal{W}) = \sum_{w \in \mathcal{W}} \exp \{
-cost(w)\}$.  Then according to the semantics of MLN,

\[ \Pr[R(k,v_0)] = \frac{Z(\mathcal{W}_{v_0})}{Z(\mathcal{W}_{\perp}) + \sum_{v \in \mathbb{D}}
Z(\mathcal{W}_v)} \]

It is immediate from this that each class is disjoint. It is also
clear that, conditioned on the values of the rule bodies, each of the $R$
are independent.

\paragraph*{Correlated Classification}
Suppose a correlated classification operator outputs a relation $R(k,v)$ and consists
of hard-constraint rules together with ground rules $r_1,...r_t$ (with
weights $w_1,...,w_t$). As per \felix's compilation procedure, the
following holds:
\begin{itemize}
\item $R(k,v)$ has a key constraint (say $k$ is the key);

\item The rules $r_i$ satisfy the TrREC property.

\end{itemize}

Consider the following graph: the nodes are all possible values for
the key $k$ and there is an edge $(k,k')$ if $k$ appears in the body
of $k'$. Every node in this graph has outdegree at most $1$. Now
suppose there is a cycle: But this contradicts the definition of a
strict partial order. In turn, this means that this graph is a
forest. Then, we identify this graph with a graphical model structure
where each node is a random variable with domain $\mathbb{D}$. This is
a tree-structured Markov random field. This justifies the rules used by
\felix's compiler for identifying labeling operators. Again,
conditioned on the rule bodies any grounding is a tree-shaped
graphical model.

\paragraph*{Coreference Resolution}

A coreference resolution subtask involving variables $y_1,...y_n$
infers about an equivalent relation $R(y_i,y_j)$. The only
requirement of this subtask is that the result relation $R(.,.)$
be reflexive, symmetric and transitive.
\felix ensures these properties by detecting corresponding
hard rules directly.

\subsubsection{Subtasks Represented as MLN programs}

We start by showing that all probabilistic distributions in
the discrete exponential
family can be represented by an equivalent MLN program.
Therefore, if we model the three subtasks using models in the exponential
family, we can express them as an MLN program. Fortunately,
for each of these subtasks, there are popular exponential family
models: 1) Logistic Regression (LR) for Classification,
2) Conditional Random Filed (CRF) for Labeling and
3) Correlation Clustering for Coref. \footnote{We leave the discussion
of models that are not explicitly in exponential family to
future work.}

\begin{definition}[Exponential Family]
We follow the definition in \cite{Wainwright:2008:GME:1523420}.
Given a vector of binary random variables $\vx \in \mathcal{X}$,
let $\vphi: \mathcal{X} \rightarrow \{0,1\}^{d}$ be
a binary vector-valued function. For a given $\vphi$,
let $\vtheta \in \mathbb{R}^d$ be a vector of real number parameters.
The exponential family distribution over $\vx$ associated with $\vphi$
and $\vtheta$ is of the form:

\[
\Pr_{\vtheta}[\vx]=\exp\{-\vtheta \cdot \vphi(\vx)-A(\vtheta)\},
\]

\noindent
where $A(\vtheta)$ is known as log partition function:
$A(\vtheta)=\log \sum_{\vx \in \mathcal{X}} \exp\{-\vtheta \cdot \vphi(\vx)\}$.
\end{definition}

This definition extends to multinomial random variables
in a straightforward manner.
For simplicity, we only consider binary random variables
in this section.

\begin{example}
Consider a textbook logistic regressor over a random variable $x \in \{0,1\}$:

\[
\Pr[x=1] = \frac{1}{1 + \exp\{\sum_i -\beta_if_i\}},
\]

\noindent
where $f_i \in \{0,1\}$'s are known as features of $x$ and $\beta_i$'s
are {\em regression coefficients} of $f_i$'s. This
distribution is actually in the exponential family: Let $\vphi$ be a
binary vector-valued function whose $i^{th}$ entry equals
to $\vphi_i(x) = (1-x)f_i$. Let $\vtheta$ be a vector of
real numbers whose $i^{th}$ entry $\vtheta_i = \beta_i$.
One can check that
\[
\begin{aligned}
\Pr[x=1] = & \frac{\exp \left\{-\vtheta \cdot \vphi(1)\right\}}
            {\exp\left\{-\vtheta \cdot \vphi(1) \right\}
             + \exp\left\{-\vtheta \cdot \vphi(0) \right\}}\\
         = & \frac{1}{1 + \exp\set{\sum_i -\beta_if_i}}
\end{aligned}
\]
\end{example}

The exponential family has a strong connection with
the maximum entropy principle and graphic models. For
all the three tasks we are considering, i.e.,
classification, labeling and coreference, there are popular
exponential family models for each of them.

\begin{proposition}
\label{thm:ef-mln}
Given an exponential family distribution over $\vx \in \mathcal{X}$
associated with $\vphi$ and $\vtheta$, there exists an MLN program
$\Gamma$ that defines the same probability distribution as
$\Pr_{\vtheta}[\vx]$.  The length of the formula in $\Gamma$ is at
most linear in $|\vx|$, and the number of formulas in $\Gamma$ is at
most exponential in $|\vx|$.
\end{proposition}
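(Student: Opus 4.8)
The plan is to realize $\Pr_{\vtheta}$ directly as the Gibbs distribution that an \mln induces through the cost function of Equation~\ref{eq:cost}, so the heart of the argument is choosing formulas and weights whose cost reproduces the inner product $\vtheta\cdot\vphi$. First I would fix the variable encoding. Writing $n=|\vx|$, I introduce a single unary query predicate $\rel{X}$ over the domain $\set{1,\dots,n}$ and identify the Boolean variable $\vx_i$ with the truth value of the ground atom $\rel{X}(i)$. Under this encoding the possible worlds of the \mln are in bijection with assignments $\hat\vx\in\set{0,1}^n$, which (when $\xset=\set{0,1}^n$) are exactly the points on which $\Pr_{\vtheta}$ is supported; if $\xset$ is a proper subset, I would add hard ($+\infty$-weight) formulas forbidding the missing assignments so that those worlds carry infinite cost and hence probability zero.

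The crux is to make the \mln cost of each world equal $\vtheta\cdot\vphi(\hat\vx)$ up to a constant that the normalizer can absorb. For each assignment $\hat\vx$ I would form the minterm
\[
c_{\hat\vx} \;=\; \bigwedge_{i:\,\hat x_i=1}\rel{X}(i)\;\wedge\;\bigwedge_{i:\,\hat x_i=0}\neg\rel{X}(i),
\]
which is true in precisely the world $\hat\vx$, and introduce the ground formula $F_{\hat\vx}=\neg c_{\hat\vx}$ with weight $w_{\hat\vx}=\vtheta\cdot\vphi(\hat\vx)+K$, where $K$ is a fixed constant large enough that every $w_{\hat\vx}\ge 0$. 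Since $F_{\hat\vx}$ is false in exactly one world and true in every other, in the world corresponding to $\hat\vx$ the unique violated formula is $F_{\hat\vx}$, so $\cost(\hat\vx)=w_{\hat\vx}=\vtheta\cdot\vphi(\hat\vx)+K$. Substituting into $\Pr[I]=Z^{-1}\exp\set{-\cost(I)}$ yields
\[
\Pr[\hat\vx]=Z^{-1}e^{-K}\exp\set{-\vtheta\cdot\vphi(\hat\vx)},
\]
and the constant $e^{-K}$ merges into the normalizer, forcing $Z^{-1}e^{-K}=\exp\set{-A(\vtheta)}$, so the two distributions coincide. The size bounds then follow by inspection: each $F_{\hat\vx}$ is the negation of a conjunction of $n$ literals and so has length linear in $|\vx|$, while there is one formula per assignment, giving at most $2^{|\vx|}$ formulas.

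The step I expect to be the main obstacle is the sign handling. The \mln cost accumulates only the \emph{absolute} values $|w(g)|$ of the weights of violated ground formulas, so it cannot directly contribute a negative quantity $\vtheta\cdot\vphi(\hat\vx)$; a naive assignment of negative weights to the $F_{\hat\vx}$ would make them fire (be violated when true) in all the \emph{other} worlds, destroying the clean one-term-per-world structure. The resolution is exactly the design above: arrange that each world violates a single formula, and shift all weights by the common constant $K$ to keep them nonnegative. Because a uniform additive shift of $\cost$ only rescales $Z$ without altering the induced distribution, this is harmless, and the remaining check is the routine one that $F_{\hat\vx}$ is violated in no world other than $\hat\vx$, which holds by construction of the minterm.
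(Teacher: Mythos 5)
Your proposal is correct, but it takes a genuinely different route from the paper's construction. The paper works \emph{per feature}: for each $\phi_i$ and each $\hat\vx \in \mathcal{X}_i^+ = \set{\vx \mid \phi_i(\vx)=1}$ it emits one ground formula, and it handles the sign of $\theta_i$ structurally --- a disjunction $\bigvee_j R(x_j, 1-\hat x_j)$ (violated only at $\hat\vx$) when $\theta_i \ge 0$, versus a conjunction $\bigwedge_j R(x_j,\hat x_j)$ with positive weight $-\theta_i$ (violated at every world \emph{except} $\hat\vx$) when $\theta_i < 0$, the latter contributing a constant $-\theta_i|\mathcal{X}_i^+|$ that is absorbed by $Z$. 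You instead work \emph{per world}: one clause per assignment carrying the entire energy $\vtheta\cdot\vphi(\hat\vx)$, with signs handled by a single global additive shift $K$ absorbed by $Z$. Both arguments are sound, both yield formulas of length $|\vx|$ and at most exponentially many of them, and both rely on the same normalization-absorbs-constants observation. The trade-off: your version is easier to verify (exactly one violated formula per world, so $\cost$ is read off immediately), but it is irreducibly exponential --- it always enumerates all of $\set{0,1}^{|\vx|}$. The paper's per-feature construction is compositional in $\vphi$, which is what the surrounding text exploits to conclude that logistic regression, CRFs, and correlation clustering (whose features each have small positive support) translate into only polynomially many \mln rules; your construction does not specialize that way. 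One further small point in your favor: you explicitly treat the case $\xset \subsetneq \set{0,1}^{|\vx|}$ via hard rules, which the paper's proof glosses over.
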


\begin{proof}
Our proof is by construction. Each entry of $\vphi$ is a binary
function $\phi_i(\vx)$, which partitions $\mathcal{X}$ into two
subsets: $\mathcal{X}_i^+ = \{\vx | \phi_i(\vx)=1\}$ and
$\mathcal{X}_i^- = \{\vx | \phi_i(\vx)=0\}$. If $\theta_i \ge 0$, for
each $\hat{\vx} \in \mathcal{X}_i^+$, introduce a rule:
\[ \theta_i ~~~ \bigvee_{1\le j \le |\vx|} R(x_j, 1-\hat{x_j}). \]
If $\theta_i < 0$, for each $\hat{\vx} \in \mathcal{X}_i^+$,
insert a rule:
\[ -\theta_i ~~~ \bigwedge_{1\le j \le |\vx|} R(x_j, \hat{x_j}). \]
We add these rules for each $\phi_i(.)$, and also add
the following hard rule for each variable $x_i$:
\[ \infty ~~~ R(x_i, 0)\quad <=>\quad \neg R(x_i, 1).\]
It is not difficult to see $\Pr[\forall x_i, R(x_i,\hat{x_i}) = 1] = \Pr_{\vtheta}[\hat{\vx}]$.
In this construction, each formula has length
$|\vx|$ and there are $\sum_i (|\mathcal{X}_i| + 1)$
formulas in total, which is exponential in $|\vx|$ in the worst case.
\end{proof}

Similar constructions apply to the case where $\vx$ is a vector of multinomial
random variables.

We then show that Logistic Regression, Conditional Random Field
and Correlation Clustering all define probability distributions
in the discrete exponential family, and the number of formulas
in their equivalent MLN program $\Gamma$ is polynomial in the number
of random variables.

\paragraph*{Logistic Regression}

In Logistic Regression, we model the probability distribution
of Bernoulli variable $y$ conditioned on $x_1,...,x_k \in \{0,1\}$ by

\[ \Pr[y=1] = \frac{1}{1 + \exp\{-(\beta_0 + \sum_i \beta_i x_i)\}}  \]

Define $\phi_i(y) = (1-y)x_i$ ($\phi_0(y) = 1-y$) and $\theta_i = \beta_i$,
we can see $\Pr[y=1]$ is in the exponential family defined as
in Definition \ref{thm:ef-mln}. For each $\phi_i(y)$, there is only
one $y$ that can get positive value from $\phi_i$, so there are at
most $k+1$ formulas in the equivalent MLN program.

\paragraph*{Conditional Random Field}

In Conditional Random Field, we model the probability distribution
using a graph $G=(V,E)$ where $V$ represents the set of random variables
$\vy=\{y_v: v \in V\}$. Conditioned on a set of random variables $\vx$,
CRF defines the distribution:

\begin{equation*}
\begin{aligned}
\Pr[\vy|\vx] \propto ~& \exp \{ \sum_{v\in V, k} \lambda_k f_k (v, y_v, \vx) \\
 ~& + \sum_{(v_1,v_2) \in E, l } \mu_l g_l((v_1,v_2), y_{v_1}, y_{v_2}, \vx) \}
\end{aligned}
\end{equation*}

This is already in the form of exponential family. Because
each function $f_k(v,-,\vx)$ or $g_l((v_1,v_2),-,-,\vx)$ only relies
on 1 or 2 random variables, the resulting MLN program has at most
$O(|E|+|V|)$ formulas. In the current prototype of \felix, we only consider linear chain CRFs,
where $|E| = O(|V|)$.

\paragraph*{Correlation Clustering}

Correlation clustering is a form of clustering for which there are
efficient algorithms that have been shown to scale to instances of the
coref problem with millions of mentions. Formally, correlation
clustering treats the coref problem as a graph partitioning
problem. The input is a weighted undirected graph $G=(V,f)$ where $V$
is the set of mentions with weight function $f:V^2 \rightarrow
\mathbb{R}$. The goal is to find a partition $\mathcal{C}=\{C_i\}$ of
$V$ that minimizes the \emph{disagreement cost}:

\[cost_{cc}(\mathcal{C}) =
\sum_{\substack{(v1,v2)\in V^2\\v_1 \ne v_2\\\exists C_i, v_1\in C_i \wedge v_2\in C_i\\f(u,v)<0}} |f(v_1,v_2)| +
\sum_{\substack{(v1,v2)\in V^2\\v_1 \ne v_2\\\exists C_i, v_1\in C_i \wedge v_2 \not \in C_i\\f(u,v)>0}} |f(v_1,v_2)|
 \]

We can define the probability distribution over $\mathcal{C}$ similarly
as MLN:

\[ \Pr[\mathcal{C}] \propto \exp\{-cost_{cc}(\mathcal{C})\}  \]

Specifically, let the binary predicate $coref(v_1,v_2)$ indicate whether
$v_1\ne v_2\in V$ belong to the same cluster. First introduce three
hard rules enforcing the reflexivity, symmetry, and transitivity properties
of $coref$. Next, for each $v_1\ne v_2\in V$, introduce a singleton
rule $coref(v_1,v_2)$ with weight $f(v_1,v_2)$.
It's not hard to show that the above distribution holds for this \mln program.

\subsection{Dual Decomposition for MAP and Marginal Inference}

In this section, we formally describe the dual decomposition framework
used in \felix to coordinate the operators. We
start by formalizing \mln inference as an optimization problem.
Then we show how to apply dual decomposition on these optimization problems.

\subsubsection{Problem Formulation}

Suppose an MLN program $\Gamma$ consists of a set of ground MLN rules
$\mathcal{R}=\{r_1,...,r_m\}$ with weights ($w_1,...,w_m$).
Let ${X}=\{x_1,...,x_n\}$ be the set of boolean random variables
corresponding to the ground atoms occurring in $\Gamma$.
Each MLN rule $r_i$ introduces a function $\phi_i$ over the
set of random variables $\pi_i\subseteq{X}$ mentioned in $r_i$:
$\phi_i(\pi_i)=1$ if $r_i$ is violated and $0$ otherwise.
Let $\vw$ be a vector of weights.
Define vector $\vphi({X}) =
(\phi_1(\pi_1),...,\phi_m(\pi_m))$.
Given a possible world $\vx\in2^{X}$, the cost can be represented:

\[ cost(\vx) =  \vw\cdot\vphi(\vx) \]

Suppose \felix decides to solve $\Gamma$
with $t$ operators $O_1,...,O_t$. Each operator $O_i$ contains a set of
rules $\mathcal{R}_{i}\subseteq\mathcal{R}$. The set $\{\mathcal{R}_{i}\}$
forms a partition of $\mathcal{R}$.
Let the set of random variables
for each operator be ${X}_{i} = \cup_{r_j \in
  \mathcal{R}_{i}} \pi_j$.
Let $n_i = |{X}_{i}|$.
Thus, each operator $O_i$ essentially solves the MLN program defined
by random variables ${X}_{i}$
and rules $\mathcal{R}_{i}$.
Given $\vw$, define $\vw^{i}$ to be the weight vector whose
entries equal $\vw$ if the corresponding rule appears in
$\mathcal{R}_{i}$ and 0 otherwise. Because
$\mathcal{R}_{i}$ forms a partition of $\mathcal{R}$,
we know $\sum_i \vw^{i} = \vw$.
For each operator $O_i$, define an $n$-dim vector $\vmu_{i}({X})$, whose
$j^{th}$ entry equals $x_j$ if $x_j \in {X}_{i}$
and 0 otherwise. Define $n$-dim vector $\vmu({X})$ whose $j^{th}$ entry equals
$x_j$.
Similarly, let $\vphi({X}_{i})$ be the projection of
$\vphi({X})$ onto the rules in operator $O_i$.

\begin{example}
We use the two sets of rules for classification and labeling in
Section \ref{sec:specoper} as a running example.  For a simple
sentence {\em Packers win.} in a fixed document $D$ which contains two phrases
$P_1 =$ ``Packers'' and $P_2 =$ ``win'', we will get the following
set of ground formulae \footnote{For $r_{l1}$, $p \in \{P_1,P_2\}, l_i
  \in \{W,L\}$.}:

\begin{center}
{\small
\begin{tabular}{@{\hspace{-10pt}}ll@{\hspace{1pt}}l}
$\infty$ & $\rel{label}(D, p, l1), \rel{label}(D, p, l2) => l1=l2$ & $(r_{l1})$\\
10 & $\rel{next}(D,P_1,P_2), \rel{token}(P_2,\text{`wins'})  => \rel{label}(D, P_1, W)$ & $(r_{l2})$\\
1  & $\rel{label}(D, P_1, W), \rel{next}(D, P_1, P_2) => !\rel{label}(D, P_2, W)$ & $(r_{l3})$\\

10 & $\rel{label}(D, P_1, W), \rel{referTo}(P_1, GreenBay) => \rel{winner}(GreenBay)$ & $(r_{c1})$\\
10 & $\rel{label}(D, P_1, L), \rel{referTo}(P_1, GreenBay) => \rel{!winner}(GreenBay)$ & $(r_{c2})$\\
\end{tabular}
}
\end{center}

\noindent
After compilation, \felix would assign $r_{l1}$, $r_{l2}$ and $r_{l3}$ to a labeling operator $O_L$,
and $r_{c1}$ and $r_{c2}$ to a classification operator $O_C$. For each of
$\{\rel{winner}(GreenBay)$, $\rel{label}(D, P_1, W)$, $\rel{label}(D, P_1, L)$,
$\rel{label}(D, P_2, W)$, $\rel{label}(D, P_2, L)\}$ we have a binary random variable
associated with it. Each rule introduces a function $\phi$, for example,
the function $\phi_{l2}$ introduced by $r_{l2}$ is:

\[
\phi_{l2}(\rel{label}(D, P_1, W)) =
\begin{cases}
1 & \text{if $\rel{label}(D, P_1, W)=\textrm{False}$}\\
0 & \text{if $\rel{label}(D, P_1, W)=\textrm{True}$}
\end{cases}
\]

The labeling operator $O_L$ essentially solves the
MLN program with variables ${X}_{L} = \{\rel{label}(D, P_1, W)$, $\rel{label}(D, P_1, L)$,
$\rel{label}(D, P_2, W)$, $\rel{label}(D, P_2, L)\}$ and
rules $\mathcal{R}_{L} = \{r_{l1}$, $r_{l2}$, $r_{l3}\}$.
Similarly $O_C$ solves the MLN program with variables
${X}_{C} = \{\rel{winner}(GreenBay)$, $\rel{label}(D, P_1, W)$
$\rel{label}(D, P_1, L)\}$ and rules $\mathcal{R}_{C} = \{r_{c1}$, $r_{c2}\}$.
Note that these two operators share the variables
$\rel{label}(D, P_1, W)$ and $\rel{label}(D, P_1, L)$.
\label{example:packer}
\end{example}

\subsubsection{MAP Inference}

MAP inference in MLNs is to find an assignment ${\vx}$ to $X$ that
minimizes the cost:

\begin{equation}
\label{map-goal}
\min_{{\vx} \in \{0,1\}^n}  \vw\cdot\vphi({\vx}).
\end{equation}

Each operator $O_i$ performs MAP inference on ${X}_{i}$:

\begin{equation}
\label{map-op-goal}
\min_{{\vx}_{i} \in \{0,1\}^{n_i}}  \vw^{i}\cdot\vphi({\vx}_{i}).
\end{equation}

Our goal is to reduce the problem represented by Eqn.~\ref{map-goal} into
subproblems represented by Eqn.~\ref{map-op-goal}.
Eqn.~\ref{map-goal} can be rewritten as

\[ \min_{\vx \in \{0,1\}^n}  \sum_{1\le i\le t} \vw^{i} \cdot\vphi(\vx_{i}). \]

Clearly, the difficulty lies in that, for $i\ne j$,
${X}_{i}$ and ${X}_{j}$ may overlap.
Therefore, we introduce a copy of variables for each $O_i$:
${X}_{i}^C$. Eqn.~\ref{map-goal} now becomes:

\begin{equation}
\begin{aligned}
\min_{{\vx}_{i}^C \in \{0,1\}^{n_i},{\vx}}  & ~~ \sum_{i} \vw^{i}
\cdot\vphi(\vx_{i}^C)\\
s.t.                                               & ~~
\forall i \quad\vx_i^C = \vx.
\end{aligned}
\end{equation}

The Lagrangian of this problem is:

\begin{equation}
\begin{aligned}
& \mathcal{L}(\vx,\vx_{1}^C,...,\vx_{t}^C,\vnu_1,...,.\vnu_t) \\
= &\sum_{i} \vw^{i}\cdot \vphi(\vx_{i}^C) + \vnu_i \cdot
(\vmu_{i}(\vx_{i}^C) - \vmu_{i}(\vx))
\end{aligned}
\end{equation}

Thus, we can relax Eqn.~\ref{map-goal} into

\[
\max_{\vnu} \left\{ \sum_{i} \left[ \min_{\vx_{i} \in \{0,1\}^{n_i}}
    \vw^{i}\cdot \vphi(\vx_{i}^C) + \vnu_i \cdot
\vmu_{i}(\vx_{i}^C) \right]  -  \max_{\vx} \sum_{i}  \vnu_i \cdot \vmu_{i}(\vx)\right\}
\]

The term
$\max_{\vx} \sum_{i}  \vnu_i \cdot \vmu_i(\vx)=\infty$ unless
for each variable $x_j$,

\[ \sum_{O_i: x_j \in {X}_{i}} \vnu_{i,j} = 0.\]

Converting this into constraints, we get

\begin{equation*}
\begin{aligned}
\max_{\vnu}  &~~ \left\{ \sum_{i} \min_{\vx_{i} \in \{0,1\}^{n_i}}
    \vw^{i}\cdot \vphi(\vx_{i}^C) + \vnu_i \cdot
\vmu_{i}(\vx_{i}^C) \right\}\\
s.t.                 &~~ \forall x_j\quad\sum_{O_i: x_j \in {X}_{i}}\vnu_{i,j} = 0
\end{aligned}
\end{equation*}

We can apply sub-gradient methods on $\vnu$. The dual decomposition
procedure in \felix works as follows:

\begin{enumerate}
\item Initialize $\vnu_1^{(0)},...,\vnu_t^{(0)}$.
\item At step $k$ (starting from 0):
\begin{enumerate}
\item For each operator $O_i$, solve the MLN program
consisting of: 1) original rules in this operator, which
are characterized by $\vw^{i}$; 2) additional priors on
each variables in ${X}_{i}$, which are characterized
by $\vnu_i^{(k)}$.
\item Get the MAP inference results $\hat{\vx_{i}^C}$.
\end{enumerate}
\item Update $\vnu_i$: \\$\vnu_{i,j}^{(k+1)} = \vnu_{i,j}^{(k)}
  - \lambda \left( \hat{\vx_{i,j}^C} - \frac{\sum_{l: x_j \in
    X_{l}} \hat{\vx_{l,j}^C}}{|\{l:x_j \in
  X_{l}\}|} \right)$
\end{enumerate}

\begin{example}
Consider the MAP inference on program in Example \ref{example:packer}.
As $O_L$ and $O_C$ share two random variables: $x_w = \rel{label}(D, P_1, W)$ and
$x_l = \rel{label}(D, P_1, L)$, we have a copy of them for each
operator: $x_{w,O_L}^C$, $x_{l,O_L}^C$ for $O_L$;
and $x_{w,O_C}^C$, $x_{l,O_C}^C$ for $O_C$.
Therefore, we have four $\nu$: $\nu_{w,O_L}$, $\nu_{l,O_L}$
for $O_L$; and $\nu_{w,O_C}$, $\nu_{l,O_C}$
for $O_C$. Assume we initialize each $\nu_-^{(0)}$ to $0$
at the first step.

We start by performing MAP inference on $O_L$ and $O_C$ respectively.
In this case, $O_L$ will get the result:

\[
\begin{aligned}
x_{w,O_L}^C = 1\\
x_{l,O_L}^C = 0
\end{aligned}
\]

\noindent
$O_C$ admits multiple possible worlds minimizing the cost;
for example, it may outputs

\[
\begin{aligned}
x_{w,O_C}^C = 0\\
x_{l,O_C}^C = 0
\end{aligned}
\]

\noindent
which has cost 0. Assume the step size $\lambda = 0.5$. We can
update $\nu$ to:

\[
\begin{aligned}
\nu_{w,O_L}^{(1)} = & -0.25\\
\nu_{w,O_C}^{(1)} = & 0.25\\
~~ &\\
\nu_{l,O_L}^{(1)} = & 0\\
\nu_{l,O_C}^{(1)} = & 0\\
\end{aligned}
\]

Therefore, when we use these $\nu_-^{(1)}$ to conduct MAP inference
on $O_L$ and $O_C$, we are equivalently adding

\begin{center}
{\small
\begin{tabular}{@{\hspace{-10pt}}ll@{\hspace{1pt}}l}
-0.25 $\rel{label}(D, P_1, W)$ & $(r_{l}')$\\
\end{tabular}
}
\end{center}

\noindent
into $O_L$ and

\begin{center}
{\small
\centering
\begin{tabular}{@{\hspace{-10pt}}ll@{\hspace{1pt}}l}
0.25 $\rel{label}(D, P_1, W)$ & $(r_{c}')$\\
\end{tabular}
}
\end{center}

\noindent
into $O_C$. Intuitively, one may interpret this procedure
as the information that ``$O_L$ prefers $\rel{label}(D, P_1, W)$ to be true''
being passed to $O_C$ via $r_{c}'$.

\end{example}

\subsubsection{Marginal Inference}

The marginal inference of MLNs aims at computing the marginal
distribution (i.e., the expectation since we are dealing with
boolean random variables):

\begin{equation}
\label{marginal-goal}
\hat{\vmu} = \mathbb{E}_{\vw}[\vmu({X})].
\end{equation}

The sub-problem of each operator is of the form:

\begin{equation}
\label{marginal-op-goal}
\hat{\vmu}_{O} = \mathbb{E}_{\vw_O}[\vmu_O({X}_O)].
\end{equation}

Again, the goal is to use solutions for
Eqn.~\ref{marginal-op-goal} to solve Eqn.~\ref{marginal-goal}.

We first introduce some auxiliary variables. Recall that
$\vmu({X})$ corresponds to the set of random variables,
and $\vphi({X})$ corresponds to all functions represented
by the rules. We create a new vector
$\vxi$ by concatenating $\vmu$ and $\vphi$: $\vxi({X}) =
(\vmu^T({X}), \vphi^T({X}))$. We create a new weight
vector $\vtheta = (0,...,0,\vw^T)$ which is of the same length
as $\vxi$. It is not difficult to see that the marginal inference problem
equivalently becomes:

\begin{equation}
\label{marginal-goal2}
\hat{\vxi} = \mathbb{E}_{\vtheta}[\vxi({X})].
\end{equation}

Similarly, we define $\vtheta_O$ for operator $O$ as
$\vtheta_O = (0,...,0,\vw^T_O)$. We also define
a set of $\vtheta$: $\Theta_O$, which contains all
vectors with entries corresponding to random variables
or cliques not appear in operator $O$ as zero.
The partition function $A(\vtheta)$ is:

\[ A(\vtheta) = \sum_{\xset} \exp\{-\vtheta\cdot \vxi(\xset)\} \]

The conjugate dual to $A$ is:

\[ A^*(\vxi) = \sup_{\vtheta} \{  \vtheta\cdot \vxi - A(\vtheta) \} \]

A classic result of variational inference~\cite{Wainwright:2008:GME:1523420}
 shows that

\begin{equation}
\label{variational-goal}
\hat{\vxi} = \arg \sup_{\vxi \in \mathcal{M}} \{  \vtheta\cdot \vxi - A^*(\vxi) \},
\end{equation}

\noindent
where $\mathcal{M}$ is the marginal polytope. Recall that $\hat{\vxi}$ is our
goal (see Eqn.~\ref{marginal-goal2}). Similar to MAP inference, we want
to decompose Eqn.~\ref{variational-goal} into different operators by
introducing copies of shared variables. We first try to decompose
$A^*(\vxi)$. In $A^*(\vxi)$, we search $\vtheta$ on all possible
values for $\vtheta$. If we only search on a subset of $\vtheta$, we can
get a lower bound:

\[ A^{*O}(\vxi) = \sup_{\vtheta \in \Theta_O} \{  \vtheta\cdot \vxi -
A^*(\vxi) \} \leq A^*(\vxi).\]

\noindent
Therefore,

\[ - A^*(\vxi) \leq \frac{1}{m}\sum_O -A^{*O}(\vxi), \]
where $m$ is the number of operators.
We approximate $\hat{\vxi}$ using this bound:

\[ \hat{\vxi} = \arg \sup_{\vxi \in \mathcal{M}} \{  \vtheta\cdot \vxi
- \frac{1}{m}\sum_O A^{*O}(\vxi)\}, \]

\noindent
which is an upper bound of the original goal. We introduce
copies of $\vxi$:

\begin{equation*}
\begin{aligned}
\hat{\vxi} = \arg \sup_{\vxi^{O_i} \in \mathcal{M}, \vxi} &~~ \{  \sum_O \vtheta_O \cdot \vxi^O
- \frac{1}{m}\sum_O A^{*O}(\vxi^O)\}\\
                    s.t.                                      &~~
                    \vxi^O_e = \vxi_e, \forall e \in \mathcal{X}_O \cup
                    \mathcal{R}_O, \forall O
\end{aligned}
\end{equation*}

The Lagrangian of this problem is:

\[
\begin{aligned}
\mathcal{L}(\vxi,\vxi^{O_1},...,\vxi^{O_t},\vnu_1,...,\vnu_t) &=
\sum_O \left\{ \vtheta_O \cdot \vxi^O -\frac{1}{m} A^{*O}(\vxi^O) \right\} \\
&~~+\sum_{i} \vnu_i\cdot (\vxi^{O_i} - \vxi),
\end{aligned}
\]

\noindent
where $\vnu_i \in \Theta_{i}$, which means only the entries
corresponding to random variables or cliques that appear in
operator $O_i$ are allowed to have non-zero values. We get
the relaxation:

\[
\begin{aligned}
\min_{\vnu_i \in \Theta_{i}}
\sum_{i} \sup_{\vxi^{O_i} \in \mathcal{M}} \left\{ \vtheta_{i} \cdot \vxi^{O_i} 
- \frac{1}{m}A^{*O_i}(\vxi^{O_i}) +
\vnu_i \cdot \vxi^{O_i} \right\} \\- \min_{\vxi} \sum_{i} \vnu_i \cdot
\vxi
\end{aligned}
\]

Considering the $\min_{\vxi} \sum_{i} \vnu_i \cdot \vxi$ part. This
part is equivalent to a set of constraints:

\begin{equation*}
\begin{aligned}
\sum_{O_i: x \in {X}_{i}}
\vnu_{i,x} =& 0, \forall x \in {X}\\
\vnu_{i,x} =& 0, \forall x \not \in {X}
\end{aligned}
\end{equation*}

Therefore, we are solving:

\begin{equation*}
\begin{aligned}
\min_{\vnu_i \in \Theta_{i}} &~~
\sum_{i} \sup_{\vxi^{O_i} \in \mathcal{M}} \left\{ m\vtheta_{i} \cdot \vxi^{O_i} 
- A^{*O_i}(\vxi^{O_i}) +
\vnu_i \cdot \vxi^{O_i} \right\}\\
s.t., &~~
\sum_{O_i: x \in {X}_{i}}
\vnu_{i,x} = 0, \forall x \in {X}\\
~~ &~~ \vnu_{i,x} = 0, \forall x \not \in {X}
\end{aligned}
\end{equation*}

Note the factor $m$ in front of $\vtheta_i$; it implies that we multiply
the weights in each subprogram by $m$ as well.
Then we can apply sub-gradient method on $\vnu_i$:

\begin{enumerate}
\item Initialize $\vnu_1^{(0)},...,\vnu_t^{(0)}$.
\item At step $k$ (start from 0):
\begin{enumerate}
\item For each operator $O_i$, solve the MLN program
consists of: 1) original rules in this operator, which
is characterized by $m\vtheta_{i}$; 2) additional priors on
each variables in $\mathcal{X}_{i}$, which is characterized
by $\vnu_i^{(k)}$.
\item Get the marginal inference results
  $\hat{\vxi_{i}^C}$.
\end{enumerate}
\item Update $\vnu_i^{(k+1)}$: \\$\vnu_{i,j}^{(k+1)} = \vnu_{i,j}^{(k)}
  - \lambda \left( \hat{\vxi_{i,j}^C} - \frac{\sum_{l: x_j \in
    {X}_{l}} \hat{\vxi_{l,j}^C}}{|\{l:x_j \in
  {X}_{l}\}|} \right)$
\end{enumerate}

\begin{example}
Consider the marginal inference on the case in Example \ref{example:packer}.
Similar to the example for MAP inference, we have
copies of random variables:
$\xi_{w,O_L}^C$, $\xi_{l,O_L}^C$ for $O_L$;
and $\xi_{w,O_C}^C$, $\xi_{l,O_C}^C$ for $O_C$.
We also have four $\nu$: $\nu_{w,O_L}$, $\nu_{l,O_L}$
for $O_L$; and $\nu_{w,O_C}$, $\nu_{l,O_C}$
for $O_C$. Assume we initialize each $\nu_-^{(0)}$ to $0$
at the first step.

We start by conducting marginal inference on $O_L$ and $O_C$ respectively.
In this case, $O_L$ will get the result:

\[
\begin{aligned}
\xi_{w,O_L}^C = 0.99\\
\xi_{l,O_L}^C = 0.01
\end{aligned}
\]

\noindent
while $O_C$ will get:

\[
\begin{aligned}
\xi_{w,O_C}^C = 0.5\\
\xi_{l,O_C}^C = 0.5
\end{aligned}
\]

\noindent
Assume the step size $\lambda = 0.5$. We can
update $\nu$ as:

\[
\begin{aligned}
\nu_{w,O_L}^{(1)} = & -0.12\\
\nu_{w,O_C}^{(1)} = & 0.12\\
~~ &\\
\nu_{l,O_L}^{(1)} = & 0.12\\
\nu_{l,O_C}^{(1)} = & -0.12\\
\end{aligned}
\]

Therefore, when we use these $\nu_-^{(1)}$ to conduct marginal inference
on $O_L$ and $O_C$, we are equivalantly adding

\begin{center}
{\small
\begin{tabular}{@{\hspace{-10pt}}ll@{\hspace{1pt}}l}
-0.12 & $\rel{label}(D, P_1, W)$ & $(r_{l1}')$\\
0.12 & $\rel{label}(D, P_1, L)$ & $(r_{l2}')$\\
\end{tabular}
}
\end{center}

\noindent
into $O_L$ and

\begin{center}
{\small
\centering
\begin{tabular}{@{\hspace{-10pt}}ll@{\hspace{1pt}}l}
0.12 &$\rel{label}(D, P_1, W)$ & $(r_{c1}')$\\
-0.12 &$\rel{label}(D, P_1, L)$ & $(r_{c2}')$\\
\end{tabular}
}
\end{center}

\noindent
into $O_C$.
Intuitively, one may interpret this procedure
as the information that ``$O_L$ prefers $\rel{label}(D, P_1, W)$ to be true''
being passed to $O_C$ via $r_{c}'$.
\end{example}

\section{Additional Details of System Implementation}

In this section, we provide additional details of the \felix system.
The first part of this section focuses on the compiler.
We prove some complexity results of property-annotation
used in the compiler and describe how to
apply static analysis techniques originally used in the
Datalog literature for data partitioning.
Then we describe the physical implementation for each
logical operator in the current prototype of \felix.
We also describe the cost model used for the
materialization trade-off.

\subsection{Compiler}

\subsubsection{Complexity Results}

\label{sec:app:comp}

In this section, we first prove the decidability of
the problem of annotating properties
for arbitrary Datalog programs. Then we
prove the $\Pi_2 \mathsf{P}$-completeness
of the problem of annotating $ \{REF, SYM\}$
given a Datalog program without recursion.

\paragraph*{Recursive Programs} If there is a single rule with query relation
$Q$ of the form $Q(x,y) <= Q1(x),Q2(y)$,
then that $\set{\text{REF},\text{SYM}}$ of $Q$ is decidable if and only if $Q1$ or $Q2$
is empty or $Q1 \equiv Q2$. We assume that $Q1$ and $Q2$ are
satisfiable. If there is an instance where $Q1(a)$ is true and $Q2$ is false
for all values. Then there is another world (with all fresh constants)
where $Q2$ is true (and does not return $a$). Thus, to check
$\text{REF}$ and $\text{SYM}$ for $Q$, we need to decide
equivalence of datalog queries. Equivalence of datalog queries is
undecidable~\cite[ch.~12]{abiteboul1995foundations}. Since containment
and boundedness for monadic datalog queries is decidable, a small
technical wrinkle is that while $Q1$ and $Q2$ are of arity one (monadic)
their bodies may contain other recursive (higher arity)
predicates.

\paragraph*{Complexity for Nonrecursive Program}

The above section assumes that we are given an arbitrary Datalog
program $\Gamma$.  In this section, we show that the problem of annotating
REF and SYM given a nonrecursive Datalog program is $\Pi_2
\mathsf{P}$-complete. We allow inequalities in the program.

We first prove the hardness. Similar to the above section,
we need to decide $Q1 \equiv Q2$. The difference is
that $Q1$ and $Q2$ do not have recursions.
Since our language allows us to express conjunctive
queries with inequality constraints, this established $\Pi_{2}
\mathsf{P}$ hardness~\cite{containment-jacm88}.


We now prove the membership in $\Pi_2 \mathsf{P}$. We first translate
the problem of property-annotation to the containment problem
of Datalog programs, which has been studied for decades
\cite{containment-stoc77,containment-jacm88} and
the complexity is in $\Pi_2 \mathsf{P}$ for Datalog programs
without recursions but with inequalities.
We will show that, even though
the rules for checking symmetric property is recursive, it
can be represented by a set of non-recursive rules, therefore
the classic results still hold.

We thus limit ourselves to non-recursive MLN programs. Given an MLN
program $\Gamma$ which is the union of conjunctive queries and a relation $Q$
to which we will annotate properties, all hard rules related to $Q$ can
be represented as:

\begin{equation}
\tag{$P_1$}
\begin{aligned}
Q() :-& G_1()\\
Q() :-& G_2()\\
...\\
Q() :-& G_n()
\end{aligned}
\end{equation}

\noindent
where each $G_i()$ contains a set of subgoals. To annotate whether
a property holds for the relation $Q()$, we test whether some
rules hold for all database instances $I$ generated by the above program $P_1$.
For example, for the symmetric property, we label $Q()$ as symmetric if and only if
$Q(x,y) => Q(y,x)$ holds.
We call this rule the {\em testing rule}.
Suppose the testing rule is $Q() :- T()$, we create a new
program:

\begin{equation}
\tag{$P_2$}
\begin{aligned}
Q() :-& G_1()\\
Q() :-& G_2()\\
...&\\
Q() :-& G_n()\\
Q() :-& T()
\end{aligned}
\end{equation}

Given a database $D$, let $P_1(D)$ be the result of applying program
$P_1$ to $D$ (using Datalog semantics). The testing rule
holds for all $P_1(D)$ if and only if
$\forall D$, $P_2(D) \subseteq P_1(D)$. In other words,
$P_2$ is contained by $P_1$ ($P_2 \subseteq P_1$).
For reflexive property, whose testing rule
is $Q(x,x) :- \mathcal{D}(x)$ (where $\mathcal{D}()$ is the
domain of $x$), both $P_1$ and $P_2$ are non-recursive
and the checking of containment is in $\Pi_2 \mathsf{P}$
\cite{containment-jacm88}.

We then consider the symmetric property, whose testing
rule is recursive.
This is difficult at first glance because the containment
of recursive Datalog program is undecidable.
However, for this special case, we can show it is much easier.
For the sake of simplicity, we consider
a simplified version of $P_1$ and $P_2$:

\begin{equation}
\tag{$P_1'$}
\begin{aligned}
Q(x,y) :- G(x,y,z)
\end{aligned}
\end{equation}

\begin{equation}
\tag{$P_2'$}
\begin{aligned}
Q(x,y) :-& G(x,y,z)\\
Q(x,y) :-& Q(y,x)
\end{aligned}
\end{equation}

We construct the following program:

\begin{equation}
\tag{$P_3$}
\begin{aligned}
Q(x,y) :-& G(x,y,z)\\
Q(x,y) :-& G(y,x,z)
\end{aligned}
\end{equation}

It is easy to show $P_2' = P_3$, therefore, we can equivalently
check whether $P_3 \subseteq P_1'$, which is
in $\Pi_2 \mathsf{P}$ since neither of the programs is recursive.

\subsubsection{Patterns Used by the Compiler}

\felix exploits a set of regular expressions for
property annotation. This set of regular expressions forms
a best-effort compiler, which is sound but not complete.
Table \ref{tab:easy:props} shows these patterns.
In \felix, a pattern consists of two components -- a
template and a boolean expression. A template is
a constraint on the ``shape'' of the formula. For example,
one template for SYM looks like $P_1(a,b) \vee !P_2(c,d)$,
which means we only consider rules whose disjunction
form contains exactly two binary predicates with opposite senses.
Rules that pass the template-matching are considered further
using the boolean expression. If one rule passes the
template-matching step, we can have a set of assignments
for each predicate $P$ and variable
$a,b,...$. The boolean expression is a first order logic
formula on the assignment. For example, the boolean expression
for the above template is $(a=d) \wedge (b=c) \wedge (P_1=P_2)$,
which means the assignment of $P_1$ and $P_2$ must be the same,
and the assignment of variables $a,b,c,d$ must satisfy
$(a=d) \wedge (b=c)$. If there is an assignment that satisfies
the boolean expression, we say this Datalog rule {\em matches} with this
pattern and will be annotated with corresponding labels.

\begin{table*}
\centering
\small
\begin{tabular}{l|l|l}
  \hline
  \multirow{2}{*}{Property} & \multicolumn{2}{c}{Pattern}    \\
                            & \multicolumn{1}{c}{Template} &\multicolumn{1}{c}{Condition} \\
  \hline
  \multirow{2}{*}{REF}                       & $P_1(a,b)$                 & $a=b$\\
    \cline{2-3}
                                              & $P_1(a,b) \vee !R_1(c) \vee !R_2(d)$ & $a=c,b=d,R_1=R_2,P_1\ne R_i$ \\
  \hline
  \multirow{2}{*}{SYM}                       & $P_1(a,b) \vee !P_2(c,d)$  & $a=d, b=c, P_1=P_2$\\
  \cline{2-3}
                                              & $P_1(a,b) \vee !R_1(c) \vee !R_2(d)$ & $a=c,b=d,R_1=R_2,P_1\ne R_i$ \\
  \hline
  \begin{minipage}[t]{1cm}TRN\\~\end{minipage}                       &
  \begin{minipage}[t]{3.5cm}$!P_1(a,b) \vee !P_2(c,d) \vee$$
  P_3(e,f)$\end{minipage} & \begin{minipage}[t]{10cm}$b = c$, $a = e$, $d =
  f$, $P_1=P_2=P_3$\end{minipage} \\
  \hline
  KEY                       & $!P_1(a,b) \vee !P_2(e,f) \vee [c=d]$
  & \begin{minipage}[t]{10cm}$a=e$, $b=c$, $d=f$,$ P_1=P_2$\end{minipage}
  \\
  \hline
  \multirow{2}{*}{NoREC}                     & $R_1() \vee \ldots \vee R_n()
  \vee P_1()$ & $P_1 \ne R_i$\\
  \cline{2-3}
   & $R_1() \vee \ldots \vee R_n()
  \vee !P_1()$ & $P_1 \ne R_i$\\
  \hline
  \multirow{8}{*}{TrRec}    & $P_1(a,b) \vee T(c,d) \vee
  P_2(e,f)$ & \begin{minipage}[t]{10cm}$b = c$, $d = f$, $a = e$, $P_1=P_2$,
    $T(c,d) = [d = c + x], x \ne 0$\end{minipage}
  \\
  \cline{2-3}
                            & $P_1(a,b) \vee T(c,d) \vee
  P_2(e,f)$ & \begin{minipage}[t]{10cm}$b = c$, $d = f$, $a = e$, $P_1=P_2$,
    $\forall (c,d) \in T, c \sqsubseteq d $\end{minipage}
  \\
    \cline{2-3}
  & $!P_1(a,b) \vee T(c,d) \vee
  P_2(e,f)$ & \begin{minipage}[t]{10cm}$b = c$, $d = f$, $a = e$, $P_1=P_2$,
    $T(c,d) = [d = c + x], x \ne 0$\end{minipage}
  \\
  \cline{2-3}
                            & $!P_1(a,b) \vee T(c,d) \vee
  P_2(e,f)$ & \begin{minipage}[t]{10cm}$b = c$, $d = f$, $a = e$, $P_1=P_2$,
    $\forall (c,d) \in T, c \sqsubseteq d $\end{minipage}
  \\
    \cline{2-3}
  & $P_1(a,b) \vee T(c,d) \vee
  !P_2(e,f)$ & \begin{minipage}[t]{10cm}$b = c$, $d = f$, $a = e$, $P_1=P_2$,
    $T(c,d) = [d = c + x], x \ne 0$\end{minipage}
  \\
  \cline{2-3}
                            & $P_1(a,b) \vee T(c,d) \vee
  !P_2(e,f)$ & \begin{minipage}[t]{10cm}$b = c$, $d = f$, $a = e$, $P_1=P_2$,
    $\forall (c,d) \in T, c \sqsubseteq d $\end{minipage}
  \\
    \cline{2-3}
  & $!P_1(a,b) \vee T(c,d) \vee
  !P_2(e,f)$ & \begin{minipage}[t]{10cm}$b = c$, $d = f$, $a = e$, $P_1=P_2$,
    $T(c,d) = [d = c + x], x \ne 0$\end{minipage}
  \\
  \cline{2-3}
                            & $!P_1(a,b) \vee T(c,d) \vee
  !P_2(e,f)$ & \begin{minipage}[t]{10cm}$b = c$, $d = f$, $a = e$, $P_1=P_2$,
    $\forall (c,d) \in T, c \sqsubseteq d $\end{minipage}
  \\
  \hline
\end{tabular}
\caption{Sufficient Conditions for Properties. All Patterns for REF,
  SYM, TRN, and KEY
are hard rules.}
\label{tab:easy:props}
\end{table*}

\subsubsection{Static Analysis for Data Partitioning}

Statistical inference can often be decomposed as independent subtasks
on different portions of the data. Take the
examples of classification in Section \ref{sec:specoper} for instance.
The inference of the query relation $\rel{winner}(team)$ is ``local''
to each $team$ constant (Assume $\rel{label}$ is the evidence relation).
In other words, deciding whether one $team$
is a winner does not rely on the decision of another team, $team'$,
in this classification subtask. Therefore, if there are a total of $n$
teams, we will have an opportunity to solve this subtask using
$n$ concurrent threads.
Another example is labeling, which is often local to small units of
sequences (e.g., sentences).

In \felix, we borrow ideas from the Datalog literature
\cite{seib:pods1991} that uses linear programming to perform static
analysis to decompose the data. \felix adopts the same algorithm of
Seib and Larsen~\cite{seib:pods1991}.

Consider an operator with query relation $\rel{R}(\bar{x})$.
Different instances of $\bar{x}$ may depend on each other
during inference.
For example, consider the rule

\[ \rel{R}(\bar{x}) <= \rel{R}(\bar{y}), \rel{T}(\bar{x}, \bar{y}). \]

Intuitively, all instances of $\bar{x}$ and $\bar{y}$ that appear in
the same rule cannot be solved independently since $\rel{R}(\bar{x})$
and $\rel{R}(\bar{y})$ are inter-dependent.
Such dependency relationships are transitive, and we want to compute them so that data partitioning wouldn't violate them. A straightforward approach is to ground all rules and then perform component detection on the resultant graph. But grounding tends to be very computationally demanding. A cheaper way is static analysis that looks at the rules only. Specifically,
one solution is to find a function $f_R(-)$
which has $f_R(\bar{x})=f_R(\bar{y})$ for all $\bar{x}$ and $\bar{y}$'s
that rely on each other.
As we rely on static analysis to find $f_R$, the above condition should
hold for all possible database instances.

Assuming each constant is encoded as an integer in \felix,
we may consider functions $f_R$ of the
form~\cite{seib:pods1991}:

\[ f_R(x_1,...,x_n) = \sum_i \lambda_i x_i \in \mathbb{N}, \]
where $\lambda_i$ are integer constants.

Following~\cite{seib:pods1991}, \felix uses linear programming to find
$\lambda_i$ such that $f_R(-)$ satisfy the above constraints.
Once we have such a partitioning function over the input, we can process
the data in parallel.
For example, if we want to run $N$ concurrent threads for $\rel{R}$,
we could assign all data satisfying

\[ f_R(x_1,...,x_n) \mod N = j \]

\noindent
to the $j^{th}$ thread.

\subsection{Operators Implementation}
\label{op-impl}
Recall that \felix selects physical implementations
for each logical operator to actually execute them.
In this section, we show a handful of physical
implementations for these operators.
Each of these physical implementations only works
for a subset of the operator configurations.
For cases not covered by these physical implementations,
we can always use \tuffy or Gauss-Seidel-Style implementations~\cite{tuffy-vldb11}.

\paragraph*{Using Logistic Regression for Classification Operators}

Consider a Classification operator with a query relation
$R(\underline{k},v)$, where $k$ is the key. Recall that each
possible value of $k$ corresponds to an independent classification
task. The (ground) rules of this operator are all non-recursive with
respect to $R$, and so can be grouped by value of $k$.
Specifically, for each value pair $\hat{k}$ and $\hat{v}$, define
\[
\begin{aligned}
\mathcal{R}_{\hat{k},\hat{v}} &=& \{r_i | r_i \mbox{ is violated when } R(\hat{k},\hat{v}) \mbox{ is true }\}\\
\mathcal{R}_{\hat{k},\bot} &=& \{r_i | r_i \mbox{ is violated when } \forall v\; R(\hat{k},\hat{v}) \mbox{ is false}\}
\end{aligned}
\]
and
\[
W_{\hat{k}, x} = \sum_{r_i\in\mathcal{R}_{\hat{k},x}}|w_i|
\]
which intuitively summarizes the penalty we have to pay for assigning $x$ for the key $\hat{k}$.

With the above notation, one can check that
\[
\Pr[R(\hat{k},x)\mbox{ is true}] = \frac{\exp\{-W_{\hat{k},x}\}}{\sum_{y}\exp\{-W_{\hat{k},y}\}},
\]
where both $x$ and $y$ range over the domain of $v$ plus $\bot$,
and $R(\hat{k},\bot)$ means $R(\hat{k},v)$ is false for all values of $v$.
This is implemented using SQL aggregation in a straightforward manner.

\eat{
We use the following approach to solve this case:

\begin{enumerate}
\item Initialize $h(k,v) = 0$.
\item For each grounded rule:
\begin{center}
{\small
\begin{tabular}{@{\hspace{-10pt}}ll@{\hspace{1pt}}l}
$w_i$ & $Q() => R(k,v)$\\
\end{tabular}
}
\end{center}
if this rule is violated, $h(k,v) \leftarrow h(k,v) - |w_i|$.
\item $MAP(k) = \arg \max_v h(k,v)$. $\Pr[R(k,\bar{v})] = \frac{\exp
    h(k,\bar{v})}{\sum_v \exp {h(k,v)}}$
\item $MAP(k)$ and $\Pr[R(k,\bar{v})]$ are the results of MAP inference
and marginal inference respectively.
\end{enumerate}

One can easily check this procedure will get exact inference results
for both MAP and marginal inference. In \felix, this process is
implemented by SQL aggregations.
}

\paragraph*{Using Conditional Random Field for Correlated Classification Operators}
The Labeling operator generalizes the Classification operator by
allowing tree-shaped correlations between the individual classification tasks.
For simplicity, assume that such tree-shaped correlation is actually a chain.
Specifically, suppose the possible values of $k$ are $k_1,\ldots,k_m$. Then
in addition to the ground rules as described in the previous paragraph,
we also have a set of recursive rules each containing $R(k_i,-)$ and $R(k_{i+1},-)$
for some $1\le i\le m-1$. Define
\[
\begin{aligned}
\mathcal{R}_{k_i,k_{i+1}}^B &=& \{r_i | r_i \mbox{ contains } R(k_i,-) \mbox{ and } R(k_{i+1},-)\}\\
W^B_{k_i,k_{i+1}}(v_i,v_{i+1}) &=& \sum_{r_i\in\mathcal{R}_{k_i,k_{i+1}}^B}\mbox{cost}_{r_i}(\{R(k_i,v_i),R(k_{i+1},v_{i+1})\}).
\end{aligned}
\]
Then it's easy to show that
\[
\Pr[\{R(k_i,v_i),1\le i\le m\}] \propto \exp\{-\sum_{1\le i\le m}W_{k_i,v_i} -
\sum_{1\le i\le m-1} W^B_{k_i,k_{i+1}}(v_i,v_{i+1})\},
\]
which is exactly a linear-chain CRF.

Again, \felix uses SQL to compute the above intermediate statistics, and then
resort to the Viterbi algorithm~\cite{lafferty2001conditional}
(for MAP inference) or the sum-product
algorithm~\cite{Wainwright:2008:GME:1523420} (for marginal inference).

\paragraph*{Using Correlation Clustering for Coreference Operators}
The Coref operator can be implemented using correlation clustering
~\cite{arasu2009large}. We show that the constant-approximation algorithm
for correlation clustering carries over to MLNs under some technical conditions.
Recall that correlation clustering essentially performs node partitioning
based on the edge weights in an undirected graph. We use the following example
to illustrate the direct connection between MLN rules and correlation clustering.

\begin{example}
Consider the following ground rules which are similar to those in Section \ref{sec:specoper}:

{\small
\begin{tabular}{ll}
$10$      & $\rel{inSameDoc}(P_1, P_2), \rel{sameString}(P_1,P_2) => \rel{coRef}(P_1, P_2)$\\
$5$      & $\rel{inSameDoc}(P_1, P_2), \rel{subString}(P_1,P_2) => \rel{coRef}(P_1, P_2)$\\
$5$      & $\rel{inSameDoc}(P_3, P_4), \rel{subString}(P_3,P_4) => \rel{coRef}(P_3, P_4)$
\end{tabular}
}
\end{example}

Assume $\rel{coRef}$ is the query relation in this Coreference operator.
We can construct the weighted graph as follows. The vertex
set is $V=\{P_1,P_2,P_3,P_4\}$. There are two edges with non-zero weight:
$(P_1, P_2)$ with weight 15 and $(P_3, P_4)$ with weight 5. Other edges
all have weight 0.
The following proposition shows that the correlation clustering algorithm
solves an equivalent optimization problem as the MAP inference in \mlns.
\begin{proposition}
\label{thm:coref-same-object}
Let $\Gamma(\bar{x}_i)$ be a part of $\Gamma$ corresponding to a coref subtask;
let $G_i$ be the correlation clustering problem transformed from $\Gamma(\bar{x}_i)$
using the above procedure. Then an optimal solution
to $G_i$ is also an optimal solution to $\Gamma(\bar{x}_i)$.
\end{proposition}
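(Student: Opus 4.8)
The plan is to exhibit a cost-preserving correspondence between the feasible worlds of the coref subtask $\Gamma(\bar{x}_i)$ and the clusterings of the vertex set $V$, and then to argue that the MLN objective and the correlation-clustering objective differ by an additive constant that is independent of the chosen clustering. Once that is established, the two problems have identical argmin, which is exactly the claim.

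First I would pin down the feasible worlds. In a coref subtask the only query relation is $\rel{coRef}$ and every other relation is evidence, hence fixed; so a possible world is determined entirely by the set of true $\rel{coRef}$ atoms. The hard rules of the subtask are exactly REF, SYM, and TRN, so a world has finite $\cost$ iff $\rel{coRef}$ is an equivalence relation on $V$, and these are in bijection with partitions $\mathcal{C}=\{C_j\}$ of $V$ via $\rel{coRef}(u,v)\iff u,v$ lie in the same block. This is precisely the correspondence the transformation procedure uses to build $G_i$, so $G_i$ and $\Gamma(\bar{x}_i)$ optimize over the same feasible set.

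Next I would compare the two costs pairwise. Each soft rule of the subtask has a single $\rel{coRef}$ literal in its head and an evidence-only body, so whether it is violated depends only on the truth value of its head atom (its body is already fixed by the evidence). Grouping, for each unordered pair $\{u,v\}$ with $u\neq v$, the total weight $P_{uv}$ of satisfied-body rules favoring $\rel{coRef}(u,v)$ and the total weight $N_{uv}$ of those disfavoring it, the transformation assigns the edge weight $\beta_{uv}=P_{uv}-N_{uv}$ (summing over both orientations by SYM; reflexive atoms are forced true and never contribute). For a world $R$ with clustering $\mathcal{C}$, the pair $\{u,v\}$ contributes $N_{uv}$ to $\cost$ when $u,v$ are coreferent and $P_{uv}$ otherwise, while it contributes $0$ or $|\beta_{uv}|$ to $\mathrm{cost}_{cc}$ according to $\mathrm{sign}(\beta_{uv})$. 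A short case analysis on $\mathrm{sign}(\beta_{uv})$ (the cases $\beta_{uv}>0$, $\beta_{uv}<0$, $\beta_{uv}=0$, crossed with coreferent/not) shows the per-pair discrepancy is always $\min(P_{uv},N_{uv})$, giving
\[
\cost(R) \;=\; \mathrm{cost}_{cc}(\mathcal{C}) \;+\; \sum_{\{u,v\}}\min(P_{uv},N_{uv}).
\]

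The main obstacle is exactly this additive term: when a pair carries both favoring and disfavoring rules, the MLN must pay $\min(P_{uv},N_{uv})$ on that pair no matter how it is clustered, whereas correlation clustering normalizes this baseline away. The key observation that rescues the argument is that $\sum_{\{u,v\}}\min(P_{uv},N_{uv})$ does not depend on $R$, so it shifts every objective value by the same constant and leaves the argmin unchanged; hence any $\mathcal{C}$ minimizing $\mathrm{cost}_{cc}$ also minimizes $\cost$, which is what we want. (In the common case where every rule favors coreference, $N_{uv}=0$ and the two objectives are literally equal.) I would close by remarking that it is precisely this constant-shift phenomenon that dictates the technical conditions under which a $c$-approximation for correlation clustering transfers to the subtask, since a nonzero baseline can degrade the ratio unless one assumes, e.g., nonnegative edge weights so that the constant vanishes.
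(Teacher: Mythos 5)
Your proposal is correct, and it is worth noting up front that the paper itself offers no proof of Proposition~\ref{thm:coref-same-object}: the proposition is stated immediately after a worked example (all rules with positive weight implying $\rel{coRef}$) whose implicit argument is simply that, per pair, the summed weight of satisfied-body rules equals the edge weight $\beta_{uv}$, so the MLN cost of a world and the disagreement cost of the corresponding clustering are \emph{literally equal} and the argmins trivially coincide. Your argument subsumes that case ($N_{uv}=0$) and goes further: by splitting each pair's rules into favoring mass $P_{uv}$ and disfavoring mass $N_{uv}$, setting $\beta_{uv}=P_{uv}-N_{uv}$, and checking the six sign/membership cases, you show the two objectives differ by the clustering-independent constant $\sum_{\{u,v\}}\min(P_{uv},N_{uv})$, which is exactly the right way to handle conflicting evidence on a single pair and is consistent with the paper's definition of $\costcoref$ allowing $\beta_{o1,o2}$ of either sign. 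Your identification of the feasible set (hard REF/SYM/TRN rules force finite-cost worlds to be equivalence relations, in bijection with partitions) is also the correct reading of how the compiler certifies a coref task. Two small caveats: first, ``evidence-only body'' should be read as ``body fixed at the time the coref task executes,'' since rules such as $F_5$ mention other query relations ($\rel{affil}$) whose local copies are held constant while the coref task runs under the Lagrangian scheme; second, your closing remark about the constant baseline affecting approximation ratios is apt but slightly orthogonal --- the paper's Theorem~\ref{thm:coref-approx} instead imposes the condition $m\le|w|\le M$ on the (already net) edge weights rather than reasoning about the $\min(P,N)$ offset. Neither caveat affects the validity of your proof.
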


We implement Arasu et al.~\cite{arasu2009large} for correlation clustering.
The theorem below shows that, for a certain family of \mln programs,
the algorithm implemented in \felix actually performs approximate \mln inference.
\begin{theorem}
\label{thm:coref-approx}
Let $\Gamma(\bar{x}_i)$ be a coref subtask with rules generating a complete
graph where each edge has a weight of either $\pm\infty$ or $w$ s.t.
$m\le |w| \le M$ for some $m,M>0$. Then the correlation clustering algorithm
running on $\Gamma(\bar{x}_i)$ is a $\frac{3M}{m}$-approximation algorithm
in terms of the log-likelihood of the output world.
\end{theorem}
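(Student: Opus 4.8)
The plan is to reduce the claim to the known constant-factor guarantee for unweighted correlation clustering and then absorb the variation in edge weights into the multiplicative factor. By Proposition~\ref{thm:coref-same-object}, solving the coref subtask $\Gamma(\bar{x}_i)$ as MAP inference is exactly the problem of minimizing the weighted disagreement cost $\costcoref$ over clusterings of the graph $G_i$. Since $\Pr[I] = Z^{-1}\exp\{-\cost(I)\}$ and $\cost(I)=\costcoref$ on this subtask, the log-likelihood of a world is $-\costcoref(\cdot)-\log Z$; as $\log Z$ is a fixed constant, approximating the log-likelihood from below is the same as approximating the weighted disagreement cost from above. So it suffices to show that the algorithm's output $\mathcal{C}_A$ satisfies $\costcoref(\mathcal{C}_A) \le \frac{3M}{m}\,\costcoref(\mathcal{C}^*)$, where $\mathcal{C}^*$ is the weighted optimum.

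First I would dispose of the hard edges. An edge of weight $\pm\infty$ encodes a hard rule, and by \felix's finalization step any output world respects all hard rules; assuming the instance is satisfiable, the weighted optimum does too. Hence neither $\mathcal{C}_A$ nor $\mathcal{C}^*$ incurs a disagreement on any $\pm\infty$ edge, and the entire cost of either clustering is carried by finite-weight edges, each of magnitude in $[m,M]$.

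Next I would bring in the base guarantee. Let $D(\mathcal{C})$ denote the number of finite-weight edges on which $\mathcal{C}$ disagrees, and let $\mathcal{C}_1^*$ minimize $D(\cdot)$ (the unweighted optimum). The correlation-clustering algorithm of Arasu et al.~\cite{arasu2009large} is a $3$-approximation for min-disagreement on complete graphs, so $D(\mathcal{C}_A) \le 3\,D(\mathcal{C}_1^*)$ while respecting the hard edges as above. I then chain the weight bounds: each disagreeing finite edge costs at most $M$, so $\costcoref(\mathcal{C}_A) \le M\,D(\mathcal{C}_A)$; each disagreeing finite edge costs at least $m$, so $m\,D(\mathcal{C}^*) \le \costcoref(\mathcal{C}^*)$; and $D(\mathcal{C}_1^*)\le D(\mathcal{C}^*)$ by optimality of $\mathcal{C}_1^*$ for the unweighted count. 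Combining,
\[
\costcoref(\mathcal{C}_A) \le M\,D(\mathcal{C}_A) \le 3M\,D(\mathcal{C}_1^*) \le 3M\,D(\mathcal{C}^*) \le \frac{3M}{m}\,\costcoref(\mathcal{C}^*),
\]
which is the desired $\frac{3M}{m}$-approximation, and hence the stated bound on the log-likelihood of the output world.

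The main obstacle I anticipate is the careful treatment of the hard ($\pm\infty$) edges together with the base approximation guarantee: one must argue that the constant-factor algorithm still certifies $D(\mathcal{C}_A)\le 3\,D(\mathcal{C}_1^*)$ once the hard constraints pin down which pairs must or must not be merged, and that comparing against the unweighted optimum $\mathcal{C}_1^*$ rather than the weighted $\mathcal{C}^*$ is legitimate. This comparison is precisely what forces the extra $M/m$ slack and is the only place where the bounded-weight hypothesis is used; everything else is the routine weight-sandwiching displayed above.
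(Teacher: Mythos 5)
Your proof is correct and follows essentially the same route as the paper's: invoke the unweighted $3$-approximation of Arasu et al.\ for the disagreement count, then sandwich the weighted costs between $m$ and $M$ per violated edge to get the $\frac{3M}{m}$ factor (the paper's $\mathbf{OPT}$ is exactly your $D(\mathcal{C}_1^*)$). The only cosmetic difference is that you spell out the hard-edge and log-likelihood bookkeeping that the paper leaves implicit, and you should note, as the paper does, that the base guarantee $D(\mathcal{C}_A)\le 3\,D(\mathcal{C}_1^*)$ holds in expectation, so the final bound is on the expected cost of the output.
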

\begin{proof}
In Arasu et al.~\cite{arasu2009large}, it was shown that for the case $m=M$,
their algorithm achieves an approximation ratio of 3.
If we run the same algorithm, then in expectation the output violates no
more than $3\mathbf{OPT}$ edges, where $\mathbf{OPT}$ is the number of violated
edges in the optimal partition. Now with weighted edges, the optimal cost
is at least $m\mathbf{OPT}$, and the expected cost of the algorithm output
is at most $3M\mathbf{OPT}$. Thus, the same algorithm achieves $\frac{3M}{m}$ approximation.
\end{proof}

\subsection{Cost Model for Physical Optimization}
\label{app:cost-estimation}

The cost model in Section~\ref{sec:tradeoff} requires estimation of
the individual terms in $\textrm{ExecCost}$. There are three
components: (1) the materialization cost of each eager query, (2) the
cost of lazily evaluating the query in terms of the materialized views, and (3) the number
of times that the query will be executed $(t)$. We consider them in turn.

Computing (1), the subquery materialization cost $\textrm{Mat}(Q_i)$,
is straightforward by using PostgreSQL's EXPLAIN feature.
As is common for many RDBMSs, the unit of PostgreSQL's query
evaluation cost is not time, but instead an internal unit (roughly
proportional to the cost of 1 I/O). \felix performs all calculations in
this unit.

Computing (2), the cost of a single incremental evaluation, is more
involved: we do not have $Q_i$ actually materialized (and with indexes
built), so we cannot directly measure $\textrm{Inc}_{Q}(Q')$ using PostgreSQL. For
simplicity, consider a two-way decomposition of $Q$ into $Q_1$ and
$Q_2$. We consider two cases: (a) when $Q_2$ is estimated to be larger
than PostgreSQL assigned buffer, and (b) when $Q_2$ is smaller
(i.e. can fit in available memory).

To perform this estimation in case (a), \felix makes a
simplifying assumption that the $Q_i$ are joined together using
index-nested loop join (we will build the index when we actually
materialize the tables). Exploring clustering
opportunities for $Q_i$ is future work.

Then, we force the RDBMS to estimate the detailed costs of the plan
$\mathcal{P}: \sigma_{\bar{x}'=\bar{a}}(Q_1) \Join
\sigma_{\bar{x}'=\bar{a}}(Q_2)$, where $Q_1$ and $Q_2$ are views,
$\bar{x}'=\bar{a}$ is an assignment to the bound variables
$\bar{x}'\equiv\bar{x}^{\mathsf{b}}$ in $\bar{x}$. From the
detailed cost estimation, we extract the following quantities: (1)
$n_i$: be the number of tuples from subquery $\sigma_{\bar{x}}(Q_i)$;
(2) $n$: the number of tuples generated by $\mathcal{P}$.
We also estimate the cost $\alpha$ (in PostgreSQL's unit) of each I/O by
asking PostgreSQL to estimate the cost of selections on some existing tables.

Denote by $c'=\mathrm{Inc}_{Q}(Q')$ the cost (in PostgreSQL unit) of executing
$\sigma_{\bar{x}'=\bar{a}}(R_1) \Join \sigma_{\bar{x}'=\bar{a}}(R_2)$,
where $R_i$ is the materialized table of $Q_i$ with proper indexes built.
Without loss of generality, assume $n_1 < n_2$ and that $n_1$ is small enough so
that $\Join$ in the above query is executed using nested loop join.
On average, for each of the estimated $n_1$ tuples in $\sigma_{\bar{x}}(R_1)$, there is
one index access to $R_2$, and
$\lceil\frac{n}{n_1}\rceil$ tuples in $\sigma_{\bar{x}}(R_2)$ that can be joined;
assume each of the $\lceil\frac{n}{n_1}\rceil$ tuples from $R_2$ requires
one disk page I/O. Thus, there are $n_1\lceil \frac{n}{n_1} \rceil$ disk accesses to
retrieve the tuples from $R_2$, and
\begin{equation}
c' = \alpha n_1\left[\lceil \frac{n}{n_1} \rceil + \log |Q_2|\right]
\end{equation}
where we use $\log|Q_2|$ as the cost of one index access to $R_2$ (height of a B-tree).
Now both $c'=\mathrm{Inc}_{Q}(Q')$ and $\mathrm{Mat}(Q_i)$ are in the unit of PostgreSQL cost,
we can sum them together, and compare with the estimation on other materialization plans.

In case (b), when $Q_2$ can fit in memory, we found that the
above estimation tends to be too conservative -- many accesses to
$Q_2$ are cache hits whereas the model above still counts the accesses
into disk I/O. To compensate for this difference, we multiply $c'$
(derived above) with a fudge factor $\beta<1$. Intuitively, we choose
$\beta$ as the ratio of accessing a page in main memory versus
accessing a page on disk. We empirically determine $\beta$.

Component (3) is the factor $t$, which is dependent on the
statistical operator. However, we can often derive an estimation
method from the algorithm inside the operator.  For example, for
the algorithm
in~\cite{arasu2009large}, the number of requests to an input
data movement operator can be estimated by the total number
of mentions (using COUNT) divided by the expected average node degree.

\section{Additional Experiments}

\subsection{Additional Experiments of High-level Scalability and Quality}
\label{app:exp-quality}

We describe the detailed methodology in our experiments on
the Enron-R, DBLife, and NFL datasets.

\paragraph*{Enron-R}
The \mln program for Enron-R was based on the rules obtained from
related publications on rule-based information 
extraction~\cite{michelakis2009uncertainty,liu2010automatic}.
These rules (i.e., ``Rule Set 1'' in Figure~\ref{fig:pr-all3}) use 
dictionaries for person name extraction, and regular expressions
for phone number extraction.
To extract person-phone relationships,
a fixed window size is used to identify person-phone co-occurrences.
We vary this window size to produce a precision-recall curve of this
rule-based approach.

The \mln program used by \felix,\tuffy,and \alc replaces the above rules' relation extraction
part (using the same entity extraction results) with a statistical counter-part: 
Instead of fixed window sizes,
this program uses \mln rule weights to encode the strength of co-occurrence
and thereby confidence in person-phone relationships. In addition, we write
soft constraints such as ``{\it a phone number cannot be associated
with too many
persons}.'' We add in a set of coreference rules to perform
person coref. We run \alc, \tuffy and \felix on this program.

\paragraph*{DBLife}
The \mln program for DBLife was based on the rules in \cimple~\cite{dblife2007},
which identifies person and organization mentions
using dictionaries with regular expression variations (e.g., abbreviations, titles).
In case of an ambiguous mention such as ``J. Smith'', \cimple binds it
to an arbitrary name in its dictionary that is compatible (e.g., ``John Smith'').
\cimple then uses a proximity-based formula to
translate person-organization co-occurrences into ranked affiliation
tuples. These form ``Rule Set 2'' as in Figure~\ref{fig:pr-all3}.

The \mln program is constructed as follows. 
We first extract entities from the corpus. We perform part-of-speech
tagging~\cite{treetaggerschmid1999improvements} on the raw text, and
then identify possible person/organization names using simple
heuristics (e.g., common person name dictionaries and keywords such as
``University'').  To handle noise in the entity extraction results,
our MLN program performs both affiliation extraction and coref
resolution using ideas similar to Figure~\ref{fig:mlns}.

\paragraph*{NFL} On the NFL dataset, we extract winner-loser pairs.
There are 1,100 sports news articles in the corpus.
We obtain ground truth of game results from the web.
As the baseline solution, we use 610 of the articles together with ground truth
to train a CRF model that tags each token in the text
as either WINNER, LOSER, or OTHER. We then apply this CRF model on the
remaining 500 articles to generate probabilistic tagging of the tokens.
Those 500 articles report on a different season of NFL games than the training articles,
and we have ground truth on game results (in the form of winner-loser-date triples).
We take the publication dates of the articles and align them to game dates.

The MLN program on NFL consists of two parts.
The first part contains MLN rules encoding the CRF model
for winner/loser team mention extraction.
The second part is adapted from the rules
developed by a research team in the Machine Reading project.
Those rules model simple domain knowledge such as ``a winner cannot be a loser
on the same day'' and ``a team cannot win twice on the same day.''
We also add in coreference of the team mentions.

\begin{table*}[thb]\centering\small
  \begin{tabular}{|l||r|r|r|r|r|}
      \hline
       & \textbf{Coref} & \textbf{Labeling} & \textbf{Classification}
       & \textbf{MLN Inference} \\
      \hline
    \textbf{Enron-R} & 1/1 & 0/0 & 0/0 & 1/1  \\
    \textbf{DBLife} & 2/2 & 0/0 & 1/1 & 0/0   \\
    \textbf{NFL}   & 1/1 & 1/1 & 0/0 & 1/1   \\
    \hline
    \hline
    \textbf{Program1} & 0/0 & 1/1 & 0/0 & 0/0 \\
    \textbf{Program2} & 0/0 & 0/0 & 37/37 & 0/0 \\
    \textbf{Program3} & 0/0 & 0/1 & 0/0 & 1/1 \\
    \hline
  \end{tabular}
  \caption{Specialized Operators Discovered by Felix's Compiler}
  \label{tab:cover}
\end{table*}

\begin{figure}[t]
\centering
  \includegraphics[width=0.30\textwidth]{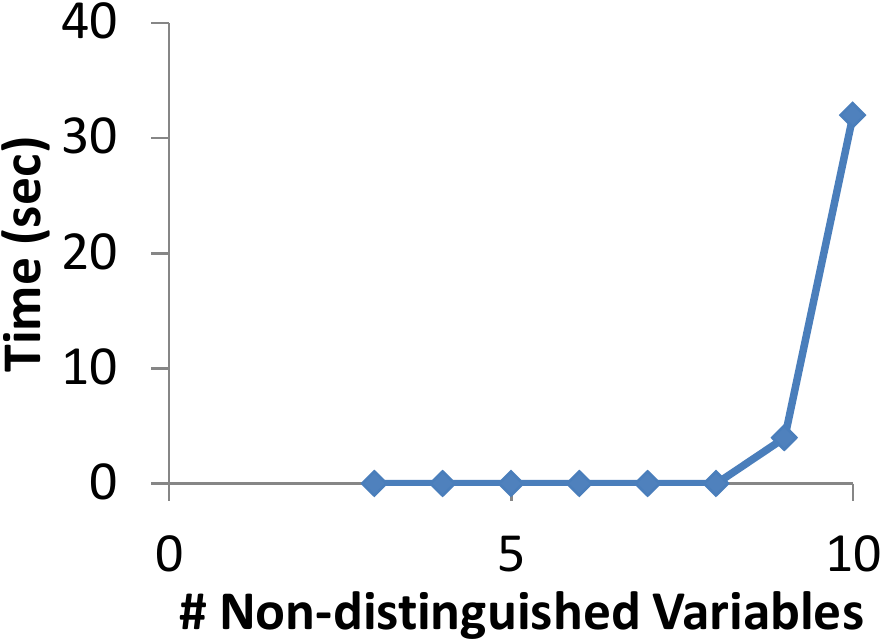}
  \caption{Performance of $\Pi_2 \mathsf{P}$-complete Algorithms
for Non-recursive Programs}
  \label{fig:pi2p}
\end{figure}

\begin{figure}[t]
\centering
  \includegraphics[width=0.30\textwidth]{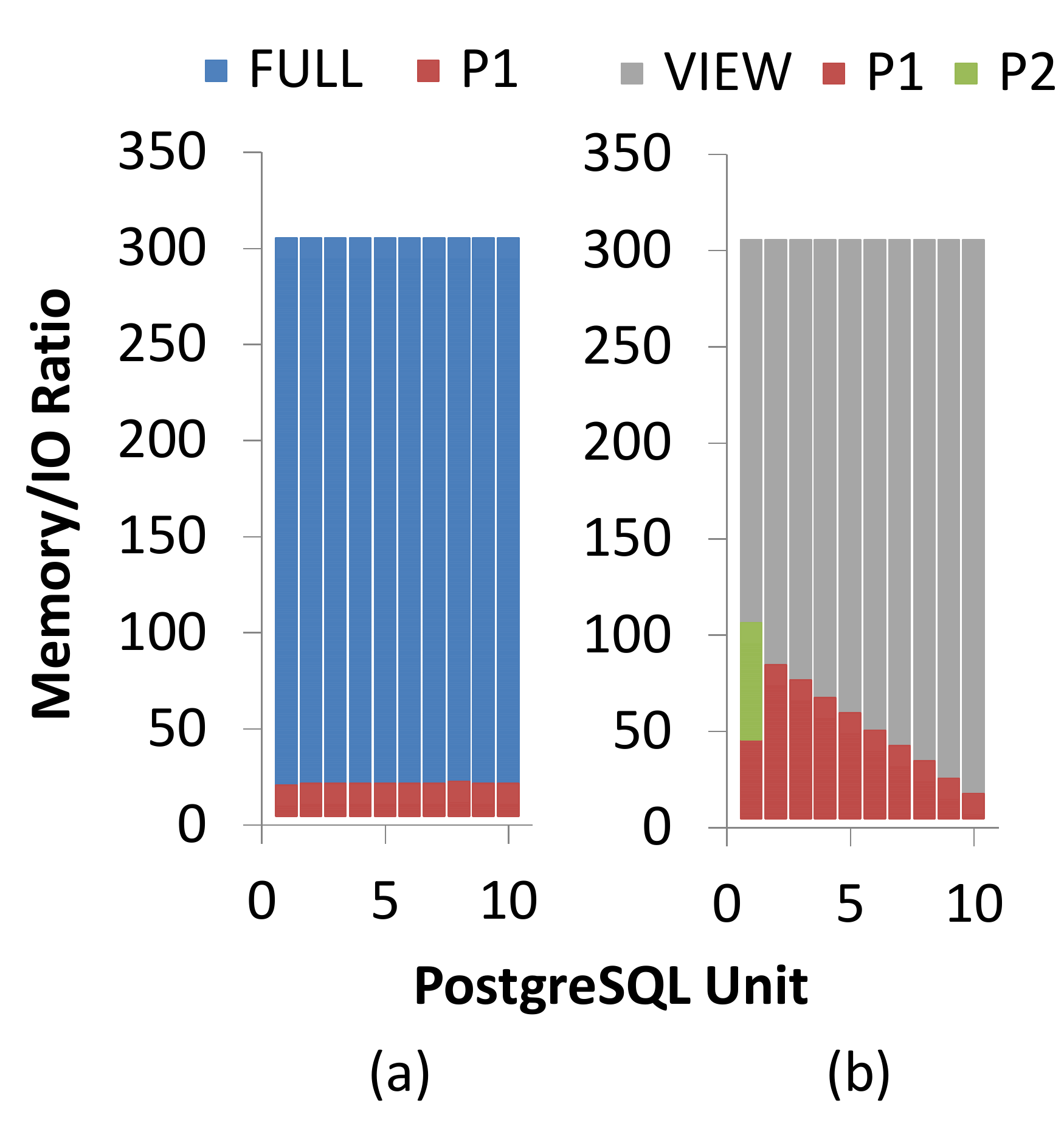}
  \caption{Plan diagram of \felix's Cost Optimizer}
  \label{fig:plan-diagram}
\end{figure}

\begin{figure}[t]
\centering
  \includegraphics[width=0.30\textwidth]{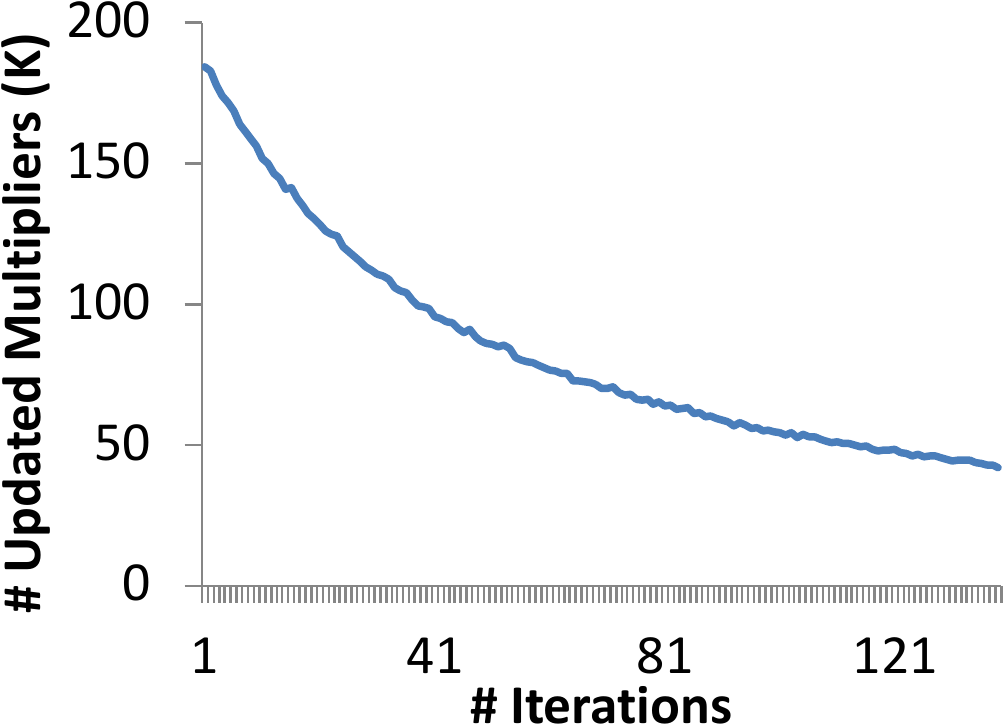}
  \caption{Convergence of Dual Decomposition}
  \label{fig:convergence}
\end{figure}

\subsection{Coverage of the Compiler}

Since discovering subtasks as operators is crucial to \felix's
scalability, in this section we test \felix's compiler.
We first evaluate the heuristics we are using
for discovering statistical operators given an MLN program. We
then evaluate the performance of the $\Pi_2 \mathsf{P}$-complete
algorithm to discovering REF and SYM in non-recursive programs.

\paragraph*{Using Heuristics for Arbitrary MLN Programs}

While \felix's compiler can discover all Coref, Labeling, and Classification
operators in all programs used in our experiments,
we are also interested in how many operators \felix can discover
from other programs. To test this, we download the programs
that are available on \alc's Web site
\footnote{\url{http://alchemy.cs.washington.edu/mlns/}} and
manually label operators in these programs.
We manually label a set of rules as an operator if this set of rules
follows our definition of statistical operators.

We then
run \felix's compiler on these programs and compare the logical plans produced by \felix
with our manual labels. We list all programs with manually labeled
operators in Table~\ref{tab:cover}. The $x/y$ in each cell of Table~
\ref{tab:cover} means that, among $y$ manually labeled operators,
Felix's compiler discovers $x$ of them.

We can see from Table~\ref{tab:cover} that \felix's compiler
works well for the programs used in our experiment. Also,
\felix works well on discovering classification and labeling
operators in \alc's programs. This implies the set of heuristic rules we are using,
although not complete, indeed encodes some popular
patterns users may use in real world applications.
Although some of \alc's programs encode
coreference resolution tasks, none of them were
labeled as coreference operator. This is because
none of these programs explicitly declares the symmetric constraints
as hard rules. Therefore, the set of possible worlds decided
by the MLN program is different from those decided by
the typical ``partitioning''-based semantics of coreference operators.
How to detect and efficiently implement these ``soft-coref''
is an interesting topic for future work.

\paragraph*{Performance of $\Pi_2 \mathsf{P}$-complete Algorithm
for Non-recursive Programs}

In Section \ref{sec:compilation} and Section \ref{sec:app:comp}
we show that there are $\Pi_2 \mathsf{P}$-complete algorithms
for annotating REF and SYM properties. \felix implements them.
As the intractability is actually inherent in the number of
non-distinguished variables, which is usually small, we are interested
in understanding the performance of these algorithms.

We start from one of the longest rules found in \alc's Web site
which can be annotated as SYM. This rule has 3 non-distinguished
variables. We then add more non-distinguished variables
and plot the time used for each setting (Figure~\ref{fig:pi2p}).
We can see that \felix uses less than 1 second to annotate the original
rule, but exponentially more time when the number
of non-distinguished variables grows to 10. This is not surprising
due to the exponential complexity of this algorithm. Another
interesting conclusion we can draw from Figure~\ref{fig:pi2p} is
that, as long as the number of non-distinguished variables
is less than 10 (which is usually the case in our programs),
\felix performs reasonably efficiently.

\subsection{Stability of Cost Estimator}

In our previous experiments we show that the plan generated by \felix's
cost optimizer contributes to the scalability of \felix. As the
optimizer needs to estimate several parameters before
performing any predictions, we are interested in the sensitivity of
our current optimizer to the estimation errors of these parameters.

The only two parameters used in \felix's optimizer are 1) the cost
(in PostgreSQL's unit) of fetching one page from the disk and 2)
the ratio of the speed between fetching one page from the memory
and fetching one page from the disk. We test all combined settings
of these two parameters ($\pm 100\%$ of the estimated value)
and draw the plan diagram of two queries in
Figure \ref{fig:plan-diagram}. We represent different execution plans
with different colors. For each point $(x,y)$ in the plan diagram, the
color of that point represents which execution plan the compiler
chooses if the PostgreSQL's unit equals $x$ and memory/IO
ratio equals $y$.

For those queries not shown in Figure \ref{fig:plan-diagram},
\felix produces the same plan for each tested parameter combination.
For queries shown in Figure \ref{fig:plan-diagram}, we can see
\felix is robust for parameter mis-estimation. Actually,
all the plans shown in Figure \ref{fig:plan-diagram} are close to optimal,
which implies that in our experiments \felix's cost optimizer avoids
the selection of ``extremely bad'' plans even under serious
mis-estimation of parameters.

\subsection{Convergence of Dual Decomposition}

\felix implements an iterative approach for dual
decomposition. One immediate question is
{\it how many iterations do we need before the algorithm converges?}.

To gain some intuitions, we run \felix on the DBLife
\footnote{Similar phenomena occur in the NFL dataset as well.}
data set for a relative long time and record the number of updated
Lagrangian multipliers of each iteration. We use constant step size $\lambda=0.9$.
As shown in Figure \ref{fig:convergence}, even after more than 130 iterations,
the Lagrangian multipliers are still under heavy updates.
However, on the ENRON-R dataset, we observed that the whole process
converges after the first several iterations!
This implies that the convergence of our operator-based
framework depends on the underlying MLN program
and the size of the input data.
It is interesting to see how different techniques on dual decomposition
and gradient methods can alleviate this convergence issue,
which we leave as future work.

Fortunately, we empirically find that in all of our experiments,
taking the result from the first several iterations is often a reasonable
trade-off between time and quality -- all P/R curves in the previous experiments are generated
by taking the last iteration within 3000 seconds and we already
get significant improvements compared to baseline solutions. In \felix, to allow
users to directly trade-off between quality and performance,
we provide two modes: 1) Only run the first iteration
and flush the result immediately; and 2) Run the number of iterations
specified by the user. It is an interesting direction to
explore the possibility of automatically
selecting parameters for dual decomposition.

\eat{
\bibliographystyleapp{abbrv}
\bibliographyapp{felixapp}
}

\end{document}